\def\eqref#1{equation~\ref{#1}}
\def\1{\bm{1}}
\DeclareMathAlphabet{\mathsfit}{\encodingdefault}{\sfdefault}{m}{sl}
\SetMathAlphabet{\mathsfit}{bold}{\encodingdefault}{\sfdefault}{bx}{n}
\def\gO{{\mathcal{O}}}
\newcommand{\R}{\mathbb{R}}
\title{Provably Efficient Iterated CVaR Reinforcement Learning with Function Approximation and Human Feedback}
\author{%
  Yu Chen$^{1}$, \quad Yihan Du$^{1}$, \quad Pihe Hu$^{1}$, \quad Siwei Wang$^{2}$,  \quad Desheng Wu$^{3}$, \quad Longbo Huang$^{1}$\thanks{Corresponding author} \\
  $^{1}$Institute for Interdisciplinary Information Sciences,
  Tsinghua University \\
  $^{2}$Microsoft Research Asia \\
  $^{3}$School of Economics and Management, University
of Chinese Academy of Sciences\\
  \texttt{\{chenyu23, duyh18, hph19\}@mails.tsinghua.edu.cn},\\
  \texttt{siweiwang@microsoft.com},\quad \texttt{dwu@ucas.ac.cn}, \\
  \texttt{longbohuang@tsinghua.edu.cn}
}
\newcommand{\compilehidecomments}{false}
    \newcommand{\TODO}[1]{}
    \newcommand{\draft}[1]{}
    \newcommand{\TODO}[1]{{\color{red}  [\text{TODO:} #1]}}
    \newcommand{\draft}[1]{{\color{blue}  [#1]}}
\newcommand{\C}{\mathbb{C}}
\newcommand{\Neps}{{\mathcal{N}_{\varepsilon}}}
\newcommand{\Prob}{\mathbb{P}}
\newcommand{\boldr}{{\boldsymbol{r}}}
\newtheorem{theorem}{Theorem}
\newtheorem{assumption}{Assumption}
\newtheorem{definition}{Definition}
\newtheorem{lemma}{Lemma}
\begin{document}

\maketitle

\begin{abstract}
    Risk-sensitive reinforcement learning (RL) aims to optimize policies that balance the expected reward and risk. In this paper, we present a novel risk-sensitive RL framework that employs an Iterated Conditional Value-at-Risk (CVaR) objective under both linear and general function approximations, enriched by human feedback. These new formulations provide a principled way to guarantee safety in each decision making step throughout the control process. Moreover, integrating human feedback into risk-sensitive RL framework bridges the gap between algorithmic decision-making and human participation, allowing us to also guarantee safety for human-in-the-loop systems. 
    We propose provably sample-efficient algorithms for this Iterated CVaR RL and provide rigorous theoretical analysis. Furthermore, we establish a matching lower bound to corroborate the optimality of our algorithms in a linear context.
\end{abstract}

\section{Introduction}

Reinforcement learning (RL) \cite{russell2010artificial, sutton2018reinforcement} is a general sequential decision-making framework for creating intelligent agents that interact with and learn from an unknown environment. RL has made ground-breaking
 achievements in many important application areas, e.g., games \cite{mnih2015human, silver2017mastering}, finance \cite{hull2003options} and autonomous driving \cite{sallab2017deep}. 
Despite the practical success, existing RL formulation focuses mostly on maximizing the expected cumulative reward in a Markov Decision Process (MDP) under unknown transition kernels.
This risk-neutral criterion, however, is not suitable for real-world tasks that require tight risk control, such as automatic carrier control \cite{isele2018safe,wen2020safe}, financial investment \cite{wang2021deeptrader, filippi2020conditional} and clinical treatment planning \cite{coronato2020reinforcement}.
To address this limitation, risk-sensitive RL has emerged as a promising research area, which aims to incorporate risk considerations into the RL framework. 

A rich body of works have considered various risk measures into episodic MDPs with unknown transition kernels to tackle risk-sensitive tasks. 
Among different risk measures, the Conditional Value-at-Risk (CVaR) measure has received an increasing attention in RL, e.g., \cite{chow2015risk, du2023provably, rockafellar2000optimization, stanko2019risk, bastani2022regret, wang2023near}. 
CVaR is a popular coherent risk measure \cite{rockafellar2000optimization}, which can be viewed as the expectation of the worst $\alpha$-percent of a random variable for a given risk level $\alpha \in (0, 1]$. It plays an important role to avoid catastrophic outcomes in financial risk controlling \cite{filippi2020conditional}, safety-critical motion planning \cite{hakobyan2019risk}, and robust decision making \cite{chow2015risk} 

However, existing CVaR-based RL works \cite{bastani2022regret, wang2023near, du2023provably, xu2023regret} mainly focus on the tabular MDP, where the state and action spaces are finite, and the complexity bounds scale polynomially in the sizes of state and action spaces. As a result, the application of tabular MDPs can be limited, since in practical problems, the state and action spaces are often large or even infinite. 

To extend the risk-sensitive RL theory and handle large state space, in this paper, we study Iterated CVaR RL with both linear and general function approximations in episodic MDPs (ICVaR-RL with linear and general function approximations).
One key distinction of our work from existing function approximation results \cite{jin2020provably, zhou2021nearly, fei2021risklinear} is the Iterated CVaR objective. 
Iterated CVaR \cite{chu2014markov, du2023provably} is an important variant of CVaR, which 
focuses on optimizing the worst $\alpha$-percent performance \emph{at each step}, and allows the agent to tightly control the risk throughout the decision process. 
In this paper, we tackle the ICVaR-RL with function approximations by two novel sample-efficient algorithms, i.e., ICVaR-L (detailed in Section~\ref{sec:linear}) and ICVaR-G (detailed in Section~\ref{sec:general})

We further investigate ICVaR-RL with human feedback and present a provably efficient algorithm ICVaR-HF with general function approximation. 
Our exploration is motivated by the rapid development of Large Language Models (LLMs) such as ChatGPT. These models, as demonstrated in various studies \cite{glaeseImprovingAlignmentDialogue2022, ouyang2022training, leeRLAIFScalingReinforcement2023, gulcehreReinforcedSelfTrainingReST2023}, operate in diverse conversational landscapes where precisely defining reward signals is challenging. This challenges the conventional RL paradigm, underscoring the crucial role of infusing human feedback \cite{christiano2017deep, stiennonLearningSummarizeHuman2020, wuRecursivelySummarizingBooks2021, ouyang2022training}.
Furthermore, the risk control in intelligent systems such as ChatGPT is significant for preventing the generation of harmful or offensive content \cite{zhuo2023exploring, qi2023safety}. 
This critical imperative underscores the need of approaches that are inherently risk-sensitive, especially in the intersection of large language models and RLHF.
Our work infuses risk sensitivity into RLHF paradigms and formalizes the first risk-sensitive RLHF structure for further  theoretical understanding of risk-sensitive RLHF.

However, the Iterated CVaR objective imposes significant technical challenges in the theoretical analysis of the function approximation and human feedback setting. (i) 
Since the Iterated CVaR measure is a quantile expectation on the distorted distribution, it destroys the linearity of the risk-neutral Bellman equation and makes it hard to estimate the true value function. Therefore, existing risk-neutral RL algorithms for function approximation fail in ICVaR-RL and new techniques are needed to handle this nonlinearity (See Section~\ref{sec:linear}).
(ii) In our function approximation setting, one cannot calculate the CVaR operator and estimate the transition by tranditional sample-mean technique efficiently, since the size of state space can be very large or even infinite. To address these difficulties, we develop novel CVaR approximation and parameter estimation methods in Section~\ref{sec:linear}.
(iii) The standard regret analysis For risk-neutral RL with human feedback is not suitable for our risk-sensitive setting. 
For example, since the preference-based human feedback is a comparison of the \emph{cumulative rewards} of two trajectories, it is natural to apply this feedback to the risk-neutral RL (to maximize the cumulative rewards), while it can be non-trivial to apply this feedback to a risk-sensitive setting since the regret decomposition process for risk-neutral RLHF fails in analyzing the risk-sensitive goal. Moreover, previous online reward MLE algorithms focus on a finite reward function set \cite{wangRLHFMoreDifficult2023}, while we are dealing with an infinite reward function set.

In this paper, we present provable efficient algorithms for ICVaR-RL with function approximation and human feedback, and develop novel technical tools to address the challenges in Section~\ref{sec:fa} and ~\ref{sec:hf}.
Our contribution can be summarized as follows. 
\begin{itemize}
    \item We develop a provably efficient (both computationally and statistically) algorithm ICVaR-L for ICVaR-RL with linear function approximation,
    which achieves the regret upper bound $\widetilde{O}(\sqrt{\alpha^{-(H+1)}(d^2H^4+dH^6)K})$, where $\alpha$ is the risk level, $d$ is the dimension of state-action features, $H$ is the length of each episode, and $K$ is the number of episodes. Moreover, we construct a hard-to-learn instance for ICVaR-RL with linear function approximation, and establish an 
    $\Omega(\sqrt{\alpha^{-(H-1)} d^2 K })$ regret lower bound. This shows that algorithm ICVaR-L achieves a nearly minimax-optimal dependency on $d$ and $K$, and the factor $\sqrt{\alpha^{-H}}$ in our regret bound is unavoidable in general.
    \item For ICVaR-RL with general function approximation, we propose algorithm ICVaR-G. We prove that ICVaR-G achieves a regret bound of $\widetilde{O}(\sqrt{\alpha^{-(H+1)}D_PH^4K})$ based on a new elliptical potential lemma. Here $D_P$ is a dimensional parameter that depends on the eluder dimension and covering number of probability set (see Section~\ref{sec:general} for the details).
    \item
    We further extend ICVaR-RL to encompass Reinforcement Learning with Human Feedback (RLHF), incorporating general function approximation for both transition probabilities and reward modeling. We develop the first provably sample-efficient algorithm ICVaR-HF for risk-sensitive RLHF with novel discretization of infinite reward function set and regret decomposition method that achieves a regret bound of $\widetilde{O}(\sqrt{KH^3\alpha^{-(H+1)}}(\sqrt{H D_P} + \sqrt{m^{-1}D_R}))$, where $D_R$ is a dimensional parameter for reward function set, and $m$ is the positive lower bound for the gradient of link function.
\end{itemize}

\section{Related Works}
\paragraph{Risk-sensitive RL with CVaR Measure}

There are two types of CVaR measures, i.e., the static and dynamic (iterated) CVaR measures. \cite{boda2006time,chow2015risk, ott2010markov, yu2018approximate, stanko2019risk} study the static CVaR measure, which considers the CVaR of cumulative reward in tabular MDPs with known transition kernels. \cite{bastani2022regret, wang2023near} investigate the static CVaR RL with unknown transition kernels. On the other hand, 
\cite{du2023provably} propose Iterated CVaR RL (ICVaR-RL), an episodic risk-sensitive RL formulation with unknown transition kernels and the Iterated CVaR measure, and studies both regret minimization and best policy identification in tabular MDPs. In addition, \cite{xu2023regret} investigate a general iterated risk measure (including Iterated CVaR) in tabular MDPs. In contrast, we study Iterated CVaR RL with linear and general function approximations.

\paragraph{RL with Function Approximation}

For risk-neutral RL,  \cite{yang2020reinforcement, jin2020provably, he2022nearly, ayoub2020model, zhou2021nearly, zhou2021provably, zhao2023variance, agarwal2022vo} study linear function approximation in two types, i.e., linear MDPs and linear mixture MDPs. \cite{he2022nearly, agarwal2022vo} and \cite{zhou2021nearly} present nearly minimax optimal algorithms for Linear MDP and linear mixture MDP, respectively. 
\cite{ayoub2020model, wang2020reinforcement} study risk-neutral RL with general function approximation, which assumes that transition probabilities belong to a given function class. They establish sublinear regret bounds dependent on the eluder dimension of the given function class. 
\cite{fei2021risklinear} consider the first risk-sensitive RL with function approximation under the entropic risk measure, and \cite{lam2023riskaware} study RL with the iterated coherent risk measure with non-linear function approximation under a simulator assumption. Compared to \cite{fei2021risklinear} and \cite{lam2023riskaware}, we investigate the function approximation for RL with Iterated CVaR measure without the simulator assumption.

\paragraph{RL with Human Feedback}
\cite{christiano2017deep} firstly propose the deep reinforcement learning models that are guided by human feedback. Then, there are many empirical works concentrating on the framework when the reward is parameterized as a neural network \cite{ouyang2022training, stiennonLearningSummarizeHuman2020, wuRecursivelySummarizingBooks2021, ibarz2018reward, leeRLAIFScalingReinforcement2023, gulcehreReinforcedSelfTrainingReST2023}. Recently, \cite{zhu2023principled, zhan2023query, zhan2023provable} develop the theory of preference-based RLHF in the offline setting and present the Maximum Likelihood Estimation (MLE) for reward functions. \cite{wangRLHFMoreDifficult2023} 
present the first online reward MLE algorithm in the risk-neutral RLHF for finite reward function set. Compared to their results, we formalize the first risk-sensitive RLHF problem, and present theoretical analysis for ICVaR-RL with general function approximation for infinite transition and reward function sets and comparison-based human feedback.

\section{Preliminaries}

\subsection{Episodic Markov Decision Process (MDP)}
\label{sec:mdpdef}
We consider an episodic MDP parameterized by a tuple $\mathcal{M}=(\mathcal{S}, \mathcal{A}, K, H, \{\mathbb{P}_h\}_{h=1}^H, \{r_h\}_{h=1}^H)$, where $\mathcal{S}$ and $\mathcal{A}$ represent the state space and action space respectively, $K$ is the number of episodes, and $H$ is the length of each episode. For step $h$
, $\Prob_h : \mathcal{S} \times \mathcal{A} \to \Delta(\mathcal{S})$ is the transition kernel.

At the beginning of episode $k$, an initial state $s_{k,1}$ is chosen by the environment. At each step $h \in [H]$, the agent observes the state $s_{k,h}$, and chooses an action $a_{k,h} := \pi^k_h(s_{k,h})$, where $\pi^k_h : \mathcal{S} \to \mathcal{A}$ is a mapping from the state space to action space.  For step $h$, $\Prob_h : \mathcal{S} \times \mathcal{A} \to \Delta(\mathcal{S})$ is the transition kernel which is unknown to the agent, and $r_h : \mathcal{S} \times \mathcal{A} \to [0, 1]$ is the reward function which is deterministic and known to the agent.\footnote{This assumption is commonly considered in previous works \cite{du2023provably, fei2021risklinear, jin2020provably, zhou2021nearly, modi2020sample}. }
Then, the MDP transitions to a next state $s_{k,h+1}$ that is drawn from the transition kernel $\Prob_h(\cdot \mid s_{k,h}, a_{k,h})$. This episode will terminate at step $H+1$, and the agent will advance to the next episode. This process is repeated $K$ episodes. 
The objective of the agent is to determine an optimal policy $\pi^k$ so as to maximize its performance (specified below). 

\subsection{Iterated CVaR RL}

First, we give the definition of the Conditional Value-at-Risk (CVaR) operator which is firstly introduced in \cite{pThinkingCoherently1997}. For a random variable $X$ with probability measure $\Prob$ and given risk level $\alpha \in (0, 1]$:
\begin{equation}\label{eq:cvardef}
    \operatorname{CVaR}^\alpha_\Prob(X) := \sup_{x\in\R}\left\{ x - \frac{1}{\alpha}\mathbb{E}\left[(x - X)^+\right] \right\},
\end{equation}
which can be viewed as the expectation of the $\alpha$-worst-percent of the random variable $X$.
In this paper, we apply Iterated CVaR as the risk-sensitive criterion (similar to \cite{du2023provably}).The MDPs with Iterated CVaR measure The Iterated CVaR MDP aims to maximize the objective $J(\pi)$ which can be expressed as follows:

\begin{equation}
\begin{aligned}
    J(\pi) = r_1(s_1,a_1) &+ \operatorname{CVaR}^{\alpha}_{s_2\sim\mathbb{P}_1(\cdot\mid s_1,a_1)}\Big(r_2(s_2,a_2)+\operatorname{CVaR}^{\alpha}_{s_3\sim\mathbb{P}_2(\cdot\mid s_2,a_2)}\Big(r_3(s_3,a_3) \\
    &+\left(\cdots \operatorname{CVaR}^{\alpha}_{s_H\sim\mathbb{P}_{H-1}(\cdot\mid s_{H-1},a_{H-1})}\left(r_{H}(s_{H},a_{H})\right)\right)\Big)\Big),
\end{aligned}
\end{equation}

where $(s_h, a_h:=\pi_h(s_h))_{h=1}^H$ is the trajectory generated by policy $\pi = \left\{\pi_h : \mathcal{S} \to \mathcal{A}\right\}$ and initial state $s_1$. Maximizing this objective means finding the optimal policy to maximize the cumulative rewards obtained when transitioning to the worst $\alpha$-portion states at each step.

To evaluate the performance of RL algorithms, we adopt the regret minimization task.
Consider the value function $V_h^\pi : \mathcal{S} \to \R$ and Q-value function $Q_h^\pi : \mathcal{S} \times \mathcal{A} \to \R$ under the Iterated CVaR measure  as the cumulative reward obtained when transitioning to the worst $\alpha$-portion states (i.e., with the lowest $\alpha$-portion values) at step $h, h+1, \cdots, H$
\begin{equation}
\label{eq:QVdef}
\left\{
\begin{aligned}
Q_h^\pi(s,a) &= r_h(s,a) + \operatorname{CVaR}^\alpha_{s'\sim\Prob_h(\cdot \mid s, a)}(V_{h+1}^\pi(s')) \\
V_h^\pi(s) &= Q_h^\pi(s, \pi_h(s)) \\
V_{H+1}^\pi(s) &= 0, \forall s \in \mathcal{S}
\end{aligned}
\right.
\end{equation}
For simplicity, we use $\C$ to denote the CVaR operator:
\begin{equation}
    [\C_{\Prob}^{\alpha}(V)](s,a) := \operatorname{CVaR}^{\alpha}_{s' \sim \Prob(\cdot | s, a)}(V(s')) = \sup_{x \in \R}\left\{ x - \frac{1}{\alpha}[\Prob(x-V)^+](s,a) \right\}, 
\end{equation}
where $[\Prob (x-V)^+](s,a) = \sum_{s'\in\mathcal{S}}\Prob(s'\mid s,a)(x-V)^+(s')$. 
Let $\pi^*$ be the optimal policy which gives the optimal value function $V_h^{\pi^*}(s) = \max_\pi V_h^\pi(s)$ for any $s \in \mathcal{S}$.
Prior work \cite{chu2014markov} shows that $\pi^*$ always exists. In the regret minimization task, the agent aims to minimize the cumulative regret for all $K$ episodes, which is defined as
\begin{equation}\label{eq:regretdef}
    \operatorname{Regret}(K) := \sum_{k=1}^K \left( V_1^{\pi^*}(s_{k,1}) - V_1^{\pi^k}(s_{k,1}) \right),
\end{equation}
where $\pi^k$ is the policy taken by the agent in episode $k$, and  $V^{\pi^*}_1(s_{k,1}) - V^{\pi^k}_1(s_{k,1})$ represents the sub-optimality of $\pi^k$.
Notice that when $\alpha = 1$, the CVaR operator becomes the expectation operator, and Iterated CVaR RL degenerates to classic risk-neutral RL.

\subsection{Linear and General Function Approximation}
\label{sec:fadef}
\begin{assumption}[Linear function approximation \cite{ayoub2020model, fei2021risklinear, zhou2021nearly}]
\label{ass:linear}
In the given episodic MDP $\mathcal{M}$, the transition kernel is a linear mixture of a feature basis $\phi: \mathcal{S}\times\mathcal{S}\times\mathcal{A} \to \R^d$, i.e., for any step $h\in[H]$, there exists a vector $\theta_h \in \R^d$ with $\|\theta_h\|_2 \leq \sqrt{d}$ such that 
\begin{equation}
    \Prob_h(s' \mid s, a) = \left\langle\theta_h, \phi(s', s, a)\right\rangle
\end{equation}
holds for any $(s', s, a) \in \mathcal{S} \times \mathcal{S} \times \mathcal{A}$. Moreover, the agent has access to the feature basis $\phi$.
\end{assumption}

In this paper, we assume that the given feature basis $\phi$ satisfying $\left\| \psi_f(s,a) \right\|_2 \leq 1$ where $\psi_f(s,a) :=\sum_{s'\in\mathcal{S}}\phi(s',s,a)f(s')$ for any bounded function $f : \mathcal{S} \to [0, 1]$ and $(s, a) \in \mathcal{S}\times\mathcal{A}$.\footnote{This assumption is also considered in \cite{zhou2021nearly, zhou2021provably, ayoub2020model, fei2021risklinear}.}
A episodic MDP with this type of linear function approximation is also called a linear mixture MDP.

In addition to the above linear mixture model, we also consider a general function approximation scenario, which is proposed by \cite{ayoub2020model} and also considered in \cite{fei2021risklinear}.
\begin{assumption}[General function approximation]
\label{ass:general}
    In the given episodic MDP $\mathcal{M}$,  the transition kernels $\{\Prob_h\}_{h=1}^H \subset \mathcal{P}$ where $\mathcal{P}$ is a function class of transition kernels with the form $\Prob : \mathcal{S}\times\mathcal{A} \to \Delta(\mathcal{S})$. In addition, the agent has access to such function class $\mathcal{P}$. 
\end{assumption}

Denote the bounded function set $\mathcal{B}(\mathcal{S}, [0, H])$ with form $f : \mathcal{S} \to [0, H]$. With the given candidate set $\mathcal{P}$, we define a function class $\mathcal{Z}$
\begin{equation}
\label{eq:Zdef}
    \mathcal{Z} := \{z_\Prob(s,a,V) = \sum_{s'\in\mathcal{S}}\Prob(s'\mid s,a)V(s') : \Prob \in \mathcal{P}\},
\end{equation}
where $z_\Prob$ is a function with domain $\mathcal{S} \times \mathcal{A} \times \mathcal{B}(\mathcal{S}, [0, H])$. For simplicity, we denote $[\Prob V](s,a) := \sum_{s' \in \mathcal{S}} \Prob(s' \mid s, a) V(s')$ for function $V: \mathcal{S} \to \R$.

We measure the efficiency of RL algorithms under Assumption~\ref{ass:general} using the eluder dimension of $\mathcal{Z}$ and covering number of $\mathcal{P}$ (similar to previous works \cite{wang2020reinforcement, ayoub2020model, fei2021risklinear}). The formal definitions of eluder dimension and covering number is detailed in Appendix~\ref{app:gthm3proofdef}

\section{ICVaR-RL with Function Approximation}
\label{sec:fa}

\subsection{ICVaR-RL with Linear Function Approximation}
\label{sec:linear}

In this section, we propose ICVaR-L (Algorithm~\ref{alg:ICVaR-L}), an optimistic value-iteration algorithm designed for  ICVaR-RL with linear function approximation. ICVaR-L is inspired by the algorithm ICVaR-RM proposed in \cite{du2023provably} for tabular MDPs, and incorporates two novel techniques: 
an $\varepsilon$-approximation of the CVaR operator and a new ridge regression with CVaR-adapted features for estimating the transition parameter $\theta_h$. 

Algorithm~\ref{alg:ICVaR-L} presents the pseudo-code of ICVaR-L. ICVaR-L performs optimistic value iteration in Lines~\ref{algline:startvi}-\ref{algline:endvi}, where the key component is to calculate the optimistic Q-value function $\widehat{Q}_{k,h}$ in Line~\ref{algline:calcQ} with an approximated CVaR operator and an exploration bonus term. Notice that directly calculating the CVaR operator $[\C^\alpha_{\Prob_h}(V)](s,a) = \sup_{x\in[0, H]}\left\{x - \frac{1}{\alpha}\left\langle \theta_h, \psi_{(x-V)^+}(s,a) \right\rangle\right\}$ is computationally inefficient. To maintain computational efficiency, we introduce a novel \emph{approximation of the CVaR operator}:
\begin{equation}
\label{eq:cvarNeps}
    [\C_{\theta}^{\alpha, \Neps}(V)](s, a) := \sup_{x\in\Neps} \left\{ x - \frac{1}{\alpha}\left\langle\theta, \psi_{(x-V)^+}(s,a)\right\rangle \right\},
\end{equation}
where $\varepsilon$ is an accuracy parameter, $\Neps$ is a discrete $\varepsilon$-net of $[0, H]$, i.e., $\Neps := \left\{ n\varepsilon: n \in [\lfloor H/\varepsilon \rfloor] \right\}$. $\C^{\alpha, \Neps}_\theta$ takes a supremum over the discrete finite set $\Neps$ instead of a continuous interval $[0, H]$, which can be computed efficiently. Notably, this approximation guarantees that the error between the approximated CVaR operator and the true CVaR operator is at most $2\varepsilon$ (shown in Lemma~\ref{lm:errorNeps} in Appendix~\ref{app:thm1proofcvarapprox}).

We execute $\pi^k$ to play episode $k$ in Line~\ref{algline:executepolicy}, which is greedy with respect to the optimistic Q-value function. 
After that, we calculate the transition parameter estimator $\widehat{\theta}_{k+1,h}$ in Lines~\ref{algline:startecalctheta}-\ref{algline:calctheta} by a new ridge regression:
\begin{equation}
\label{eq:leastsqure}
    \widehat{\theta}_{k+1,h}\! \leftarrow\! \underset{\theta' \in \R^d}{\arg\min} \lambda \|\theta'\|_2^2\! +\! \sum_{i=1}^k\left( (x_{i,h}\! -\! \widehat{V}_{i,h+1})^+(s_{i,h+1})\! -\! \left\langle \theta', \psi_{(x_{i,h}-\widehat{V}_{i,h+1})^+}(s_{i,h},a_{i,h}) \right\rangle \right)^2.
\end{equation}
Note that we consider $\{\psi_{(x_{i,h}-\widehat{V}_{i,h+1})^+}\}_{i=1}^k$ as the regression features, which are different from $\{\psi_{\widehat{V}_{i,h+1}}\}_{i=1}^k$ used in previous risk-neutral linear mixture MDP works~\cite{zhou2021nearly, zhou2021provably}. The specific value of $x_{k,h}$ is determined in Line~\ref{algline:startecalctheta}. Intuitively, the agent will explore the direction of the maximum norm of $\psi_{(x-\widehat{V}_{k,h+1})^+}(s_{k,h},a_{k,h})$ for every $x \in \Neps$, such that every possible direction is eventually well explored.

\begin{algorithm}[]
    \caption{ICVaR-L}\label{alg:ICVaR-L}
    \begin{algorithmic}[1]
        \REQUIRE risk level $\alpha \in (0, 1]$, approximation accuracy $\varepsilon > 0$, regularization parameter $\lambda>0$, \\
        bonus multiplier $\widehat{\beta}$.
        \STATE Initialize $\widehat{\mathbf{\Lambda}}_{1,h}\leftarrow\lambda \mathbf{I}$, $\widehat{\theta}_{1,h}\leftarrow\mathbf{0}$, $\widehat{V}_{k,H+1}(\cdot)\leftarrow 0$ for any $k\in[K]$ and $h \in [H]$.
        \FOR {episode $k=1,...,K$}
        \FOR {step $h=H,...,1$}\label{algline:startvi}
        \STATE \emph{\textcolor{blue}{~// Optimistic value iteration}}
        \STATE $B_{k,h}(\cdot, \cdot) = \frac{\widehat{\beta}}{\alpha}\sup_{x\in\Neps}\|\psi_{(x-\widehat{V}_{k,h+1})^+}(\cdot, \cdot)\|_{\widehat{\Lambda}_{k,h}^{-1}}$ \label{algline:Bkh}
        \STATE $\widehat{Q}_{k,h}(\cdot, \cdot)=r_{h}(\cdot, \cdot) + [\C^{\alpha, \Neps}_{\widehat{\theta}_{k,h}}(\widehat{V}_{k,h+1})](\cdot, \cdot)
        + 2\varepsilon + B_{k,h}(\cdot, \cdot)$\label{algline:calcQ}
        \STATE $\widehat{V}_{k,h}(\cdot)\leftarrow\min\big\{\max_{a\in\mathcal{A}}\widehat{Q}_{k,h}(\cdot,a),H\big\}$ \label{algline:calcV}
        \STATE $\pi^k_h(\cdot) \leftarrow \arg\max_{a \in \mathcal{A}}\widehat{Q}_{k,h}(\cdot, a)$\label{algline:calcpi}
        \ENDFOR \label{algline:endvi}
        \FOR {step $h = 1, \cdots, H$}
        \STATE Observe the current state $s_{k,h}$, and take the action $a_{k,h}=\pi_h^k(s_{k,h})$ \label{algline:executepolicy}
        \STATE Calculate $x_{k,h} \leftarrow \arg\max_{x \in \Neps}\|\psi_{(x-\widehat{V}_{k,h+1})^+}(s_{k,h},a_{k,h})\|_{\widehat{\Lambda}_{k,h}^{-1}}$ \label{algline:startecalctheta}
        \STATE $\widehat{{\Lambda}}_{k+1,h}\leftarrow\widehat{{\Lambda}}_{k,h}+\psi_{(x_{k,h} - \widehat{V}_{k,h+1})^+}(s_{k,h},a_{k,h}) \psi_{(x_{k,h} - \widehat{V}_{k,h+1})^+}(s_{k,h},a_{k,h})^\top$\label{algline:calcLambda}
        \STATE $\widehat{{\theta}}_{k+1,h} \leftarrow\widehat{\Lambda}_{k,h}^{-1}\sum_{i = 1}^{k}\psi_{(x_{i,h} - \widehat{V}_{i,h+1})^+}(s_{i, h},a_{i, h})(x_{i,h} - \widehat{V}_{i,h+1})^+(s_{i, h+1})$\emph{\textcolor{blue}{~// Solution to ridge regression}}\label{algline:calctheta}
        \ENDFOR
        \ENDFOR
    \end{algorithmic}
\end{algorithm}

\paragraph{Computation Efficiency} The efficient approximation technique and novel ridge regression enables us to effectively handle risk-sensitive RL problems with CVaR-type measures while maintaining computational efficiency. Moreover, the space complexity and computation complexity of ICVaR-L are $O(d^2H + |\Neps||\mathcal{A}|HK)$ and $O(d^2|\Neps||\mathcal{A}|H^2K^2)$, respectively.
Please refer to Appendix~\ref{app:computational} for more detailed discussions.

We state the regret guarantee for Algorithm~\ref{alg:ICVaR-L} as follows. 

\begin{theorem}
\label{thm:linearregret}
    Suppose Assumption~\ref{ass:linear} holds, and for given $\delta \in (0, 1]$, set $\lambda = H^2$, $\varepsilon = dH\sqrt{\alpha^{H-3}/K}$, and the bonus multiplier $
        \widehat{\beta} = H\sqrt{d\log\left( \frac{H + KH^3}{\delta} \right)} + \sqrt{\lambda}$.
    Then, with probability at least $1 - 2\delta$, the regret of ICVaR-L (Algorithm~\ref{alg:ICVaR-L}) satisfies
    \begin{equation}
    \begin{aligned}
        \operatorname{Regret}(K) \leq 4dH^2\sqrt{\frac{{K}}{{\alpha^{H+1}}}} + 2\widehat{\beta}\sqrt{\frac{KH}{\alpha^{H+1}}}\sqrt{8dH\log(K) + 4H^3\log\frac{4\log_2 K + 8}{\delta}}. 
    \end{aligned}
    \end{equation}
\end{theorem}
\paragraph{Comparison to Tabular ICVaR-RL}
Theorem~\ref{thm:linearregret} states that ICVaR-L enjoys a regret bound \\ $\widetilde{O}(\sqrt{\alpha^{-(H+1)}(d^2H^4+dH^6)K})$.
Intuitively, the exponential term of $\alpha$ is due to the inherent hardness of the learning in risk MDPs, and the term $d$ expresses the complexity of the environment of MDPs.
In comparison to the  regret bound $\widetilde{O}(\sqrt{\alpha^{-(H+1)}S^2AH^3K})$ for tabular ICVaR-RL in \cite{du2023provably}, 
our result has the same order of dependence on $\alpha$ and $K$ as the tabular setting, but does not depend on $S$, which, in our setting, can be extremely large or even infinite.
The detailed proof of this theorem is given in Appendix~\ref{app:thm1proof}.

To bound the regret of Algorithm~\ref{alg:ICVaR-L}, we develop several novel analytical tools. (i)
We present a novel lemma which shows that the error of approximating $[\C^\alpha_{\Prob_h}(V)](s,a)$ by $[\C^{\alpha, \Neps}_{\Prob_h}V)](s,a)$ is at most $2\varepsilon$ (Lemma~\ref{lm:errorNeps} in Appendix \ref{app:thm1proofcvarapprox}).
By this lemma, we have a computationally efficient method to calculate an $\varepsilon$-approximation of the CVaR operator, which contributes to the computational efficiency of Algorithm~\ref{alg:ICVaR-L}.
(ii)
We establish a novel concentration argument in Lemma~\ref{lm:concentration} in Appendix~\ref{app:thm1proofconcentration}, which exhibits that the transition parameter $\theta_h$ lies in an ellipsoid centered at the estimator $\widehat{\theta}_{k,h}$. Then, we can bound the deviation between the transition parameter $\theta_h$ and the estimator for the CVaR operator $\widehat{\theta}_{k,h}$.
This result is formally present in Lemma~\ref{lm:concetrationCVaR} in Appendix~\ref{app:thm1proofconcentration}.

Moreover, we construct a hard-to-learn MDP instance for ICVaR-RL with linear function approximation, and establish a regret lower bound $\mathbb{E}[\operatorname{Regret}(K)] \geq \Omega(d\sqrt{\alpha^{-H+1}K})$.
The formal theorem (Theorem~\ref{thm:lowerboundformal}) and proof are detailed in Appendix~\ref{app:lowerbound} due to space limit.
We can see that ICVaR-L achieves a nearly minimax optimal with respect to factors $d$ and $K$, and the factor $\sqrt{\alpha^{-H}}$ in our regret upper bound is unavoidable in general.

\subsection{ICVaR-RL with General Function Approximation}
\label{sec:general}
In this section, we present our results for Iterated CVaR RL with general function approximation defined in Section~\ref{sec:fadef}. Specifically, we propose algorithm ICVaR-G (Algorithm~\ref{alg:ICVaR-G}). In each episode, ICVaR-G (i) estimates the confidence set of the transition kernels by constructing a set centered at the empirical mean with radius $\widehat{\gamma}$, and (ii) choose the policy with the highest possible ICVaR value in this confidence set of the transition kernels.
The pseudo-code and detailed description of ICVaR-G presented in Appendix~\ref{app:gthm3alg} due to space limit), and establish the following performance guarantee.

\begin{theorem}
\label{thm:generalregret}
    Suppose Assumption~\ref{ass:general} holds and 
    for some positive constant $\delta \in (0,1]$, we  
    set the estimation radius $\widehat{\gamma}:=4 H^2(2\log(\frac{2H\cdot N_C(\mathcal{P}, \|\cdot\|_{\infty, 1}, 1/K)}{\delta}) + 1 + \sqrt{\log(5K^2/\delta)})$.
    Then, with probability at least $1 -2\delta$, the regret of ICVaR-G (Algorithm~\ref{alg:ICVaR-G}) satisfies
    \begin{equation}
    \label{eq:gregret}
        \operatorname{Regret}(K) \leq \sqrt{\frac{{4KH}}{\alpha^{H+1}}}\sqrt{2H + 2d_E(\mathcal{Z})H^3 + 8\widehat{\gamma}d_E(\mathcal{Z})H\log(K) + H^3\log\frac{4\log_2K + 8}{\delta}},
    \end{equation}
    where $d_E(\mathcal{Z}) := \dim_E(\mathcal{Z}, 1/\sqrt{K})$ is the eluder dimension of $\mathcal{Z}$, and $N_C(\mathcal{P}, \|\cdot\|_{\infty, 1}, 1/K)$ is the $1/K$-covering number of function class $\mathcal{P}$ under the norm $\|\cdot\|_{\infty, 1}$.\footnote{For any  $\Prob, \Prob' \in \mathcal{P}$,  $\|\Prob - \Prob'\|_{\infty, 1} := \sup_{(s,a) \in \mathcal{S}\times\mathcal{A}}\sum_{s'\in\mathcal{S}}\left|\Prob(s' \mid s, a) - \Prob'(s' \mid s, a)\right|$.} By setting the dimensional parameter $D_P = d_E(\mathcal{Z})\log(N_C(\mathcal{P}, \|\cdot\|_\infty, 1/K))$, we have $\operatorname{Regret}(K) \leq \widetilde{O}(\sqrt{\alpha^{-(H+1)}D_PH^4K})$.
\end{theorem}

The dominating term of the regret bound in Theorem \ref{thm:generalregret} is $\widetilde{O}(\sqrt{\alpha^{-(H+1)}D_PH^4K})$, which enjoys the same order of $\alpha$, $H$ and $K$ as the result of ICVaR-L in Theorem~\ref{thm:linearregret}. 
Moreover, in the case where Assumption~\ref{ass:linear} holds (i.e., linear function approximation), we have $d_E(\mathcal{Z}) = \widetilde{O}(d)$ and $\log (N_C(\mathcal{P}, \|\cdot\|_{\infty, 1}, 1/K)) = \widetilde{O}(d)$. This means that we can recover the $\widetilde{O}(\sqrt{\alpha^{-(H-1)}{d^2H^4K}})$ bound in Theorem~\ref{thm:linearregret}.
The main analytical novelty of Theorem~\ref{thm:generalregret} includes a novel elliptical potential lemma for a more fine-grained analysis of regret summation. 
We begin with bounding the deviation term $\sup_{\Prob' \in \widehat{\mathcal{P}}_{k,h}} [\C^\alpha_{\Prob'} \widehat{V}_{k,h+1}](s,a) - [\C^\alpha_{\Prob_h}\widehat{V}_{k,h+1}](s,a) \leq \frac{1}{\alpha}g_{k,h}(s,a)$, where $g_{k,h}(s,a)$ is defined as

\begin{equation}
\label{eq:gdef}
    g_{k,h}(s,a)\! :=\!\! \sup_{\Prob'\in\widehat{\mathcal{P}}_{k,h}}\!\!z_{\Prob'}\left(s,\!a,\!(x_{k,h}(s,a)\!-\!\widehat{V}_{k,h+1})^+\!\right)\! -\!\! \inf_{\Prob'\in\widehat{\mathcal{P}}_{k,h}}\!\!z_{\Prob'}\left(s,\!a,\!(x_{k,h}(s,a)\!-\!\widehat{V}_{k,h+1})^+\!\right)\!
\end{equation}

Intuitively, $g_{k,h}(s,a)$ can be interpreted as the diameter of $\widehat{\mathcal{P}}_{k,h}$. Then, our new elliptical potential lemma (Lemma~\ref{lm:gelliptical} in Appendix~\ref{app:gthm3proofregretsum}) provides a more refined result by demonstrating $\sum_k \sum_h g^2_{k,h}(s_{k,h},a_{k,h}) = O(\log(K))$ in terms of $K$. This result is tighter than existing result $\sum_k \sum_h g_{k,h}(s_{k,h},a_{k,h}) = O(\sqrt{K})$ in previous works \cite{russo2014learning, ayoub2020model, fei2021risklinear}. With the refined elliptical potential lemma, we can then perform a more fine-grained analysis of regret summation similar to the proof of Theorem \ref{thm:linearregret}. 
The detailed proof of Theorem~\ref{thm:generalregret} is deferred to Appendix~\ref{app:gthm3proof}.

\section{ICVaR-RL with Human Feedback}
\label{sec:hf}

We further extend our results to investigate risk-sensitive RL in the human feedback (RLHF) setting.
In this setting, the ground truth reward functions are unknown and the agent cannot observe numerical reward signals, but only receives comparison feedback. Specifically, the agent provides two trajectories to a human expert, and the expert judges which trajectory is better. 
Below we introduce the formal definition of comparison feedback, following previous risk-neutral RLHF works~\cite{wangRLHFMoreDifficult2023, zhan2023provable, zhan2023query}. First, we assume that there is an underlying reward function which guides the feedback of human. 
\begin{assumption} [Underlying reward \cite{christiano2017deep}]
    There is a unknown underlying reward $\boldr^* \in \mathcal{R}$ for some known infinite function set $\mathcal{R}:=\{\boldr : \mathcal{T}\to [0, H]\}$. Every reward $\boldr$ consists of $H$ reward functions, i.e., $\boldr = \{r_h : \mathcal{S} \times \mathcal{A} \to [0, 1]\}_{h=1}^H$, and satisfies that for every trajectory $\tau = (s_1, a_1, \cdots, s_H, a_H) \in \mathcal{T}$, we have $\boldr(\tau) := \sum_{h=1}^H r_h(s_h,a_h)$. For a fixed trajectory $\tau_0= (s_{0,1},a_{0,1}, \cdots, s_{0,H}, a_{0,H})$, we define a regularized reward $\boldr_{\tau_0}(\tau) := \sum_{h=1}^H r_h(s_h,a_h) - r_h(s_{0,h},a_{0,h})$ based on benchmark $\tau_0$.
\end{assumption}
This underlying reward assumption is a common assumption for comparison feedback and widely used in \cite{christiano2017deep, zhu2023principled, zhan2023provable, zhan2023query, wangRLHFMoreDifficult2023}.
Following \cite{wangRLHFMoreDifficult2023}, we assume that the human's preference is drawn from a Bernoulli distribution parameterized by a general link function $\sigma$. 
\begin{assumption}[Comparison oracle \cite{wangRLHFMoreDifficult2023}]
\label{ass:hforacle} 
    A comparison oracle takes in two trajectories $\tau_1, \tau_2$ and returns
    $$ o \sim \operatorname{Ber}(\sigma(\boldr^*(\tau_1) - \boldr^*(\tau_2))), $$
    where $\sigma(\cdot)$ is a known link function, e.g., sigmoid function. Here $o$ is the human preference over $(\tau_1, \tau_2)$. The output $o = 1$ indicates $\tau_1 \succ \tau_2$, and $o = 0$ indicates $\tau_2 \succ \tau_1$. Moreover, we assume that the link function $\sigma$ satisfies the following properties:
    \begin{itemize}
        \item \textbf{Completeness:} $\sigma(0) = \frac{1}{2}$, and for any $x \in [-H, H]$, we have $\sigma(x) + \sigma(-x) = 1$.
        \item \textbf{Regularity:} For any $x \in [-H, H]$, we have $\sigma'(x) \geq m$ for some constant $m>0$.
    \end{itemize}
\end{assumption}
\paragraph{Remark} The Bradley-Terry-Luce (BTL) model \cite{bradley1952rank}, a famous RLHF model, is exactly the case when the link function $\sigma(x)$ is chosen as the sigmoid function $1 / (1 + \exp(-x))$. The completeness assumption is based on the common knowledge that the consistency of the comparison between two trajectories should be upheld regardless of their given order. Thus, since $\Prob[\tau_1 \succ \tau_2] = \sigma(\boldr^*(\tau_1) - \boldr^*(\tau_2)) = 1 - \Prob[\tau_2 \succ \tau_1] = 1 - \sigma(\boldr^*(\tau_2) - \boldr^*(\tau_1))$,  we have $\sigma(\boldr^*(\tau_1) - \boldr^*(\tau_2)) + \sigma(\boldr^*(\tau_2) - \boldr^*(\tau_1)) = 1$.
The regularity assumption is common in the bandit literature \cite{filippi2010parametric,li2017provably} and necessary for the existence of optimal policy \cite{wangRLHFMoreDifficult2023}.

Here we consider the general function approximation setting defined in Section~\ref{sec:fadef}.
For given reward functions $\boldsymbol{r}_{\tau_0}$ and possible transition kernel set $\mathcal{P} := \{\mathcal{P}_{h}\}_{h=1}^H$, we define the optimistic value function $\widetilde{V}^\mathcal{P}_h$ recursively as follows.
\begin{equation}
\label{eq:hfQVdef}
\left\{
\begin{aligned}
\widetilde{Q}_h^{\mathcal{P}}(s,a; \boldsymbol{r}_{\tau_0}) &= r_h(s,a) - r_h(s_{0,h},a_{0,h}) + \sup_{\Prob'\in\mathcal{P}_h}\C^\alpha_{s'\sim\Prob'(\cdot \mid s, a)}(\widetilde V_{h+1}^{\mathcal{P}}(s';\boldsymbol{r}_{\tau_0})) \\
\widetilde V_h^{\mathcal{P}}(s; \boldsymbol{r}_{\tau_0}) &= \max_{a\in\mathcal{A}}\widetilde Q_h^{\mathcal{P}}(s, a; \boldsymbol{r}_{\tau_0}) \\
\widetilde V_{H+1}^{\mathcal{P}}(s; \boldsymbol{r}_{\tau_0}) &= 0, \forall s \in \mathcal{S}
\end{aligned}
\right.
\end{equation}
Inspired by \cite{wang2020reinforcement, ayoub2020model, fei2021risklinear}, we develop our risk-sensitive algorithm ICVaR-HF.
As shown in Algorithm~\ref{alg:rlhf}, in Line~\ref{algline:hftau0}, we choose a benchmark trajectory $\tau_0$ by executing an arbitrary policy. In every episode, we select an estimated reward $\widehat{\boldr}^k$ to maximize the optimistic value function $\widetilde{V}_1^{\widehat{\mathcal{P}}_k}(s_{k,1}; \boldr_{\tau_0})$ in Line~\ref{algline:hfr}. 
We calculate the optimistic value and Q-value functions $\widehat{Q}_{k,h}, \widehat{V}_h$ by value iteration, and determine the policy $\pi^k$ in Lines~\ref{algline:hfsvi}-~\ref{algline:hfevi}. In Line~\ref{algline:hftra}, we execute the policy $\pi^k$ and generate the trajectory $\tau_k$, and in Line~\ref{algline:hfcomp}, we feed trajectories $(\tau_k,\tau_0)$ to the comparison oracle. In Line~\ref{algline:hfmle}, we adopt MLE to update the confidence reward function set $\widehat{R}_{k+1}$, where we use the following log-likelihood function (which is also considered in \cite{zhu2023principled, zhan2023provable, zhan2023query, wangRLHFMoreDifficult2023}):
\begin{equation}
\label{eq:defloglikelihood}
    \mathcal{L}_k(\boldr) :=  \sum_{i\leq k} \log\left( \widetilde\sigma(o_i, \boldr_{\tau_0}(\tau_i)) \right), \widetilde\sigma(o_i, \boldr_{\tau_0}(\tau_i)) := o_i\cdot \sigma(\boldr_{\tau_0}(\tau_i)) + (1 - o_i)\cdot\sigma(-\boldr_{\tau_0}(\tau_i))
\end{equation}
In Lines~\ref{algline:hfsp}-~\ref{algline:hfep}, we apply the transition estimation. $\widehat{\Prob}_{k,h}$ to estimate the transition kernel $\Prob_h$ in Line~\ref{algline:hfProbkh} by a novel distance function $\operatorname{Dist}_{k,h} : \mathcal{P}\times \mathcal{P} \to \R_{\geq 0}$, and select a confidence set $\widehat{\mathcal{P}}_{k,h}$ in Line~\ref{algline:hfPkh}, where $\Prob_h$ belongs to $\widehat{\mathcal{P}}_{k,h}$ with high probability (as detailed in Lemma~\ref{lm:gconcentration} in Appendix~\ref{app:gthm3proofconcentration}). 

\begin{algorithm}[]
\caption{ICVaR-HF}
\label{alg:rlhf}
\begin{algorithmic}[1]
\STATE Execute an arbitrary policy to collect trajectory $\tau_0 = (s_{0,1}, a_{0,1}, \cdots, s_{0,H}, a_{0,H})$.\label{algline:hftau0}
\FOR{$k = 1 \cdots K$}
    \STATE Receive the initial state $s_{k,1}$
    \STATE Choose the estimated reward  $\widehat{\boldsymbol{r}}^k \leftarrow \arg\max_{\boldsymbol{r}\in\widehat{\mathcal{R}}_{k}} \widetilde{V}^{\widehat{\mathcal{P}}_k}_1(s_{k,1}; \boldsymbol{r}_{\tau_0})$. \label{algline:hfr}\emph{\textcolor{blue}{~// Choose the estimated reward $\widehat{\boldr}^k$}}
    
\FOR {$h = H, \cdots, 1$}\label{algline:hfsvi}
\STATE $\widehat{Q}_{k,h}(\cdot, \cdot) \leftarrow \widehat{r}^k_h(\cdot, \cdot) - \widehat{r}^k_h(s_{0,h},a_{0,h}) + \sup_{\Prob'\in \mathcal{P}_h} [\C^\alpha_{\Prob'}(\widehat{V}_{h+1})](\cdot, \cdot)$
\STATE $\widehat{V}_h(\cdot) \leftarrow \max_{a\in\mathcal{A}} \widehat{Q}_{k,h}(\cdot, a)$, $\pi^k_{h}(\cdot) = \arg\max_{a\in\mathcal{A}}\widehat{Q}_{k,h}(\cdot, a)$
\ENDFOR\label{algline:hfevi}
\STATE Execute the policy $\pi^k:=\{\pi^{k}_{h}\}_{h=1}^H$. In every step $h$, receive state $s_{k,h}$ and execute action $a_{k,h} = \pi_{k,h}(s_{k,h})$. Then collect the trajectory $\tau_k=(s_{k,1},a_{k,1}, s_{k,2}, a_{k,2}, \cdots, s_{k,H}, a_{k,H})$. \label{algline:hftra}
\STATE Compare two trajectories $\tau_k, \tau_0$ and collect observation $o_k$ from human feedback. \label{algline:hfcomp}
\STATE Update the reward confidence set $\widehat{\mathcal{R}}_{k+1} \leftarrow \{\boldr \in \mathcal{R} : \mathcal{L}_k(\boldr) > \max_{\boldr'\in\mathcal{R}} \mathcal{L}_k(\boldr') - \widehat{\beta}_R\}$.\label{algline:hfmle}
    \FOR {$h = 1, \cdots, H$} \label{algline:hfsp}
        \STATE $\widehat{\Prob}_{k+1,h} \leftarrow \arg\min_{\Prob' \in \mathcal{P}} \sum_{i=1}^k \operatorname{Dist}_{i,h}(\Prob', \delta_{k,h})$\label{algline:hfProbkh}\emph{\textcolor{blue}{~// Estimate the transition kernel $\Prob_h$}}
        \STATE $\widehat{\mathcal{P}}_{k+1,h} = \left\{ \Prob' \in \mathcal{P} : \sum_{i=1}^k \operatorname{Dist}_{i,h}(\Prob', \widehat{\Prob}_{i,h}) \leq \widehat{\gamma}^2 \right\}$\label{algline:hfPkh}\emph{\textcolor{blue}{~// Construct the confidence set}}
    \ENDFOR \label{algline:hfep}
\ENDFOR
\end{algorithmic}
\end{algorithm}

The construction of distance function $\operatorname{Dist}_{k,h} : \mathcal{P} \times \mathcal{P} \to \R_{\geq 0}$ is inspired by previous risk-neutral works \cite{ayoub2020model, fei2021risklinear}.
Recall the definition of function class $\mathcal{Z} = \{z_\Prob : \Prob \in \mathcal{P}\}$ in Eq.~(\ref{eq:Zdef}).
Let $\mathcal{X} := \mathcal{S} \times \mathcal{A} \times \mathcal{B}(\mathcal{S}, [0, H])$ be the domain of $z_\Prob$. We use the functions in $\mathcal{Z}$ to measure the difference between two probability kernels in $\mathcal{P}$. Specifically, for all $(s,a) \in \mathcal{S} \times \mathcal{A}$, let $x_{k,h}(s,a)$ maximize the diameter of $\widehat{\mathcal{P}}_{k,h}$ by function $z_\Prob(s,a,(x-\widehat{V}_{k,h+1})^+)$:
\begin{equation}
\label{eq:xdef}
    x_{k,h}(s,a)\! :=\! \underset{x\in[0,H]}{\arg\max}\left \{\!\sup_{\Prob'\in\widehat{\mathcal{P}}_{k,h}}\! z_{\Prob'}\left(s,a,(x\!-\! \widehat{V}_{k,h+1})^+\right)\! - \!\! \inf_{\Prob'\in\widehat{\mathcal{P}}_{k,h}}\! z_{\Prob'}\left(s,a,(x\!-\! \widehat{V}_{k,h+1})^+\!\right)\! \right\}.
\end{equation}
Denote $X_{k,h} := (s_{k,h}, a_{k,h}, (x_{k,h}(s_{k,h},a_{k,h}) - \widehat{V}_{k,h+1})^+) \in \mathcal{X}$. Then, we can define the distance functions
$\operatorname{Dist}_{k,h}(\Prob, \Prob') := \left( z_\Prob\left(X_{k,h}\right) - z_{\Prob'}\left(X_{k,h}\right) \right)^2$ for $\Prob, \Prob' \in \mathcal{P}$.
Equipped with this distance function, we can estimate $\Prob_h$ by $\widehat{\Prob}_{k,h}:=\arg\min_{\Prob'\in\mathcal{P}}\sum_{i=1}^{k-1}\operatorname{Dist}_{i,h}(\Prob', \delta_{k,h})$, where $\delta_{k,h}(s_{k,h+1} \mid s, a) = 1$ and $\delta_{k,h}(s'\mid s, a) = 0$ for any $s' \neq s_{k,h+1}$. That is, $\widehat{\Prob}_{k,h}$ is the one with the lowest gap to the sequence $\{\delta_{i,h}\}_{i=1}^{k-1}$ which contains the information of history trajectories. In addition, $\widehat{\mathcal{P}}_{k,h}$ is the confidence set centered at $\widehat{\Prob}_{k,h}$ with radius $\widehat{\gamma}$.
The theoretical guarantee for ICVaR-HF is presented below. 
\begin{theorem}
\label{thm:hfregret}
    For some positive constant $\delta \in (0, 1]$, we set the estimation radius $\widehat{\beta}_R = c\log(K\cdot N_B(\mathcal{R}, \|\cdot\|_\infty,1/K)/\delta)$ and $\widehat{\gamma} = 4 H^2\left(2\log\left(\frac{2H\cdot N_C(\mathcal{P}, \|\cdot\|_{\infty, 1}, 1/K)}{\delta}\right) + 1 + \sqrt{\log(5K^2/\delta)}\right)$ for some constant $c$. Denote  Then with probability at least $1 - 4\delta$, the regret of Algorithm~\ref{alg:rlhf} satisfies
    \begin{equation}
        \operatorname{Regret}(K) \leq \widetilde{O}\left(\sqrt{KH^3\alpha^{-H-1}}\left(\sqrt{HD_P} + \sqrt{m^{-1}D_R}\right)\right),
    \end{equation}
    where the dimension parameters $D_p := d_E(\mathcal{Z})\log(N_C(\mathcal{P}, \|\cdot\|_{\infty, 1}, 1/K)$ detailed in Theorem~\ref{thm:generalregret}, and $D_R:=d_E(\mathcal{R})\log(N_B(\mathcal{R}, \|\cdot\|_\infty, 1/K))$. Here $d_E(\mathcal{R}) := \dim_E(\mathcal{R}, 1/\sqrt{K})$ is the eluder dimension of $\mathcal{R}$, and $N_B(\mathcal{R}, \|\cdot\|_\infty, 1/K)$ is the $1/K$-bracketing number of $\mathcal{R}$ under norm $\|\cdot\|_\infty$. \footnote{The formal definition of bracketing number is detailed in Definition~\ref{def:bracketingnumber} in Appendix~\ref{app:hfreward}, which is a common discretization for function class in MLE analysis \cite{geer2000empirical,liuOptimisticMLEGeneric2023}.}
\end{theorem}
The full proof is presented in Appendix~\ref{app:hf}. Notice that the regret bound for Algorithm~\ref{alg:rlhf} is sublinear to $K$, making ICVaR-HF the first provably efficient algorithm for risk-sensitive RLHF. The first term of the regret is similar to the result in Theorem~\ref{thm:generalregret} for ICVaR-RL with general function approximation, which is the cost of learning the transition estimation. The second term is cost of learning the unknown reward functions, which requires our novel regret decomposition method to bridge the gap of the dislocation of the risk-sensitive value function and cumulative reward served for human feedback comparison oracle. 
Moreover, we apply the discretization method to $\mathcal{R}$ to get the $\log(N_B(\mathcal{R}, \|\cdot\|_\infty, 1/K))$ term (instead of the $\log(|\mathcal{R}|)$ term in \cite{wangRLHFMoreDifficult2023}), which remains finite even when $\mathcal{R}$ is an infinite reward function set.

\section{Conclusion and Future Works}
In this paper, we investigate the risk-sensitive RL with an ICVaR objective, i.e., ICVaR-RL, with linear and general function approximations and human feedback.
We propose two provably sample efficient algorithms, ICVaR-L and ICVaR-G for function approximation ICVaR-RL, by developing novel techniques including an efficient approximation of the CVaR operator, a new ridge regression with CVaR-adapted regression features, and a refined elliptical potential lemma. We also develop the first provably efficient risk-sensitive RLHF algorithm ICVaR-HF with general function approximation, and develop  novel theoretical techniques for regret decomposition of risk-sensitive RLHF and the reward MLE for infinite reward set. 
This paper leaves several interesting directions for future works, e.g., further closing the gap between the upper and lower regret bound for ICVaR-RL with function approximation on $\alpha$ and $H$, and extending the risk-sensitive RLHF problem to more risk measures and more human feedback settings.

\bibliography{ref}
\bibliographystyle{plain}

\newpage
\appendix

\section{Notations}
\label{app:notation}

In this appendix, we present the basic notations used in this paper.

For a positive integer $n$, $[n] := \{1, 2, \cdots, n\}.$ For a non-zero real number $r \in \R\setminus\{0\}$, the sign operator $\operatorname{sgn}(r) := r/|r|$. For a $d$-dimension vector $x \in \R^d$ and a positive definite matrix $\Lambda \in \R^{d \times d}$,  $\|x\|_{\Lambda} := \sqrt{x^\top \Lambda x}$ be the norm of vectors in $\R^d$ under a positive matrix $\Lambda$. The operator  $(x)^+ := \max\{x, 0\}$. For two positive sequences $\{A_n\}, \{B_n\}$,  $A_n = O(B_n)$ if there exists a positive constant $c$ such that $A_n \leq cB_n$ for any $n \geq 1$, and $A_n = \Omega(B_n)$ if there exists $c > 0$ satisfying $0 \leq cB_n \leq A_n$ for any $n \geq 1$. $\widetilde{O}(\cdot)$ further suppresses the polylogarithmic factors in $O(\cdot)$.

\paragraph{Measurable space and $\sigma$-algebra.}
To discuss the performance of the algorithm on any MDP instance, we should establish the formal definition of the probability space considered in the problem. Since the stochasticity in the MDP is due to the transition, we define the probability space as $\Omega = (\mathcal{S}\times\mathcal{A})^{KH}$ and the probability measure as the gather of transition probabilities and the policy obtained from the algorithms.
Thus, we work on the probability space $(\Omega, \mathcal{F}, \mathbb{P})$, where $\mathcal{F}$ is the product $\sigma$-algebra generated by the discrete $\sigma$-algebras underlying $\mathcal{S}$ and $\mathcal{A}$. To analyze the random variable on step $h$ in episode $k$, we inductively define $\mathcal{F}_{k,h}$ as follows. First let $\mathcal{F}_{1,h} := \sigma(s_{1,1}, a_{1,1}, \cdots, s_{1,h}, a_{1, h})$ for any $h\in[H]$. Then set $\mathcal{F}_{k,h} := \sigma(\mathcal{F}_{k-1,H}, s_{k,1},a_{k,1}, \cdots, s_{k,h},a_{k,h})$ for any $k\in[K]$ and $h\in[H]$.

\newpage

\section{The Objective of ICVaR-RL}

The fourmulation investigated in this paper is iterated CVaR MDP, which is also studied by \cite{hardy2004iterated, osogami2012, chu2014markov, du2023provably}. The Iterated CVaR MDP aims to maximize the objective $J(\pi)$ which can be expressed as follows:
\begin{equation}
\begin{aligned}
    J(\pi) = r_1(s_1,a_1) &+ \operatorname{CVaR}^{\alpha}_{s_2\sim\mathbb{P}_1(\cdot\mid s_1,a_1)}\Big(r_2(s_2,a_2)+\operatorname{CVaR}^{\alpha}_{s_3\sim\mathbb{P}_2(\cdot\mid s_2,a_2)}\Big(r_3(s_3,a_3) \\
    &+\left(\cdots \operatorname{CVaR}^{\alpha}_{s_H\sim\mathbb{P}_{H-1}(\cdot\mid s_{H-1},a_{H-1})}\left(r_{H}(s_{H},a_{H})\right)\right)\Big)\Big),
\end{aligned}
\end{equation}
where $a_h = \pi_h(s_h)$ for $h \in [H]$ and $s_1$ is the initial state. Maximizing this objective means finding the optimal policy to maximize the cumulative rewards obtained when transitioning to the worst $\alpha$-portion states at each step. With this objective, we consider the regret minimization setting to evaluate the efficiency of our RL algorithms.

\paragraph{Application} Intuitively, the ICVaR-RL concerns the worst $\alpha$-portion situations at each step. This formulation is most suitable for safety-critical applications where there is a fatal failure probability that leads to catastrophic states at each decision stage. Our goal is to find a policy that guarantees safety even when disaster might happen at each transition. For example, consider the financial dynamic investment \cite{devolder2017iterated}, where one needs design a risk-sensitive dynamic investment strategy. There is a small probability, at each time during execution, that the investor encouters a catastrophic states. In order to guarantee safety at each step, \cite{devolder2017iterated} studies iterated CVaR measure under a Black–Scholes–Merton market.

\newpage 
\section{ {Numerical Experiments For Algorithm~\ref{alg:ICVaR-L}} }

In this section, we evaluate the empirical performance of ICVaR-L (Algorithm~\ref{alg:ICVaR-L}). 
Since there is no other prior comparable efficient algorithms for ICVaR-RL with function approximation,  we compare our algorithm with the ICVaR-VI algorithm \cite{du2023provably} which is designed for ICVaR-RL in Tabular MDPs, and the LSVI algorithm \cite{zhou2021provably} for risk-neutral RL in linear mixture MDPs. These two baselines are the closest comparable algorithms to ICVaR-L in ICVaR-RL with linear function approximation. The empirical performance is evaluated with respect to the cumulative regret defined in Eq.~\ref{eq:regretdef}.

\subsection{Experiment Environment}
In our experiments, we consider a risk MDP with states space $\mathcal{S} = \{s_0, s_1, s_2, s_{dis}\} \cup \mathcal{S}_1 \cup \mathcal{S}_2$  and action space $\mathcal{A} = \{a^*\} \cup \mathcal{A}_{sub}$. The agent will start at the initial state $s_0$. In this state, the agent will not receive reward. Then with action $a^*$,  the agent will transfer to a conservative state $s_1$, i.e. $\Prob[s_1 \mid s_0, a^*] = 1$. Otherwise, the agent will transfer to an aggressive state $s_2$ with action $a_{sub} \in \mathcal{A}_{sub}$, i.e., $\Prob[s_2 \mid s_0, a_{sub}] = 1$. With any action $a \in \mathcal{A}$, the agent will receive no reward in state $s_0$, $r(s_1, a) = 0.5$ in state $s_1$, and $r(s_2, a) = 1$ in state $s_2$.

The conservative state $s_1$ is associated with $\mathcal{S}_1$. The agent at $s_1$ will transfer into $s \in \mathcal{S}_1$ with equal probability by action $a^*$. With sub-optimal action $a_{sub} \in \mathcal{A}_{sub}$, the agent will not move in state $s_1$, i.e. $\Prob[s_1 \mid s_1, a_{sub}] = 1$. In $s \in \mathcal{S}_1$, the agent will receive reward $r(s, a) = 0.6$ and transfer back to $s_1$ with $\Prob[s_1 \mid s, a] = 1$ for any $a \in \mathcal{A}$.

The aggressive state $s_2$ is associate with $\mathcal{S}_2$ and disaster state $s_{dis}$. For any $s\in\mathcal{S}_2$, we still have $r(s, a) = 1$ for any $a \in \mathcal{A}$. However, the disaster state satisfies $r(s_{dis}, a) = 0$ for any $a \in \mathcal{A}$. With probability $0.5$, the state $s_2$ and $s \in \mathcal{S}_2$ will transfer to $s_{dis}$, i.e. $\Prob[s_{dis} \mid s_2, a] = \Prob[s_{dis} \mid s, a] = 0.5$. Otherwise the agent will stay in $\{s_2\}\cup\mathcal{S}_2$. 

In this MDP, the agent will receive a higher expected cumulative reward if it chooses $a_{sub}$ at initial state to reach the aggressive state $s_2$. However, it is not a risk-sensitive choice. This is because with small $\alpha$, the Iterated CVaR MDP prefer the conservative choice $a^*$ which gives stable return, where the aggressive choice may lead to a disaster state.

\subsection{Numerical Results}
We evaluate the cumulative ICVaR-type regret defined in Eq.~\ref{eq:regretdef} for algorithms ICVaR-L, ICVaR-VI \cite{du2023provably} and LSVI\cite{zhou2021provably}, where ICVaR-L is our Algorithm~\ref{alg:ICVaR-L} for ICVaR-RL with linear function approximation, ICVaR-VI is the algorithm for ICVaR-RL in tabular MDPs~\cite{du2023provably}, and LSVI is the risk-neutral RL for MDP with linear function approximation~\cite{zhou2021provably}.

In our experiment, we set $A = 2$, $H = 6$ and $\alpha \in \{0.15, 0.30\}$. We explore MDPs with different sizes of state space and dimensions, denoted by $(S, d)$. We set $(S,d ) = (20, 2)$ and $(S, d)= (40, 4)$ to represent small and large MDPs, respectively, with $d$ as the feature dimension in Assumption~\ref{ass:linear}. For each case, we conduct $10$ independent runs and report the average regret across runs with $95\%$ confidence intervals. The results are presented in Figures~\ref{fig:expS20} and \ref{fig:expS40}

\newpage

\begin{figure}
\centering
\subfigure[$S = 20, d = 2, \alpha = 0.15$]{
    % \centering
    \includegraphics[width = .47\linewidth]{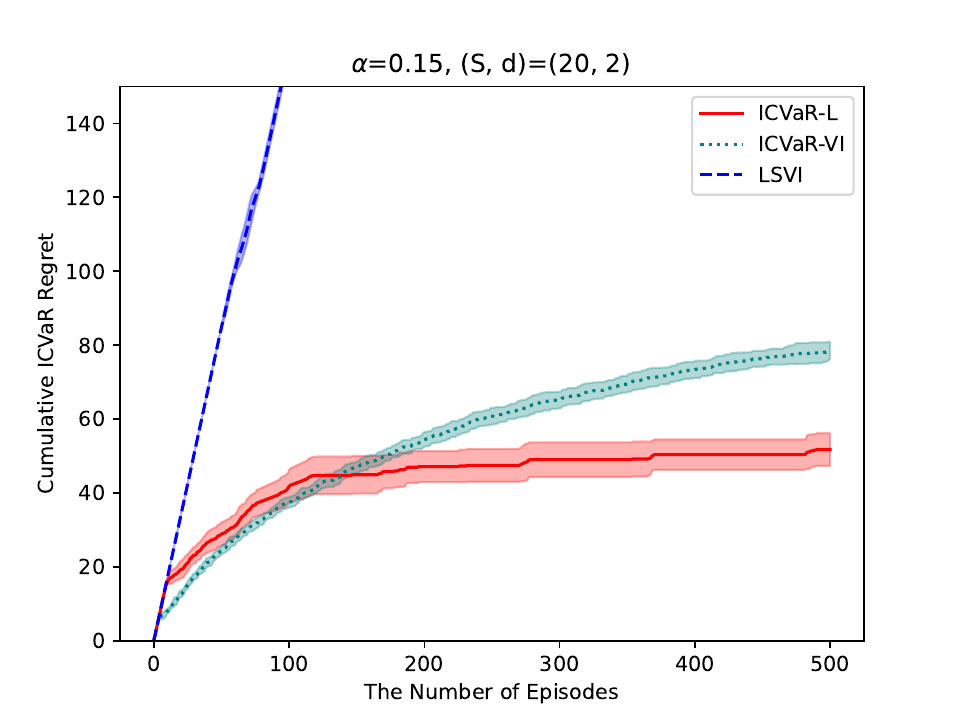}
    % \caption{(a)}
    \label{fig:expS20d2a015}
}
\quad 
\subfigure[$S = 20, d = 2, \alpha = 0.30$]{
    % \centering
    \includegraphics[width=0.47\linewidth]{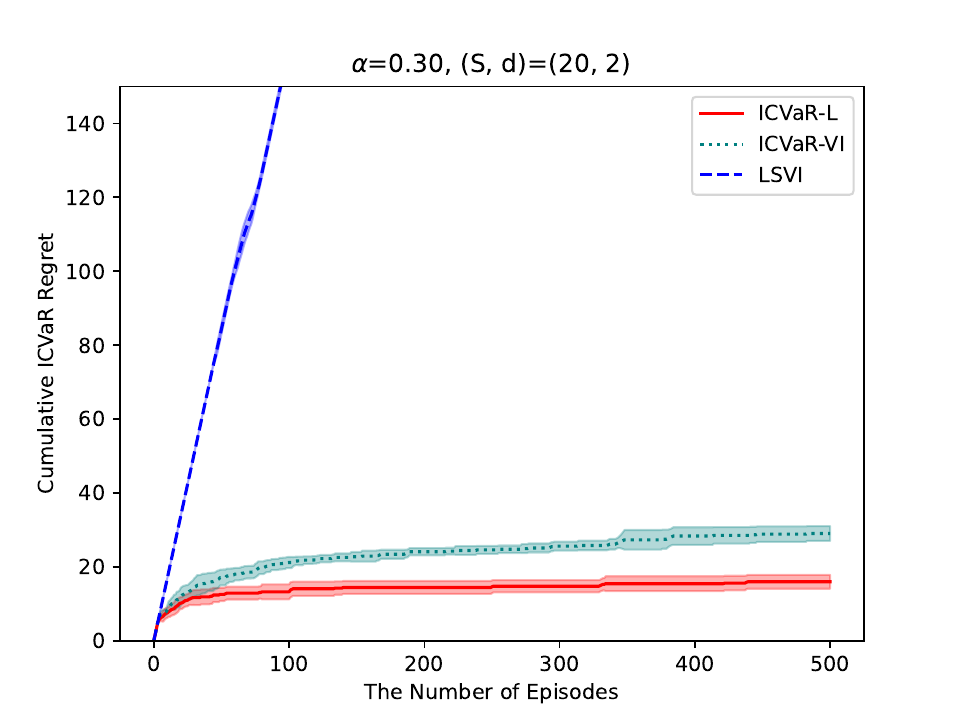}
    % \caption{(b)}
    \label{fig:expS20d2a030}
}
\caption{Cumulative regret for the case $S = 20$ and $d = 2$.}
\label{fig:expS20}
\end{figure}
\begin{figure}
\centering
\subfigure[$S = 40, d = 4, \alpha = 0.15$]{
    % \centering
    \includegraphics[width = .47\linewidth]{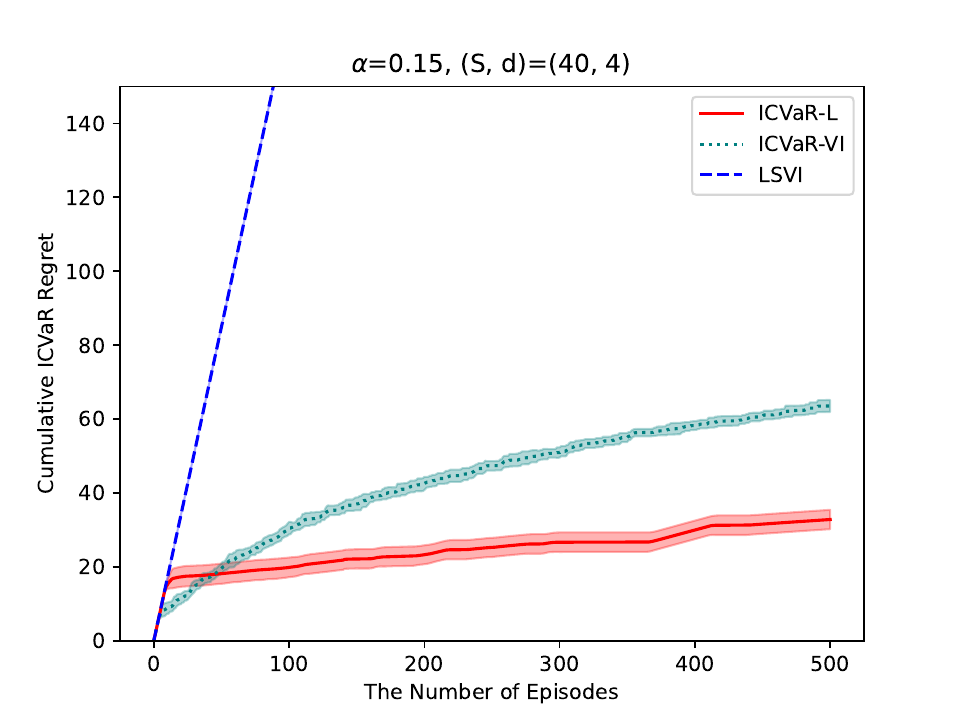}
    % \caption{(a)}
    \label{fig:expS40d4a015}
}
\quad 
\subfigure[$S = 40, d = 4, \alpha = 0.30$]{
    % \centering
    \includegraphics[width=0.47\linewidth]{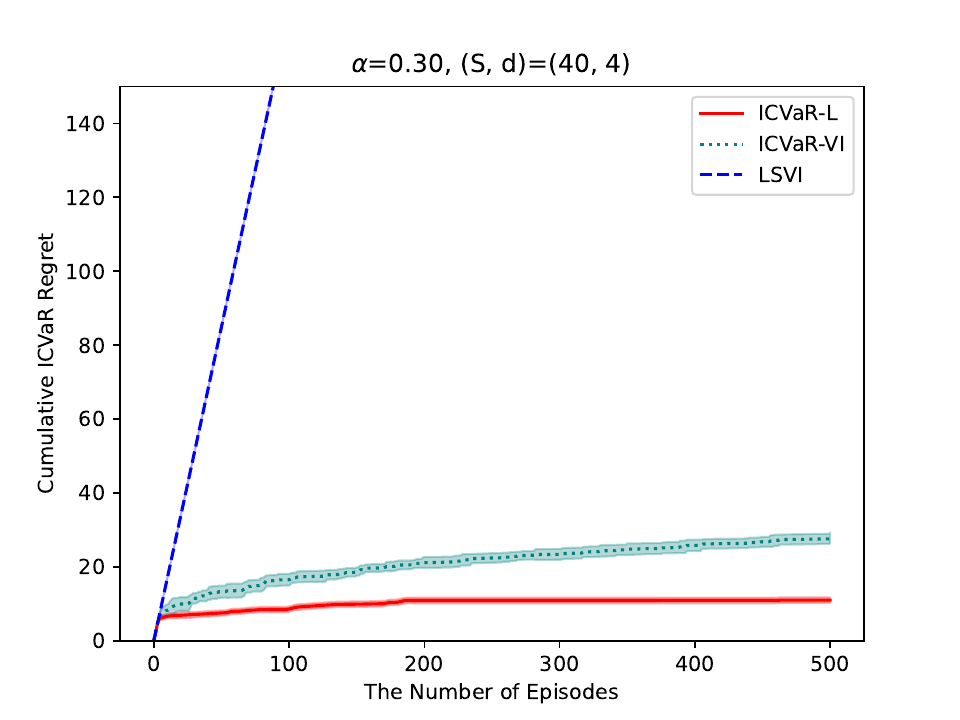}
    % \caption{(b)}
    \label{fig:expS40d4a030}
}
\caption{Cumulative regret for the case $S = 40$ and $d = 4$.}\label{fig:expS40}
\end{figure}

As depicted in Figures~\ref{fig:expS20} and \ref{fig:expS40}, ICVaR-L consistently exhibits a sublinear regret with respect to the number of episodes, validating our theoretical result in Theorem~\ref{thm:linearregret}. Notably, for each $\alpha \in \{0.15, 0.30\}$, the regret of ICVaR-L is significantly lower than those of other algorithms. 

Comparing ICVaR-L with the tabular algorithm ICVaR-VI, our algorithm demonstrates faster learning of the optimal risk-sensitive policy, highlighting its efficiency in adopting linear function approximation. Furthermore, LSVI exhibits a nearly linear regret with the number of episodes, indicating its struggle to learn the optimal risk-sensitive policy.

These experimental evidences demonstrate the efficiency of ICVaR-L in risk-sensitive linear RL scenarios, providing empirical supports for its theoretical advancements.

\newpage
\section{Proof of Theorem~\ref{thm:linearregret}: Regret Upper Bound for Algorithm~\ref{alg:ICVaR-L}}
\label{app:thm1proof}

In this section, we present the complete proof of Theorem~\ref{thm:linearregret}.

First, we give an overview of the proof. In Appendix~\ref{app:thm1proofcvarapprox}, we bound the approximation error of CVaR operator from taking the supremum in finite set $\Neps$ instead of interval $[0, H]$ in Eq.~\ref{eq:cvarNeps}. We propose Lemma~\ref{lm:errorNeps} which bounds the error of approximating $[\C^\alpha_\Prob(V)](s,a)$ by $[\C^{\alpha, \Neps}_\Prob(V)](s,a)$. In Appendix~\ref{app:thm1proofconcentration}, we establish the concentration argument with respect to our estimated parameter $\widehat{\theta}_{k,h}$ and the true parameter $\theta_h$ for step $h$. Lemma~\ref{lm:concentration} shows that $\|\theta_h - \widehat{\theta}_{k,h}\|_{\widehat{\Lambda}_{k,h}} \leq \widehat{\beta}$ with high probability, and Lemma~\ref{lm:concetrationCVaR} upper bounds the deviation term based on the concentration of $\widehat{\theta}_{k,h}$. In Appendix~\ref{app:thm1proofopt}, Lemma~\ref{lm:optimism} implies that our calculation of functions $\widehat{Q}_{k,h}$ and $\widehat{V}_{k,h}$ is optimistic. Finally, we apply regret decomposition method and bound the regret of Algorithm~\ref{alg:ICVaR-L} in Appendix~\ref{app:thm1regret}.

\subsection{Error of CVaR Approximation}
\label{app:thm1proofcvarapprox}
Below we show that the error of approximating the CVaR operator by the technique of taking supremum on the discrete set $\Neps$ is small.
\begin{lemma}
\label{lm:errorNeps}
Assume the transition kernel $\Prob$ is parameterized by transition parameter $\theta$, i.e. $\Prob(s' \mid s, a) = \langle \theta, \phi(s', s, a) \rangle$ for any $(s', s, a) \in \mathcal{S}\times\mathcal{S}\times\mathcal{A}$. We denote \begin{equation}
    [\C^{\alpha}_{\theta}(V)](s,a) :=  [\C^{\alpha}_{\Prob}(V)](s,a), \quad [\C^{\alpha, \Neps}_{\theta}(V)](s,a) := [\C^{\alpha, \Neps}_{\Prob}(V)](s,a).
\end{equation}
For a given constant $\varepsilon > 0$ and fixed a value function $V: \mathcal{S} \to [0, H]$, we have
\begin{equation}
    \left| [\C^{\alpha, \Neps}_{\theta}(V)](s, a) - [\C^{\alpha}_{\theta}(V)](s, a) \right| \leq 2\varepsilon.
\end{equation}

\begin{proof}
First, we denote $[\C^{\alpha, x}_{\theta}(V)](s,a) := x - \frac{1}{\alpha}[\Prob(x-V)^+](s,a)$. Let $x^* := \operatorname{VaR}_\Prob^\alpha(V) \in [0, H]$. Then, we have $[\C^{\alpha}_{\theta}(V)](s, a) = [\C^{\alpha, x^*}_{\theta}(V)](s, a)$ by propterties of CVaR operator \cite{rockafellar2000optimization}.

If $x^* \in \Neps$, we have $[\C^{\alpha, \Neps}_{\theta}(V)](s, a) = [\C^{\alpha}_{\theta}(V)](s, a)$. It suffices to consider $x^* \notin \Neps$. Suppose $x^* \in (m\varepsilon, (m+1)\varepsilon)$ for some positive integer $m \in [\lfloor H/\varepsilon \rfloor]$.

By the property of CVaR operator, we have
\begin{equation}
\begin{aligned}
    [\C^{\alpha, \Neps}_{\theta}(V)](s, a)
    = \max\left\{ [\C^{\alpha, m\varepsilon}_{\theta}(V)](s, a), [\C^{\alpha, (m+1)\varepsilon}_{\theta}(V)](s, a) \right\}
\end{aligned}
\end{equation}

Then, we assume $\mathcal{S}_0 := \{s' \in \mathcal{S} : V(s') \leq m\varepsilon\}$, $\mathcal{S}_1 := \{s' \in \mathcal{S} : m\varepsilon < V(s') < x^*\}$. Denote $s^*$ as $V(s^*) = x^*$. Noticing that $x^*=\operatorname{VaR}_\Prob^\alpha(V)$,   we have:
\begin{equation}
    \sum_{s'\in\mathcal{S_0}\cup\mathcal{S}_1} \Prob(s'\mid s,a) < \alpha, \quad \sum_{s'\in\mathcal{S_0}\cup\mathcal{S}_1\cup\{s^*\}} \Prob(s'\mid s,a) \geq \alpha.
\end{equation}
We give Figure~\ref{fig:errorfig} where we sort the successor states $s' \in \mathcal{S}$ by $V(s')$ in ascending order, and the red virtual line denotes the $\alpha$-quantile line. The black virtual line denotes the value of $m\varepsilon, x^*, (m+1)\varepsilon$, and the sets of states $\mathcal{S}_0$, $\mathcal{S}_1$ are marked on the figure.

\begin{figure}[t]
    \centering
    \includegraphics[width=0.8\textwidth]{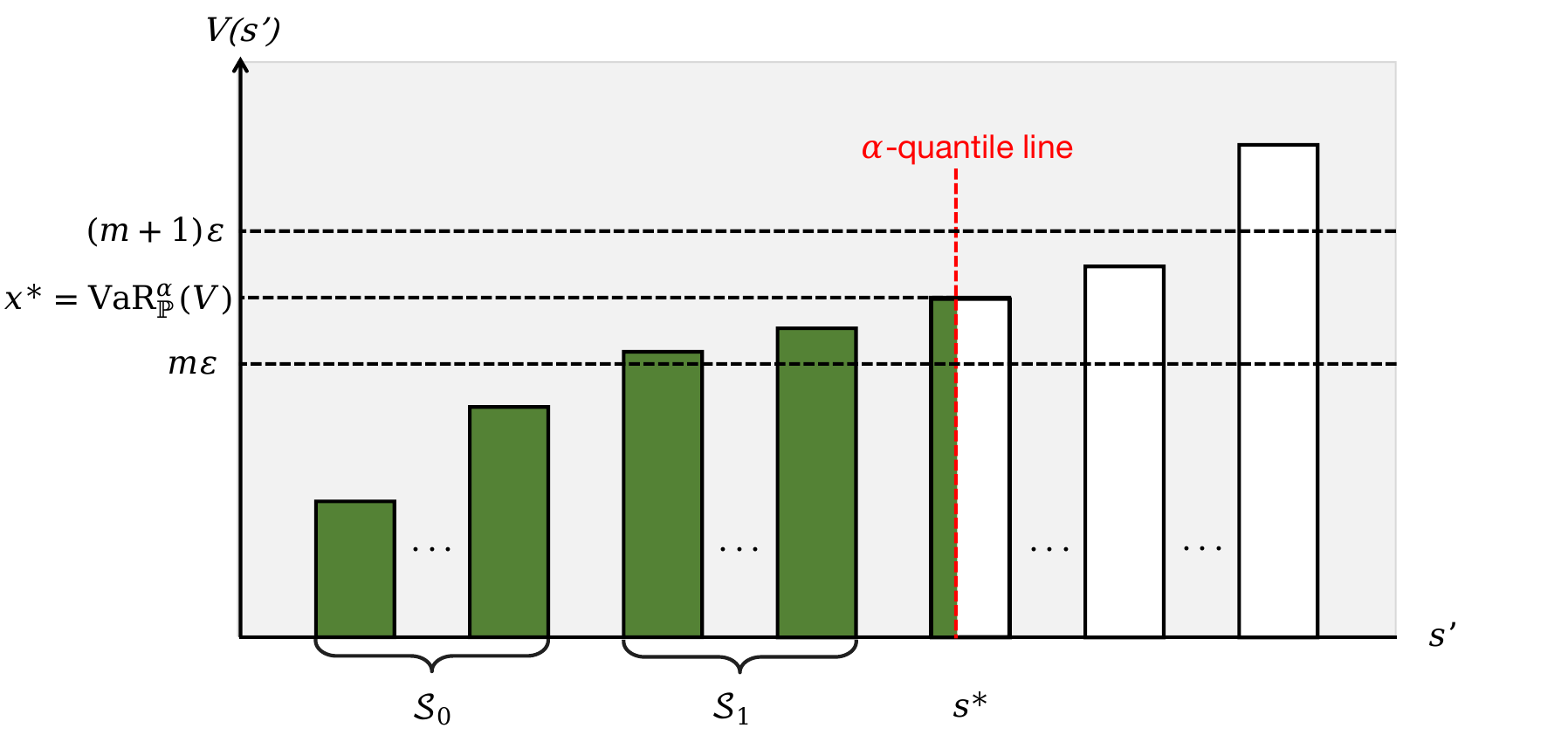}
    \caption{Illustrating example for Lemma~\ref{lm:errorNeps}.}
    \label{fig:errorfig}
\end{figure}

By the Figure~\ref{fig:errorfig}, we can write the exact form of $[\C_\theta^{\alpha, m\varepsilon}](s,a)$ and $[\C_\theta^{\alpha, x^*}](s,a)$ as
\begin{equation}
\begin{aligned}
    [\C_\theta^{\alpha, m\varepsilon}](s,a) = m\varepsilon - \frac{1}{\alpha}\sum_{s'\in\mathcal{S}_0} \Prob(s'|s,a)(m\varepsilon - V(s'))^+,
\end{aligned}
\end{equation}
\begin{equation}
\begin{aligned}
    [\C_\theta^{\alpha, x^*}](s,a) = x^* - \frac{1}{\alpha}\sum_{s'\in\mathcal{S}_0+\mathcal{S}_1} \Prob(s'|s,a)(x^* - V(s'))^+,
\end{aligned}
\end{equation}
respectively.

Then, we have
\begin{equation}
\begin{aligned}
    & [\C_\theta^{\alpha, x^*}](s,a) - [\C_\theta^{\alpha, m\varepsilon}](s,a) \\
    \leq &x^* - m\varepsilon
    + \frac{1}{\alpha}\left(\sum_{s'\in\mathcal{S}_0}\Prob(s'\mid s, a)(x^* - m\varepsilon) + \sum_{s'\in\mathcal{S}_1}\Prob(s'\mid s, a)(x^* - V(s'))\right) \\
    \leq &\varepsilon + \frac{1}{\alpha} \sum_{s'\in\mathcal{S}_0\cup\mathcal{S}_1}\Prob(s' \mid s, a)(x^* - m\varepsilon) \\
    \leq &2\varepsilon,
\end{aligned}
\end{equation}
where the first inequality holds by triangle inequality, and the second inequality holds by the definition of $\mathcal{S}_1$, and the last one holds by the definition of $\alpha$.

Thus
\begin{equation}
\begin{aligned}
    &\left| [\C^{\alpha, \Neps}_{\theta}(V)](s, a) - [\C^{\alpha}_{\theta}(V)](s, a) \right| \\
    = &[\C^{\alpha, x^*}_\theta(V)](s, a) - \max\left\{ [\C^{\alpha, m\varepsilon}_{\theta}(V)](s, a), [\C^{\alpha, (m+1)\varepsilon}_{\theta}(V)](s, a) \right\} \\
    \leq &\left| [\C_\theta^{\alpha, x^*}](s,a) - [\C_\theta^{\alpha, m\varepsilon}](s,a)\right| \leq 2\varepsilon,
\end{aligned}
\end{equation}
where equality holds since $[\C^\alpha_\theta(V)](s,a) \geq [\C^{\alpha, \Neps}_\theta(V)](s,a)$ by definition.
\end{proof}
\end{lemma}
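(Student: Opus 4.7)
The plan is to split the claim into two directions. The inequality $[\C^{\alpha,\Neps}_{\theta}(V)](s,a) \leq [\C^{\alpha}_{\theta}(V)](s,a)$ is immediate, since $\Neps \subset [0,H]$ and hence the supremum over $\Neps$ cannot exceed the supremum over $[0,H]$. So it suffices to bound $[\C^{\alpha}_{\theta}(V)](s,a) - [\C^{\alpha,\Neps}_{\theta}(V)](s,a) \leq 2\varepsilon$. For this, I would invoke the VaR-attains-the-sup identity (Eq.~\eqref{eq:cvaronvar} in Appendix~\ref{app:cvar}), which says that the supremum defining $[\C^{\alpha}_{\theta}(V)](s,a)$ is attained at $x^\star := \operatorname{VaR}^{\alpha}_{\Prob(\cdot\mid s,a)}(V) \in [0,H]$. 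Writing $[\C^{\alpha,x}_{\theta}(V)](s,a) := x - \tfrac{1}{\alpha}[\Prob(x-V)^+](s,a)$, we obtain $[\C^{\alpha}_{\theta}(V)](s,a) = [\C^{\alpha,x^\star}_{\theta}(V)](s,a)$. If $x^\star \in \Neps$ the claim is trivial; otherwise $x^\star \in (m\varepsilon,(m+1)\varepsilon)$ for some integer $m$, and $m\varepsilon \in \Neps$ is a valid test point, so $[\C^{\alpha,\Neps}_{\theta}(V)](s,a) \geq [\C^{\alpha, m\varepsilon}_{\theta}(V)](s,a)$.

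\textbf{Core computation.} The heart of the proof is to estimate the gap
\[
\Delta := [\C^{\alpha,x^\star}_{\theta}(V)](s,a) - [\C^{\alpha, m\varepsilon}_{\theta}(V)](s,a)
\]
by partitioning successor states according to where $V(s')$ sits relative to $m\varepsilon$ and $x^\star$. Setting $\mathcal{S}_0 := \{s' : V(s') \leq m\varepsilon\}$ and $\mathcal{S}_1 := \{s' : m\varepsilon < V(s') < x^\star\}$, the quantity $(x^\star - V(s'))^+ - (m\varepsilon - V(s'))^+$ equals $x^\star - m\varepsilon$ on $\mathcal{S}_0$, lies in $(0, x^\star - m\varepsilon)$ on $\mathcal{S}_1$, and vanishes elsewhere. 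Plugging these into $\Delta$ and discarding the nonpositive correction via triangle inequality yields
\begin{equation*}
\Delta \;\leq\; (x^\star - m\varepsilon) \;+\; \frac{x^\star - m\varepsilon}{\alpha}\sum_{s' \in \mathcal{S}_0 \cup \mathcal{S}_1}\Prob(s' \mid s,a).
\end{equation*}

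\textbf{Closing the bound and main obstacle.} Since $x^\star - m\varepsilon < \varepsilon$ by construction, and $\sum_{s' \in \mathcal{S}_0 \cup \mathcal{S}_1}\Prob(s' \mid s,a) = \Prob(V(s') < x^\star \mid s,a) \leq \alpha$ by the defining property $\operatorname{VaR}^{\alpha} = \min\{x : F_V(x) \geq \alpha\}$, the right-hand side is at most $\varepsilon + \varepsilon = 2\varepsilon$, which is the required bound. The main delicate point I anticipate is justifying $\Prob(V(s') < x^\star \mid s,a) \leq \alpha$ when the successor-state distribution has an atom at $x^\star$: this follows from the minimality in the VaR definition, which forces $F_V(x) < \alpha$ for every $x < x^\star$, and hence $\Prob(V < x^\star) = \lim_{x \uparrow x^\star} F_V(x) \leq \alpha$. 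A minor edge case is $x^\star \in [0,\varepsilon)$ (which would force $m=0$, not covered by the ``positive integer $m$'' convention); this can be handled either by including $0$ in $\Neps$, or by using $\varepsilon$ as the test point and rerunning the same decomposition.
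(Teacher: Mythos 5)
Your proof matches the paper's almost step for step: the VaR-attains-the-supremum identity, the test point $m\varepsilon$ just below $x^\star$, the partition into $\mathcal{S}_0$ and $\mathcal{S}_1$, and the bound combining $\Prob(V(s') < x^\star \mid s,a) \leq \alpha$ with $x^\star - m\varepsilon < \varepsilon$. One remark on the edge case $x^\star \in [0,\varepsilon)$ that you flag (and the paper silently skips): only your first fix works---adding $0$ to $\Neps$ restores the argument verbatim---whereas using $\varepsilon$ as the test point does not, because for a test point \emph{above} $x^\star$ the correction term is governed by $\Prob(V(s') < \varepsilon \mid s,a)$, which is not bounded by $\alpha$; e.g., $V \equiv 0$ gives $[\C^{\alpha,\varepsilon}_{\theta}(V)](s,a) = \varepsilon(1 - 1/\alpha)$, an error of order $\varepsilon/\alpha$ rather than $2\varepsilon$.
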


\subsection{Concetration Argument}
\label{app:thm1proofconcentration}
We show that our estimated parameter $\widehat{\theta}_{k,h}$ is a proper estimation of the  true parameter $\theta_h$ for all episodes $k$ and steps $h$. In fact, we can prove that $\widehat{\theta}_{k,h}$ falls in an ellipsoid centered at $\theta_h$ with high probability. In order to define the bonus term, we define a function $X_{k,h}(\cdot,\cdot)$ that chooses the ideal $x$ based on given state-action pair $(s,a)$ by
\begin{equation}
\label{eq:Xkhsadef}
    X_{k,h}(s,a) := \arg\max_{x \in \Neps}\|\psi_{(x-\widehat{V}_{k,h+1})^+}(s,a)\|_{\widehat{\Lambda}_{k,h}^{-1}}.
\end{equation}
Then, we denote $\psi_{k,h}(s,a)$ as the maximum norm of $\|\psi_{(x-\widehat{V}_{k,h+1})^+}(s,a)\|_{\widehat{\Lambda}_{k,h}^{-1}}$ for $x \in \Neps$ with a given state-action pair $(s,a) \in \mathcal{S} \times \mathcal{A}$:
\begin{equation}
\label{eq:psikhdef}
    \psi_{k,h}(s,a) := \psi_{(X_{k,h}(s,a)-\widehat{V}_{k,h+1})^+}(s, a).
\end{equation}

\begin{lemma}[Concentration on $\theta$]
\label{lm:concentration}
For $\delta \in (0, 1)$, we have that with probability at least $1 - \delta/H$, 
\begin{equation}
    \|\theta_h - \widehat{\theta}_{k,h}\|_{\widehat{\Lambda}_{k,h}} \leq \widehat{\beta} = H\sqrt{d\log\left( \frac{H + KH^3}{\delta} \right)} + \sqrt{\lambda} 
\end{equation}
holds for any $k\in[K]$ and $h \in [H]$.
\begin{proof}
    First, we fixed an $h \in [H]$. Let $A_k = \psi_{k,h}(s_{k,h}, a_{k,h})$ and $\eta_{k} := \left\langle \theta_h, \psi_{k,h}(s_{k,h},a_{k,h})\right\rangle - (x_{k,h} - \widehat{V}_{k,h+1})^+(s_{k,h+1})$. We have $A_k$ is $\mathcal{F}_{k,h}$ measurable, $\eta_k$ is $\mathcal{F}_{k,h+1}$ measurable. And $\{\eta_{k}\}_k$ is a martingale difference sequence and $H$-sub-Gaussian. We have
    \begin{equation}\label{eq:theta*-hattheta}
    \begin{aligned}
        \theta_h\! -\! \widehat{\theta}_{k,h} \!= \!&\widehat{\Lambda}_{k,h}^{-1}\left( \sum_{i=1}^{k-1} \psi_{i,h}(s_{i,h},a_{i,h})\left(\langle \theta_h, \psi_{i,h}(s_{i,h},a_{i,h})\rangle - (x_{i,h}-\widehat{V}_{i,h+1})^+(s_{i,h+1}) \right)\! + \lambda \theta_h \right) \\
        = &\widehat{\Lambda}_{k,h}^{-1}\sum_{i=1}^{k-1} A_i\eta_i 
        + \lambda \widehat{\Lambda}_{k,h}^{-1}\theta_h.
    \end{aligned}
    \end{equation}
    Then, we can write
    \begin{equation}
        \|\theta_h - \widehat{\theta}_{k,h}\|_{\widehat{\Lambda}_{k,h}} \leq \left\|\sum_{i=1}^{k-1} A_i\eta_i\right\|_{\widehat{\Lambda}_{k,h}^{-1}} + \lambda \|\theta_h\|_{\widehat{\Lambda}_{k,h}^{-1}} \leq \left\|\sum_{i=1}^{k-1} A_i\eta_i\right\|_{\widehat{\Lambda}_{k,h}^{-1}} + \sqrt{\lambda},
    \end{equation}
    where first inequality is due to Eq.~\ref{eq:theta*-hattheta} and triangle inequality, and the second one comes from $\widehat{\Lambda}_{k,h} \succeq \lambda I$.
    
    By Lemma~\ref{lm:hoeffdingvec} (Theorem 2 in \cite{abbasi2011}), we have that with probability at least $1 - \delta/H^2$, 
    \begin{equation}
        \|\theta_h - \widehat{\theta}_{k,h}\|_{\widehat{\Lambda}_{k,h}} \leq H\sqrt{d\log\left( \frac{H^2 + KH^4}{\delta} \right)} + \sqrt{\lambda}
    \end{equation}
    Thus, by uniform bound, we have the above inequality holds for any $h\in[H]$ with probability at least $1 - \delta/H$.
\end{proof}
\end{lemma}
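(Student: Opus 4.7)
The plan is to bound the estimation error via the self-normalized martingale inequality for vector-valued regression (Lemma~\ref{lm:hoeffdingvec}, i.e.\ Theorem~2 of \cite{abbasi2011}), after rewriting $\theta_h-\widehat{\theta}_{k,h}$ in a form that exposes a zero-mean noise sequence. I would fix a step $h\in[H]$ and abbreviate $A_i := \psi_{i,h}(s_{i,h},a_{i,h})$ and $y_i := (x_{i,h}-\widehat{V}_{i,h+1})^+(s_{i,h+1})$. Using the closed form of $\widehat{\theta}_{k,h}$ from Line~\ref{algline:calctheta} together with the linear-mixture identity $\langle\theta_h, A_i\rangle = [\Prob_h (x_{i,h}-\widehat{V}_{i,h+1})^+](s_{i,h},a_{i,h})$, an elementary manipulation yields
\begin{equation}
    \theta_h - \widehat{\theta}_{k,h} \;=\; \widehat{\Lambda}_{k,h}^{-1}\sum_{i=1}^{k-1} A_i\,\eta_i \;+\; \lambda\,\widehat{\Lambda}_{k,h}^{-1}\theta_h,
\end{equation}
where $\eta_i := \langle\theta_h, A_i\rangle - y_i$. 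The triangle inequality in the $\widehat{\Lambda}_{k,h}$-norm, together with $\widehat{\Lambda}_{k,h}\succeq\lambda I$ and $\|\theta_h\|_2\leq\sqrt{d}$, reduces the task to controlling the self-normalized quantity $\bigl\|\sum_{i<k} A_i\eta_i\bigr\|_{\widehat{\Lambda}_{k,h}^{-1}}$.

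The second step is to verify the hypotheses of Lemma~\ref{lm:hoeffdingvec}. I would check three items. \emph{(i) Adaptedness:} $A_i$ is $\mathcal{F}_{i,h}$-measurable because $X_{i,h}(s_{i,h},a_{i,h})$ in Eq.~\eqref{eq:Xkhsadef} is an $\arg\max$ over the finite grid $\Neps$ of a functional of $\widehat{V}_{i,h+1}$ and $\widehat{\Lambda}_{i,h}$, both of which are determined from episodes $1,\dots,i-1$, while $s_{i,h}$ and $a_{i,h}$ are observed before the transition at step $h$ of episode $i$ is realized. \emph{(ii) Zero-mean noise:} by the linear-mixture assumption, $\mathbb{E}[y_i\mid\mathcal{F}_{i,h}] = [\Prob_h(x_{i,h}-\widehat{V}_{i,h+1})^+](s_{i,h},a_{i,h}) = \langle\theta_h,A_i\rangle$, so $\mathbb{E}[\eta_i\mid\mathcal{F}_{i,h}]=0$; and since $x_{i,h},\widehat{V}_{i,h+1}\in[0,H]$, we have $|\eta_i|\leq H$, making $\eta_i$ an $H$-sub-Gaussian martingale difference. \emph{(iii) Feature boundedness:} $\|A_i\|_2\leq H$ follows from the standing assumption $\|\psi_f\|_2\leq 1$ for $f:\mathcal{S}\to[0,1]$ applied to the rescaled function $(x_{i,h}-\widehat{V}_{i,h+1})^+/H\in[0,1]$.

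The third step plugs these facts into Lemma~\ref{lm:hoeffdingvec} at confidence $\delta/H^2$, producing, uniformly over $k\in[K]$, a bound of the form $\bigl\|\sum_{i<k}A_i\eta_i\bigr\|_{\widehat{\Lambda}_{k,h}^{-1}} \leq H\sqrt{d\log\bigl((H^2+KH^4)/\delta\bigr)}$; adding the deterministic regularization term $\sqrt{\lambda}$ yields exactly $\widehat{\beta}$ from Eq.~\eqref{eq:hatbetadef}. A union bound over $h\in[H]$ then raises the failure probability to $\delta/H$ while preserving the same $\widehat{\beta}$, completing the proof. The main obstacle I anticipate is not the concentration inequality itself but the adaptedness bookkeeping in step~(i): because $\widehat{V}_{i,h+1}$ is built by backward induction from the estimators $\{\widehat{\theta}_{i,h'}\}_{h'>h}$, one must confirm that those upstream quantities use only data from episodes $1,\dots,i-1$ (which they do, by inspection of Algorithm~\ref{alg:ICVaR-L}), so that $A_i$ does not covertly depend on the very randomness $s_{i,h+1}$ that generates $\eta_i$; without this, the martingale property would fail and the whole reduction to Lemma~\ref{lm:hoeffdingvec} would collapse.
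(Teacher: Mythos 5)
Your proposal is correct and follows essentially the same route as the paper's proof: express $\theta_h-\widehat{\theta}_{k,h}$ via the closed form of the ridge estimator as a self-normalized noise term plus a regularization term, bound the former with Theorem~2 of \cite{abbasi2011} at level $\delta/H^2$ and the latter by $\sqrt{\lambda}$ using $\widehat{\Lambda}_{k,h}\succeq\lambda I$, then union bound over $h\in[H]$. Your extra care with the adaptedness of $A_i$ (that $\widehat{V}_{i,h+1}$ and $\widehat{\Lambda}_{i,h}$ depend only on episodes $1,\dots,i-1$) is exactly the bookkeeping the paper asserts implicitly when it declares $A_k$ to be $\mathcal{F}_{k,h}$-measurable and $\eta_k$ an $H$-sub-Gaussian martingale difference.
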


Combined with the concentration argument above, we can bound the deviation term of $\theta_h$ and $\widehat{\theta}_{k,h}$ with respect to the CVaR operator.
\begin{lemma}
\label{lm:concetrationCVaR}
For $\delta \in (0, 1)$, any $k \in[K]$ and any $h \in [H]$, we have that with probability at least $1 - \delta/H$,  the following holds:
\begin{equation}
    \left|[\C^{\alpha, \Neps}_{\widehat{\theta}_{k,h}}(\widehat{V}_{k,h})] (s, a) - [\C^{\alpha, \Neps}_{\theta_h}(\widehat{V}_{k,h})] (s, a) \right| \leq \frac{\widehat{\beta}}{\alpha}\|\psi_{k,h}(s, a)\|_{\widehat{\Lambda}_{k,h}^{-1}}.
\end{equation}
\begin{proof}
Apply the same definition of $[\C^{\alpha, x}_{\theta}(V)](s,a) := x - \frac{1}{\alpha}[\Prob(x-V)^+](s,a)$, we can write $[\C^{\alpha, \Neps}_{\theta}(V)](s,a) = \sup_{x\in \Neps} [\C^{\alpha, x}_{\theta}(V)](s,a)$. We have
\begin{equation}
\begin{aligned}
    &\left|[\C^{\alpha, \Neps}_{\theta_h}(\widehat{V}_{k,h+1})](s, a) - [\C^{\alpha, \Neps}_{\widehat{\theta}_{k,h}}(\widehat{V}_{k,h+1})](s, a)\right| \\
    = &\left|\sup_{y\in\Neps}[\C^{\alpha, y}_{\theta_h}(\widehat{V}_{k,h+1})](s, a) -\sup_{x\in\Neps}[\C^{\alpha, x}_{\widehat{\theta}_{k,h}}(\widehat{V}_{k,h+1})](s, a)\right| \\
    \leq &\sup_{y\in\Neps}\left|[\C^{\alpha, y}_{\theta_h}(\widehat{V}_{k,h+1})](s, a)-[\C^{\alpha, y}_{\widehat{\theta}_{k,h}}(\widehat{V}_{k,h+1})](s, a) 
    \right| \\ 
    = & \sup_{y\in\Neps}\left|y - \frac{1}{\alpha}\langle  \theta_h, \psi_{(y-\widehat{V}_{k,h+1})^+}(s,a) \rangle - y + \frac{1}{\alpha}\langle \widehat{\theta}_{k, h}, \psi_{(y-\widehat{V}_{k,h+1})^+}(s,a) \rangle \right| \\
    \leq & \frac{1}{\alpha} \|\widehat{\theta}_{k,h} - \theta_h\|_{\widehat{\Lambda}_{k,h}}
    \sup_{y\in\Neps}\|\psi_{(y - \widehat{V}_{k,h+1})^)}(s, a)\|_{\widehat{\Lambda}_{k,h}^{-1}},
\end{aligned}
\end{equation}
where the first inequality holds by the property of supremum, and the second inequality holds by triangle inequality.
Recall the definition of $X_{k,h}(s,a)$ and $\psi_{k,h}$ in Eq.~\ref{eq:Xkhsadef} and \ref{eq:psikhdef}. By Lemma~\ref{lm:concentration}, we have that with probability at least $1 - \delta/H$,
\begin{equation}
    \left|[\C^{\alpha, \Neps}_{\theta_h}(\widehat{V}_{k,h+1})](s,a) - [\C^{\alpha, \Neps}_{\widehat{\theta}_{k,h}}(\widehat{V}_{k,h+1})](s,a)\right| \leq \frac{1}{\alpha} \widehat{\beta}\|\psi_{k,h}(s,a)\|_{\widehat{\Lambda}_{k,h}^{-1}}.
\end{equation}
\end{proof}
\end{lemma}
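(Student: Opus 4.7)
The plan is to reduce the bound on the difference of approximated CVaR operators to a bound on the inner products appearing inside them, and then apply Cauchy--Schwarz under the weighted norm induced by $\widehat{\Lambda}_{k,h}$, together with Lemma~\ref{lm:concentration}. The key observation is that the approximated CVaR operator $[\C^{\alpha,\Neps}_\theta(V)](s,a) = \sup_{x\in\Neps}\{x - \tfrac{1}{\alpha}\langle \theta,\psi_{(x-V)^+}(s,a)\rangle\}$ is a pointwise supremum of affine functions of $\theta$, so the dependence of the operator on $\theta$ is well-controlled.

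First I would use the elementary identity $|\sup_x f(x) - \sup_x g(x)| \leq \sup_x |f(x) - g(x)|$ to push the absolute value inside the supremum over $x\in\Neps$. Inside the supremum, the terms $x$ cancel, leaving $\tfrac{1}{\alpha}|\langle \theta_h - \widehat{\theta}_{k,h},\, \psi_{(x-\widehat{V}_{k,h+1})^+}(s,a)\rangle|$. Then I would apply the weighted Cauchy--Schwarz inequality to obtain a bound of $\tfrac{1}{\alpha}\|\theta_h - \widehat{\theta}_{k,h}\|_{\widehat{\Lambda}_{k,h}} \cdot \|\psi_{(x-\widehat{V}_{k,h+1})^+}(s,a)\|_{\widehat{\Lambda}_{k,h}^{-1}}$, which is valid for every $x\in\Neps$.

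Next I would take the supremum over $x\in\Neps$. Since $\|\theta_h - \widehat{\theta}_{k,h}\|_{\widehat{\Lambda}_{k,h}}$ does not depend on $x$, only the second factor is affected, and by the definition of $X_{k,h}(s,a)$ in Eq.~\eqref{eq:Xkhsadef} and of $\psi_{k,h}(s,a)$ in Eq.~\eqref{eq:psikhdef}, the supremum equals $\|\psi_{k,h}(s,a)\|_{\widehat{\Lambda}_{k,h}^{-1}}$. Finally, invoking Lemma~\ref{lm:concentration} to bound $\|\theta_h - \widehat{\theta}_{k,h}\|_{\widehat{\Lambda}_{k,h}} \leq \widehat{\beta}$ with probability at least $1-\delta/H$ for all $k\in[K]$ simultaneously at the fixed $h$, I get the claimed inequality on the same event.

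I do not expect any serious obstacle here: the argument is essentially a reorganization of inequalities, and the only nontrivial ingredient (the concentration of $\widehat{\theta}_{k,h}$ in the $\widehat{\Lambda}_{k,h}$-norm ellipsoid) has already been established. The one place that requires a bit of care is the handling of the supremum: one must make sure that the statement is a pointwise inequality in $(s,a)$ that holds on a single high-probability event over $k$, rather than something requiring an additional union bound over $(s,a)\in\mathcal{S}\times\mathcal{A}$. This works because once the concentration event of Lemma~\ref{lm:concentration} holds, the bound above is deterministic in $(s,a)$ via the Cauchy--Schwarz step.
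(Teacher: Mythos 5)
Your proposal is correct and follows essentially the same route as the paper's proof: push the absolute value inside the supremum via $|\sup f - \sup g|\leq \sup|f-g|$, cancel the $x$ terms, apply weighted Cauchy--Schwarz to get $\frac{1}{\alpha}\|\theta_h-\widehat{\theta}_{k,h}\|_{\widehat{\Lambda}_{k,h}}\sup_{x\in\Neps}\|\psi_{(x-\widehat{V}_{k,h+1})^+}(s,a)\|_{\widehat{\Lambda}_{k,h}^{-1}}$, identify the supremum with $\|\psi_{k,h}(s,a)\|_{\widehat{\Lambda}_{k,h}^{-1}}$ via the definition of $X_{k,h}(s,a)$, and invoke Lemma~\ref{lm:concentration}. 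Your closing remark that the bound is deterministic in $(s,a)$ once the concentration event holds is a correct and worthwhile clarification that the paper leaves implicit.
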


\subsection{Optimism}\label{app:thm1proofopt}
We use upper confidence bound-based value iteration as in \cite{jin2020provably, zhou2021nearly} to calculate the optimistic value and Q-value functions $\widehat{V}_{k,h}$, $\widehat{Q}_{k,h}$, and construct the policy $\pi^k$ in a greedy manner. Then, we prove the optimism of $\widehat{V}_{k,h}$ below.
\begin{lemma}[Optimism]\label{lm:optimism}
For $\delta \in (0, 1]$, $s\in\mathcal{S}$, and any $k\in[K]$, $h\in[H]$, with probability at least $1 - \delta$, we have
\begin{equation}
    \widehat{V}_{k,h}(s) \geq V^*_h(s).
\end{equation}
\begin{proof}
We prove this argument by induction in $h$. For $h = H+1$, we have $\widehat{V}_{k,H+1}(s) = V^*_{H+1}(s) = 0$ for any $k \in [K]$ and $s \in \mathcal{S}$. For $h \in [H]$, assume that with probability at least $1 - (H - h)\delta / H$, $\widehat{V}_{k,h+1}(s) \geq V_{h+1}^*(s)$ for any $k \in [K]$ and $s\in\mathcal{S}$. Consider the case of $h$. For any $k \in [K]$ and $(s,a) \in \mathcal{S} \times \mathcal{A}$, we have that with probability at least $1 - \delta/H$,
\begin{equation}
\begin{aligned}
    &\widehat{Q}_{k,h}(s,a) - Q^*_h(s,a) \\
    = &[\C^{\alpha, \Neps}_{\widehat{\theta}_{k,h}}(\widehat{V}_{k,h+1})] (s,a) + 2\varepsilon +  \frac{\widehat{\beta}}{\alpha}\sup_{x\in\Neps}\|\psi_{(x-\widehat{V}_{k,h+1})^+}(s,a)\|_{\widehat{\Lambda}_{k,h}^{-1}}
     -[\C^{\alpha}_{\theta_h}(V^*_{h+1})](s,a) \\
    = & [\C^{\alpha, \Neps}_{\widehat{\theta}_{k,h}}(\widehat{V}_{k,h+1})](s,a) -[\C^{\alpha, \Neps}_{{\theta}_{h}}(\widehat{V}_{k,h+1})](s,a) + 2\varepsilon + \frac{\widehat{\beta}}{\alpha}\|\psi_{k,h}(s,a)\|_{\widehat{\Lambda}_{k,h}^{-1}}\\
    &+[\C^{\alpha, \Neps}_{{\theta}_{h}}(\widehat{V}_{k,h+1})](s,a)-[\C^{\alpha}_{\theta_h}(\widehat{V}_{k,h+1})](s,a)+[\C^{\alpha}_{\theta_h}(\widehat{V}_{k,h+1})](s,a)-[\C^{\alpha}_{\theta_h}(V^*_{h+1})](s,a) \\
    \geq &[\C^{\alpha}_{\theta_h}(\widehat{V}_{k,h+1})](s,a)-[\C^{\alpha}_{\theta_h}(V^*_{h+1})](s,a)
\end{aligned}
\end{equation}
where the inequality comes from Lemma~\ref{lm:errorNeps} and Lemma~\ref{lm:concetrationCVaR} which show that
\begin{equation}
 [\C^{\alpha, \Neps}_{{\theta}_{h}}(\widehat{V}_{k,h+1})](s,a)-[\C^{\alpha}_{\theta_h}(\widehat{V}_{k,h+1})](s,a) \geq -2\varepsilon
\end{equation}
and
\begin{equation}
    [\C^{\alpha, \Neps}_{\widehat{\theta}_{k,h}}(\widehat{V}_{k,h+1})](s,a) -[\C^{\alpha, \Neps}_{{\theta}_{h}}(\widehat{V}_{k,h+1})](s,a) \geq -\frac{\widehat{\beta}}{\alpha}\|\psi_{k,h}(s,a)\|_{\widehat{\Lambda}_{k,h}^{-1}}.
\end{equation}
Since $\widehat{V}_{k,h+1}(s') \geq V_{h+1}^*(s')$ for any $s'\in\mathcal{S}$ and $k \in [K]$ with probability at least $1 - (H - h)\delta / H$. Then by union bound, we have $\widehat{Q}_{k,h}(s,a) \geq Q^*_h(s,a)$ holds for any $k\in[K]$ and $(s,a)\in\mathcal{S}\times\mathcal{A}$ with probability at least $1 - (H+1-h)\delta/H$. Take the supremum on the left and right side for $a \in \mathcal{A}$, we have $\widehat{V}_{k,h}(s)\geq V_h^*(s)$ for any $k \in [K]$ and $s \in \mathcal{S}$ with high probability. This implies the case of $h$. By induction, we finish the proof.
\end{proof}
\end{lemma}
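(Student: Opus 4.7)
The plan is to proceed by backward induction on $h$, starting from $h = H+1$ and descending to $h = 1$. The base case is trivial since $\widehat{V}_{k,H+1}(s) = V^*_{H+1}(s) = 0$ for every $s \in \mathcal{S}$ by convention.

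For the inductive step, I would fix $h \in [H]$ and assume that $\widehat{V}_{k,h+1}(s') \geq V^*_{h+1}(s')$ for all $s' \in \mathcal{S}$ and all $k$. The goal is to show $\widehat{Q}_{k,h}(s,a) \geq Q^*_h(s,a)$ for every $(s,a)$, from which $\widehat{V}_{k,h}(s) \geq V^*_h(s)$ follows by taking $\max_a$ on both sides (note the clipping at $H$ causes no loss because $V^*_h(s) \leq H$ given the unit reward assumption). Recalling the optimality Bellman equation~\eqref{eq:optimalQV} and the definition of $\widehat{Q}_{k,h}$, the difference decomposes as
\begin{equation*}
\widehat{Q}_{k,h}(s,a) - Q^*_h(s,a) = \Bigl([\C^{\alpha,\Neps}_{\widehat{\theta}_{k,h}} \widehat{V}_{k,h+1}](s,a) - [\C^{\alpha}_{\theta_h} V^*_{h+1}](s,a)\Bigr) + 2\varepsilon + B_{k,h}(s,a).
\end{equation*}
I would split the bracketed term into three pieces by inserting $[\C^{\alpha,\Neps}_{\theta_h}\widehat{V}_{k,h+1}]$ and $[\C^{\alpha}_{\theta_h}\widehat{V}_{k,h+1}]$: the first piece is controlled by the concentration on CVaR (Lemma~\ref{lm:concetrationCVaR}) giving a lower bound of $-B_{k,h}(s,a)$ which cancels against the bonus, the second piece is controlled by the CVaR approximation error (Lemma~\ref{lm:errorNeps}) giving a lower bound of $-2\varepsilon$ which cancels against the additive $2\varepsilon$ term, and the third piece is nonnegative by monotonicity of the CVaR operator together with the inductive hypothesis $\widehat{V}_{k,h+1} \geq V^*_{h+1}$.

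The monotonicity step is the one small fact that needs to be verified: if $V \geq V'$ pointwise, then from the definition of CVaR as $\sup_{x}\{x - \alpha^{-1}\mathbb{E}[(x-V)^+]\}$, the integrand is non-increasing in $V$ so the supremum is non-decreasing, hence $[\C^\alpha_{\theta_h} \widehat{V}_{k,h+1}] \geq [\C^\alpha_{\theta_h} V^*_{h+1}]$. The concentration event required by Lemma~\ref{lm:concetrationCVaR} holds for a fixed $h$ with probability at least $1-\delta/H$, so after applying the union bound across all $h \in [H]$ the cumulative failure probability is at most $\delta$, which yields the stated high-probability guarantee.

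The main conceptual obstacle is ensuring that all three error sources, namely the supremum discretization, the parameter estimation, and the optimism with respect to $V^*$, align in sign so that each is absorbed by a corresponding nonnegative compensation term in the construction of $\widehat{Q}_{k,h}$. Once this bookkeeping is set up and monotonicity of CVaR is invoked, the induction closes cleanly and the union bound over the $H$ concentration events produces the $1-\delta$ probability guarantee as stated.
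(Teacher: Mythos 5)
Your proposal is correct and follows essentially the same route as the paper's proof: backward induction on $h$, the same three-term decomposition of $\widehat{Q}_{k,h}-Q^*_h$ controlled by Lemma~\ref{lm:concetrationCVaR}, Lemma~\ref{lm:errorNeps}, and the inductive hypothesis, followed by a union bound over $h$. You additionally spell out two details the paper leaves implicit --- the monotonicity of the CVaR operator and the harmlessness of the clipping at $H$ --- which is a welcome tightening but not a different argument.
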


\subsection{Regret Summation}
\label{app:thm1regret}
In this section, we provide the proof of the main theorem. Here we follow the definitions in \cite{du2023provably}.

For a fixed risk level $\alpha \in (0, 1]$, value function $V : \mathcal{S} \to \R$, and a transition distribution $\Prob(\cdot : s, a) \in \Delta(\mathcal{S})$, we denote the conditional probability of transitioning to $s'$ from $(s, a)$ conditioning on transitioning to the $\alpha$-portion tail states $s'$ as $\mathbb{Q}_\Prob^{\alpha, V}(s' \mid s, a)$. $\mathbb{Q}^\Prob_{\alpha, V}(s' \mid s, a)$ is a distorted transition distribution of $\Prob$ based on the lowest $\alpha$-portion values of $V(s')$, i.e.,
\begin{equation}
\label{eq:betadisdef}
    \operatorname{CVaR}^{\alpha}_{s' \sim \Prob(\cdot \mid s, a)}(V(s')) = \sum_{s' \in \mathcal{S}}\mathbb{Q}_\Prob^{\alpha, V}(s' \mid s, a)V(s').
\end{equation}
Moreover, let $[\mathbb{Q}_\Prob^{\alpha, V}f](s, a) := \sum_{s'\in\mathcal{S}}\mathbb{Q}_\Prob^{\alpha, V}(s' \mid s, a) f(s')$ for real valued function $f : \mathcal{S} \to \R$.

Then, we consider the visitation probability of the trajectories. Let $\{\pi^k\}_{k=1}^K$ be the polices produced by ICVaR-L in the Let $w_{k,h}(s,a)$ denote the probability of visiting $(s, a)$ at step $h$ of episode $k$, i.e. the probability of visiting $(s, a)$ under the transition probability of the MDP $\Prob_i(\cdot \mid \cdot, \cdot)$ with policy $\pi^k_i$ at step $i = 1, 2, \cdots, h - 1$, starting with state $s_{k,1}$ initially. Similarly, we use  $w^{\operatorname{CVaR}, \alpha, V^{\pi^k}}_{k,h}$ to denote the conditional probability of visiting $(s, a)$ at step $h$ of episode $k$ conditioning on the distorted transition probability $\mathbb{Q}^{\alpha, V^{\pi^k}_{i+1}}_{\Prob_i}(\cdot \mid \cdot, \cdot)$ and policy $\pi^k_i$ at step $i = 1, 2, \dots, h - 1$. 

Equipped with these notations, now we present our proof of the main theorem for ICVaR-RL with linear function approximation.
\begin{proof}[Proof of Theorem~\ref{thm:linearregret}]
First we perform the regret decomposition. The following holds with probability at least $1 - \delta$:
\begin{equation}
\label{eq:regretdecomp1}
\begin{aligned}
    &\widehat{V}_{k,1}(s_{k,1}) - V^{\pi^k}_1(s_{k,1}) \\
    = &[\C^{\alpha, \Neps}_{\widehat{\theta}_{k,1}}(\widehat{V}_{k,2})](s_{k,1},a_{k,1}) + 2\varepsilon + B_{k,1}(s_{k,1},a_{k,1}) - [\C^{\alpha}_{\theta_1}(V_2^{\pi^k})](s_{k,1},a_{k,1}) \\
    = &\underbrace{[\C^{\alpha, \Neps}_{\widehat{\theta}_{k,1}}(\widehat{V}_{k,2})](s_{k,1},a_{k,1}) - [\C^{\alpha, \Neps}_{{\theta}_{1}}(\widehat{V}_{k,2})](s_{k,1},a_{k,1})}_{I_1} \\
    &+ \underbrace{[\C^{\alpha, \Neps}_{{\theta}_{1}}(\widehat{V}_{k,2})](s_{k,1},a_{k,1}) - [\C^{\alpha}_{{\theta}_{1}}(\widehat{V}_{k,2})](s_{k,1},a_{k,1})}_{I_2} \\ 
    &+ \underbrace{[\C^{\alpha}_{{\theta}_{1}}(\widehat{V}_{k,2})](s_{k,1},a_{k,1}) - [\C^{\alpha}_{\theta_1}(V_2^{\pi^k})](s_{k,1},a_{k,1})}_{I_3} + 2\varepsilon + B_{k,1}(s_{k,1},a_{k,1}) \\
    \leq &2(2\varepsilon + B_{k,1}(s_{k,1},a_{k,1})) + [\mathbb{Q}^{\alpha, V_2^{\pi^k}}_{\Prob_1}(\widehat{V}_{k,2}-V_2^{\pi^k})](s_{k,1},a_{k,1})
\end{aligned}
\end{equation}
where the inequality holds by applying Lemma~\ref{lm:concetrationCVaR},\ref{lm:errorNeps}, and \ref{lm:dulm11} to bound $I_1,I_2$, and $ I_3$ respectively. By recursively apply the same method of Eq.~\ref{eq:regretdecomp1} to $\widehat{V}_{k,h}-V_h^{\pi^k}$ for $h = 2, 3, \cdots, H$, we have that with probability at least $1 - \delta$,
\begin{equation}
\begin{aligned}
    &\widehat{V}_{k,1}(s_{k,1}) - V^{\pi^k}_1(s_{k,1}) \\
    \leq &2(2\varepsilon + B_{k,1}(s_{k,1},a_{k,1}) ) + \sum_{s_2\in\mathcal{S}}\mathbb{Q}^{\alpha, V_2^{\pi^k}}_{\Prob_1}(s_2|s_{k,1},a_{k,1})  (\widehat{V}_{k,2}(s_2) - V_2^{\pi^k}(s_2)) \\
    \leq &2\sum_{h=1}^H \sum_{(s,a)\in\mathcal{S}\times\mathcal{A}}w_{k,h}^{\operatorname{CVaR}, \alpha, V^{\pi^k}}(s,a)(2\varepsilon+B_{k,h}(s,a)) \\
    \leq & \frac{4H\varepsilon}{\alpha^{H-1}} + \frac{2\widehat\beta}{\alpha}\sum_{h=1}^H \sum_{(s,a)\in\mathcal{S}\times\mathcal{A}}w_{k,h}^{\operatorname{CVaR}, \alpha, V^{\pi^k}}(s,a)b_{k,h}(s,a)
\end{aligned}
\end{equation}
where we denote $b_{k,h}(s,a):=\|\psi_{k,h}(s,a)\|_{\widehat{\Lambda}_{k,h}^{-1}} = \alpha B_{k,h}(s,a) / \widehat{\beta}$, then $b^2_{k,h}(s,a) \leq H$.
The first inequality is exactly Eq.~\ref{eq:regretdecomp1}, the second inequality holds by recursively apply the same method of Eq.\ref{eq:regretdecomp1} to $\widehat{V}_{k,h}-V_h^{\pi^k}$ for $h = 2, 3, \cdots, H$, and the last inequality holds by $w_{k,h}^{\operatorname{CVaR}, \alpha, V}(s,a) \leq \alpha^{-(H-1)}$ by Lemma~\ref{lm:dulm9}.
Then, we have that with probability at least $1 - \delta$,
\begin{equation}
\begin{aligned}\label{Eq_999}
    &\operatorname{Regret}(K) =  \sum_{k=1}^K V_1^{\pi^*}(s_{k,1})-V_1^{\pi^k}(s_1) \\
    \leq &\frac{4HK\varepsilon}{\alpha^{H-1}} + \frac{2\widehat\beta}{\alpha}\underbrace{\sum_{k=1}^K\sum_{h=1}^H\sum_{(s,a)\in\mathcal{S}\times\mathcal{A}}w_{k,h}^{\operatorname{CVaR}, \alpha, V^{\pi^k}}(s,a)b_{k,h}(s,a)}_I 
\end{aligned}
\end{equation}
We can bound term $I$ by similar approach in \cite{du2023provably}. By Cauchy inequality, we have
\begin{equation}
\label{eq:regretcauchy}
\begin{aligned}
    I \leq &\sqrt{\sum_{k=1}^K\sum_{h=1}^H\sum_{(s,a)\in\mathcal{S}\times\mathcal{A}}w_{k,h}^{\operatorname{CVaR}, \alpha, V^{\pi^k}}(s,a)b_{k,h}^2(s,a)}\sqrt{\sum_{k=1}^K\sum_{h=1}^H\sum_{(s,a)\in\mathcal{S}\times\mathcal{A}}w_{k,h}^{\operatorname{CVaR}, \alpha, V^{\pi^k}}(s,a)} \\
    = & \sqrt{\sum_{k=1}^K\sum_{h=1}^H\sum_{(s,a)\in\mathcal{S}\times\mathcal{A}}w_{k,h}^{\operatorname{CVaR}, \alpha, V^{\pi^k}}(s,a)b_{k,h}^2(s,a)}\sqrt{KH}
\end{aligned}
\end{equation}
where the equality holds due to $\sum_{(s,a)}w_{k,h}^{\operatorname{CVaR}, \alpha, V^{\pi^k}}(s,a) = 1$ by definition. By Lemma~\ref{lm:dulm9}, we have
\begin{equation}
\label{eq:regret2}
\begin{aligned}
    &\sqrt{\sum_{k=1}^K\sum_{h=1}^H\sum_{(s,a)\in\mathcal{S}
    \times\mathcal{A}}w_{k,h}^{\operatorname{CVaR}, \alpha, V_h^{\pi^k}}(s,a)b_{k,h}^2(s,a)} \\
    \leq &\sqrt{\frac{1}{\alpha^{H-1}}\sum_{k=1}^K\sum_{h=1}^H\sum_{(s,a)\in\mathcal{S}\times\mathcal{A}}w_{k,h}(s,a)b_{k,h}^2(s,a)} \\
    = &\sqrt{\frac{1}{\alpha^{H-1}}\sum_{k=1}^K\mathbb{E}_{(s_h,a_h)\sim d_{s_{k,1}}^{\pi^k}}\left[\sum_{h=1}^H b_{k,h}^2(s_h,a_h)\right]},
\end{aligned}
\end{equation}
where $d_{s_{k,1}}^{\pi^k}$ denotes the distribution of $(s, a)$ pair playing the MDP with initial state $s_{k,1}$ and policy $\pi^k$. Let $\mathcal{G}_k := \mathcal{F}_{k,H}$, where $\mathcal{F}_{k,H}$ is defined in Appendix~\ref{app:notation}. We have $\pi^k$ is $\mathcal{G}_{k-1}$ measurable. Set $T_k 
:= \sqrt{\sum_{h=1}^H b_{k,h}^2(s_{k,h},a_{k,h})}$, we have $|T_k|^2 \leq H^3$, and $T_k$ is $\mathcal{G}_k$ measurable.
According to Lemma~\ref{lm:zhanglm9}, we have the following holds with probability $1 - \delta$.
\begin{equation}
\label{eq:regretzhang}
\begin{aligned}
    \sum_{k=1}^K\mathbb{E}_{(s_h,a_h)\sim d_{s_{k,1}}^{\pi^k}}\left[\sum_{h=1}^H b_{k,h}^2(s_h,a_h)\right] \leq 8\sum_{k=1}^K\sum_{h=1}^H b_{k,h}^2(s_{k,h},a_{k,h}) + 4H^3\log\frac{4\log_2 K + 8}{\delta}
\end{aligned}
\end{equation}
Notice that we can apply the elliptical potential lemma (Lemma~\ref{lm:ellipticalpotential}) to the first term on the right hand side. Thus we can bound term $I$ in Eq.~\ref{Eq_999} with high probability. Combine the arguments above, we have that with probability at least $1 - 2\delta$,
\begin{equation}
\begin{aligned}
    \operatorname{Regret}(K) \leq &\frac{4HK\varepsilon}{\alpha^{H-1}} + \frac{2\widehat\beta}{\alpha}\sqrt{\sum_{k=1}^K\sum_{h=1}^H\sum_{(s,a)\in\mathcal{S}\times\mathcal{A}}w_{k,h}^{CVaR, \alpha, V_h^{\pi^k}}(s,a)b_{k,h}^2(s,a)}\sqrt{KH} \\
    \leq &\frac{4HK\varepsilon}{\alpha^{H-1}} + \frac{2\widehat\beta}{\sqrt{\alpha^{H+1}}}\sqrt{\sum_{k=1}^K\mathbb{E}_{(s_h,a_h)\sim d_{s_{k,1}}^{\pi^k}}\left[\sum_{h=1}^H b_{k,h}^2(s_h,a_h)\right]}\sqrt{KH} \\
    \leq &\frac{4HK\varepsilon}{\alpha^{H-1}} + \frac{2\widehat\beta}{\sqrt{\alpha^{H+1}}}\sqrt{8\sum_{k=1}^K\sum_{h=1}^H b_{k,h}^2(s_{k,h},a_{k,h}) + 4H^3\log\frac{4\log_2 K + 8}{\delta}}\sqrt{KH} \\
    \leq &\frac{4dH\sqrt{K}}{\sqrt{\alpha^{H+1}}} + \frac{2\widehat\beta}{\sqrt{\alpha^{H+1}}}\sqrt{8dH\log(K) + 4H^3\log\frac{4\log_2 K + 8}{\delta}}\sqrt{KH}
\end{aligned}
\end{equation}
where the first inequality is due to Eq.~\ref{Eq_999} and Eq.~\ref{eq:regretcauchy}, the second inequality is due to Eq.~\ref{eq:regret2}, the third inequality holds by Eq.~\ref{eq:regretzhang}, and the last inequality holds by $\varepsilon = dH\sqrt{\alpha^{H-3}/K}$ and elliptical potential lemma (Lemma~\ref{lm:ellipticalpotential}). 
\end{proof}

\newpage

\section{Space and Computation Complexities of Algorithm~\ref{alg:ICVaR-L}}
\label{app:computational}

In this section, we discuss the space and computation complexities of Algorithm~\ref{alg:ICVaR-L}. We consider the setting of ICVaR-RL with linear function approximation, where the size of $\mathcal{S}$ can be extremely large and even infinite.
We will show that the space and computation complexities of Algorithm~\ref{alg:ICVaR-L} are only polynomial in $d, H, K$ and $|\mathcal{A}|$. Noticing that $\varepsilon = dH\sqrt{\alpha^{H-3}/K}$ is given by Theorem~\ref{thm:linearregret}, we have $|\Neps| = \lfloor H/\varepsilon \rfloor \leq \sqrt{K/(\alpha^{H-3}d^2)} + 1$ is also polynomial in $d, H, K$. We will include the size of $\Neps$ into the complexities of Algorithm~\ref{alg:ICVaR-L}.

\subsection{Space Complexity}
Though in episode $k \in [K]$, we calculate the optimistic Q-value function $\widehat{Q}_{k,h}(s,a)$ for every $(s,a)$-pair in Line~\ref{algline:calcQ} of Algorithm~\ref{alg:ICVaR-L}, we only need to calculate the Q-value and value functions for the observed states $\{s_{k,h}\}_{h=1}^H$ to produce the exploration policies $\{\pi^k_h\}_{h=1}^H$ in episode $k$, and calculate the estimator $\widehat{{\theta}}_{k+1,h}$ for any $h \in [H]$ in episode $k$. Thus we need to store the covariance matrix $
\widehat{\Lambda}_{k,h}$, regression features $\psi_{(x-\widehat{V}_{k,h+1})^+}(s_{k,h},a)$ for any $x\in\Neps, a\in\mathcal{A}$ and value $(x_{k,h}-\widehat{V}_{k,h+1})^+(s_{k,h+1})$. The total space complexity is $O(d^2H + |\Neps||\mathcal{A}|HK)$.

\subsection{Computation Complexity}
By the above argument, we only need to calculate the optimistic value and Q-value functions for the observed states $\{s_{k,h}\}_{h=1}^H$ in episode $k$. We show that the total complexity is $O(d^2|\Neps||\mathcal{A}|H^2K^2)$ by analyzing the specific steps of Algorithm~\ref{alg:ICVaR-L} in two parts.

\subsubsection{Calculation of the Optimistic Value and Q-Value Functions}
We discuss the complexities of calculating optimistic value iteration steps (Lines~\ref{algline:startvi}-\ref{algline:endvi}) in this section.

First, we need to calculate $\widehat{Q}_{k,h}(s_{k,h},a)$ for every action $a\in\mathcal{A}$ to produce the exploration policy $\pi^k_h(s_{k,h})$ at step $h$ in episode $k$. In Line~\ref{algline:calcQ}, calculating the approximated CVaR operator $[\C_{\widehat{\theta}_{k,h}}^{\alpha, \Neps}\widehat{V}_{k,h+1}](s,a) = \sup_{x\in\Neps}\{x-\frac{1}{\alpha}\langle \widehat{\theta}_{k,h}, \psi_{(x-\widehat{V}_{k,h+1})^+}(s,a)\}$ costs $O(d|\Neps|)$ operations. Calculating $\psi_{(x-\widehat{V}_{k,h+1})^+}(s,a)$ costs $O(KH)$ operations since the number of non-zero elements of $\widehat{V}_{k,h+1}(\cdot)$ is at most $KH$. Computing the bonus term $B_{k,h}(s,a)$ needs $O(d^2|\Neps|)$ operations. Thus, calculating $\widehat{Q}_{k,h}(s_{k,h},a)$ for any $h \in [H]$ needs $O(d^2|\Neps||\mathcal{A}|H^2K)$ operations.

Since we have $\widehat{Q}_{k,h}(s_{k,h},a)$, we can calculate $\widehat{V}_{k,h}(s_{k,h})$ by $O(|\mathcal{A}|)$ operations in Line~\ref{algline:calcV}, and $\pi^k_h(s_{k,h})$ by $O(|\mathcal{A}|)$ operations in Line~\ref{algline:calcpi}. In all, computing the optimistic functions will cost $O(d^2|\Neps||\mathcal{A}|H^2K^2)$ operations.

\subsubsection{Calculation of the Parameter Estimators}
At step $h$ of episode $k$, we choose the specific value $x_{k,h}$ in Line~\ref{algline:startecalctheta}, which needs $O(d^2|\Neps|)$ operations. Then, Line~\ref{algline:calcLambda} takes $O(d^2)$ operations to calculate the covariance matrix $\widehat{\Lambda}_{k+1,h}$. In Line~\ref{algline:calctheta}, we can store the prefix sum $\sum_{i=1}^k\psi_{(x_{i,h}-\widehat{V}_{i,h+1})^+}(s_{i,h},a_{i,h})(x_{i,h}-\widehat{V}_{i,h+1})^+(s_{i,h+1})$ and calculate $\widehat{\theta}_{k+1,h}$ with $O(d^2)$ operations. Thus the total complexity for calculating the parameter estimators is $O(d^2|\Neps|HK)$.

\newpage

\section{Regret Lower Bound for ICVaR-RL with Linear Function Approximation}
\label{app:lowerbound}

In this section, we present the brief introduction to the idea of the lower bound instance and the complete proof of Theorem~\ref{thm:lowerboundformal}. The formal theorem for regret lower bound in ICVaR-RL with linear function approximation is presented below.

\begin{theorem}
\label{thm:lowerboundformal}
Let $H \geq 2$, $d \geq 2$, and an interger $n \in [H - 1]$. Then, for any algorithm, there exists an instance of Iterated CVaR RL under Assumption~\ref{alg:ICVaR-L}, such that the expected regret is lower bounded as follows:
\begin{equation}
    \mathbb{E}[\operatorname{Regret}(K)] \geq \Omega\left( d(H - n)\sqrt{\frac{K}{\alpha^{n}}} \right).
\end{equation}
\end{theorem}

First we briefly explain the the key idea of constructing the hard instance. Consider the action space as $\mathcal{A} = \{-1, 1\}^{d-1}$ and a parameter set $\mathcal{U} = \{-\Delta, \Delta\}^{d-1}$, where $\Delta$ is a small constant. The instance contains $n+3$ states with $n$ regular states $s_1, \cdots, s_n$ and three absorbing states $x_1, x_2, x_3$. Moreover, we uniformly choose a vector $\mu$ from $\mathcal{U}$. Set $\theta_h = (1, \mu^\top)$ for any $h \in [H]$. Then, we can generate the transition probabilities and reward function shown in Figure~\ref{fig:lowerboundinstance} by properly define the feature mapping.

\begin{figure}[htpb]
% [9]{r}{0.6\textwidth}
	\centering    
	\includegraphics[width=0.8\textwidth]{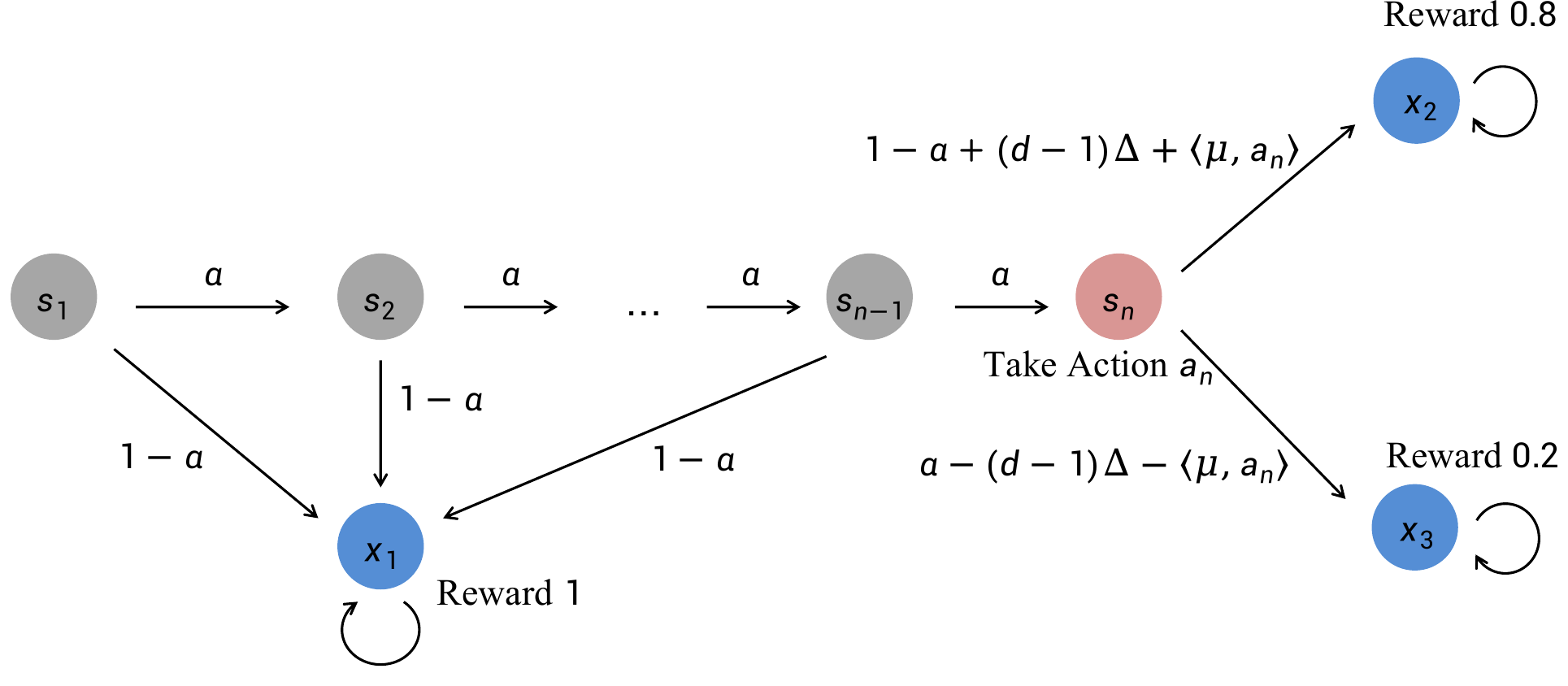}
    \caption{A hard-to-learn instance for Theorem~\ref{thm:lowerboundformal}.}
    \label{fig:lowerboundinstance}
\end{figure}

%The complete definition is detailed in Appendix~\ref{app:lowerbound}.

Intuitively, the structure of the instance in Firgure~\ref{fig:lowerboundinstance} is combined with a chain of regular states $s_1\to s_2\to \cdots \to s_n$ and a hard-to-learn bandit state $s_n \to \{x_2, x_3\}$ (inspired by the construction for tabular MDP in \cite{du2023provably}). With probability of $\alpha$, the agent can move from $s_i$ to $s_{i+1}$ for $i \in [n - 1]$. Since we consider the worst-$\alpha$-portion case under the Iterated CVaR criterion,  the CVaR-type value function of $s_i$ only depends on the state $s_{i+1}$ for $i \in [n - 1]$. 
At state $s_n$, there is a linear-type hard-to-learn bandit (inspired by the construction for the lower bound instance of linear bandits \cite{lattimore2020bandit}). By construction, the absorbing state $x_2$ is better than $x_3$. Hence, the best policy at $s_n$ is $a_n^* = \operatorname{sgn}(\mu) := (\operatorname{sgn}(\mu_1), \cdots, \operatorname{sgn}(\mu_{d-1}))^\top$. As a result, the agent needs to learn the positive and negative signs of every element of $\mu$ by reaching $s_n$ and pull the bandit.

\begin{proof}[Proof of Theorem~\ref{thm:lowerboundformal}]

We define the hard-to-learn instance (shown in Figure~\ref{fig:lowerboundinstance}), which is inspired by the lower-bound instances constructed in \cite{du2023provably, zhou2021nearly, lattimore2020bandit}. For given integers $d, H, K$, $n \in [H-1]$ and risk level $\alpha \in (0, 1]$, consider the action space as $\mathcal{A} = \{-1, 1\}^{d-1}$ and a parameter set $\mathcal{U} = \{-\Delta, \Delta\}^{d-1}$, where $\Delta$ is a constant to be determined. The instance contains $n+3$ states with $n$ regular states $s_1, \cdots, s_n$ and three absorbing states $x_1, x_2, x_3$. Moreover, we uniformly choose a $\mu$ from $\mathcal{U}$.

Then, we introduce the reward function of this instance. For any step $h \in [H]$, the reward function $r_h(s_i, a) = 0$ for any regular state $s_i$ with $i \in [n]$ and action $a \in \mathcal{A}$. The reward functions of absorbing states are $r_h(x_1, a) = 1$, $r_h(x_2, a) = 0.8$, and $r_h(x_3, a) = 0.2$ for any step $h \in [H]$ and action $a \in \mathcal{A}$. 

For the transition kernels, set $\theta_h = (1, \mu^\top)^\top$ for any $h \in [H]$. For any $i \in [n - 1]$ and action $a \in \mathcal{A}$, let $\phi(s_{i+1}, s_i, a) = (\alpha, 0, \cdots, 0)^\top$ and $\phi(x_1, s_i, a) = (1 - \alpha, 0, \cdots, 0)^\top$. Then the transition probabilities at regular state $s_i$ are $\Prob_i(s_{i+1} \mid s_i, a) = \alpha$ and $\Prob_i(x_1 \mid s_i, a) = 1- \alpha$ since we will only reach $s_i$ at step $h = i$. For any action $a_n \in \mathcal{A}$, let $\phi(x_2,  s_n, a_n) = (1 - \alpha + (d-1)\Delta, a_n^\top)^\top$ and $\phi(x_3, s_n, a_n) = (\alpha - (d-1)\Delta, a_n^\top)^\top$. Then, we have $\Prob_h(x_2 \mid s_n, a_n) = 1 - \alpha + (d-1)\Delta + \langle \mu, a_n \rangle$ and $\Prob_h(x_3 \mid s_n, a_n) = \alpha - (d-1)\Delta - \langle \mu, a_n \rangle$ for any $h \in [H]$.  For the absorbing states $x_i$ with $i \in \{1, 2, 3\}$, let $\phi(x_i, x_i, a) = (1, 0, \cdots, 0)^\top$ and $\phi(s, x_i ,a) = \mathbf{0}$ for $s \neq x_i$. Thus $\Prob_h(x_i \mid x_i, a) = 1$ for $i \in \{1, 2, 3\}$ and any $a \in \mathcal{A}$, $h \in [H]$.

In this instance, we have
\begin{equation}
    V_1^{\pi^*}(s_1) = \frac{H - n}{\alpha}\left( 0.2(\alpha - 2(d-1)\Delta) + 0.8(2(d-1)\Delta) \right)
\end{equation}
\begin{equation}
    V_1^\pi(s_1) = \frac{H - n}{\alpha}\left( 0.2(\alpha - (d-1)\Delta + \langle \mu, \pi_n(s_n)\rangle + 0.8 ((d-1)\Delta - \langle \mu, \pi_n(s_n)\rangle) \right)
\end{equation}
Thus we have
\begin{equation}
    V_1^{\pi^*}(s_1) - V_1^\pi(s_1) = \frac{1.2(H - n)\Delta}{\alpha}\sum_{i = 1}^{d - 1}\left(1 - I(\mu, \pi_n(s_n), i)\right),
\end{equation}
where $I(\mu, \pi_n(s_n), i) = \mathds{1}(\operatorname{sgn}(\mu_i)=\operatorname{sgn}(\pi_n(s_n)_i)).$ Then if Algorithm produces policy $\boldsymbol{\pi} = (\pi^k)_{k \in [K]}$ in $K$ episodes, we have
\begin{equation}
    \operatorname{Regret}(K) = \frac{1.2(H-n)\Delta}{\alpha}\sum_{i = 1}^{d-1}\left(\sum_{k = 1}^K 1 - I(\mu, \pi^k_n(s_n), i)\right)
\end{equation}
Since we uniformly choose $\mu$ from $\mathcal{U}$, we have
\begin{equation}
    \mathbb{E}[\operatorname{Regret}(K)] = \frac{1.2(H-n)\Delta}{\alpha}\sum_{i = 1}^{d-1}\frac{1}{|\mathcal{U}|}\sum_{\mu\in\mathcal{U}}\mathbb{E}_\mu\left[\left(\sum_{k = 1}^K 1 - I(\mu, \pi^k_n(s_n), i)\right)\right].
\end{equation}
Denote $\mathbb{E}_\mu$ be the conditional expectation on the fixed $\mu \in \mathcal{U}$. For fixed $i \in [d-1]$, we denote $\mu(i) := (\mu_1, \cdots, \mu_{i-1}, -\mu_i, \mu_{i+1}, \cdots, \mu_{d-1})$ which differs from $\mu$ at its $i$-th coordinate.

Assume $N(\mu, \boldsymbol{\pi}, i) := \sum_{k=1}^K (1 - I(\mu, \pi_n^k(s_n), i))$. By Pinsker's inequality (Exercise 14.4 and Eq.12, 14 in \cite{lattimore2020bandit}), we have the following lemma.
\begin{lemma}
\label{lm:KLdiffer}
    For fixed $i \in [d - 1]$, we have
    \begin{equation}
        \mathbb{E}_\mu[N(\mu, \boldsymbol{\pi}, i)] - \mathbb{E}_{\mu(i)}[N(\mu, \boldsymbol{\pi}, i)] \geq -\frac{K}{\sqrt{2}}\sqrt{\operatorname{KL}(\Prob_\mu || \Prob_{\mu(i)})},
    \end{equation}
    where $\Prob_\mu$ denotes the joint distribution over all possible reward sequences of length $K$ under the MDP parameterized by $\mu$.
\end{lemma}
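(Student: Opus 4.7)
The plan is to apply the standard data-processing route through total variation and Pinsker's inequality, viewing $N(\mu,\boldsymbol{\pi},i)$ as a bounded measurable function of the trajectory. The joint distributions $\Prob_\mu$ and $\Prob_{\mu(i)}$ are defined over the same sample space (sequences of $K$ trajectories of length $H$), and the policy $\boldsymbol{\pi}$ is common to both worlds since it is a (possibly randomized) function of the observed history. In particular, $\pi_n^k(s_n)$ is a measurable function of the history under either measure, so $N(\mu,\boldsymbol{\pi},i) = \sum_{k=1}^K (1 - I(\mu,\pi_n^k(s_n),i))$ is a single random variable that takes values in $[0,K]$ on both probability spaces.

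Next, I would invoke the variational characterization of total variation: for any random variable $X$ bounded in $[0,M]$ and any two measures $P,Q$ on the same space,
\begin{equation*}
    |\mathbb{E}_P[X] - \mathbb{E}_Q[X]| \;\leq\; M \cdot \operatorname{TV}(P, Q).
\end{equation*}
Applying this with $M = K$, $P = \Prob_\mu$, $Q = \Prob_{\mu(i)}$, and $X = N(\mu,\boldsymbol{\pi},i)$, and then using Pinsker's inequality $\operatorname{TV}(P,Q) \leq \sqrt{\operatorname{KL}(P\|Q)/2}$ (this is precisely the chain of inequalities the authors cite from Lattimore-Szepesvári), I obtain
\begin{equation*}
    \mathbb{E}_\mu[N(\mu,\boldsymbol{\pi},i)] - \mathbb{E}_{\mu(i)}[N(\mu,\boldsymbol{\pi},i)] \;\geq\; -K\cdot\operatorname{TV}(\Prob_\mu,\Prob_{\mu(i)}) \;\geq\; -\frac{K}{\sqrt{2}}\sqrt{\operatorname{KL}(\Prob_\mu\|\Prob_{\mu(i)})},
\end{equation*}
which is exactly the claimed bound.

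There is essentially no obstacle beyond making sure that $\Prob_\mu$ and $\Prob_{\mu(i)}$ are coupled to the same policy $\boldsymbol{\pi}$ so that $N(\mu,\boldsymbol{\pi},i)$ is an unambiguous random variable on the common trajectory space, and that its range is contained in $[0,K]$; both facts are immediate from the definition of $N$ and from the fact that the algorithm does not observe $\mu$ directly, only the samples whose laws are $\Prob_\mu$ or $\Prob_{\mu(i)}$. The entire content of the lemma is the two-line chain ``bounded-difference $\Rightarrow$ TV $\Rightarrow$ Pinsker,'' and no property specific to the ICVaR-RL construction is needed at this step; the construction-specific work is deferred to computing $\operatorname{KL}(\Prob_\mu\|\Prob_{\mu(i)})$ later in the lower bound derivation.
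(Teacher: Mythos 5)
Your proposal is correct and is precisely the argument the paper invokes: the paper's ``proof'' of this lemma is a one-line citation to the bounded-random-variable/total-variation inequality and Pinsker's inequality in Lattimore--Szepesv\'{a}ri, and your chain $|\mathbb{E}_P[X]-\mathbb{E}_Q[X]|\leq K\cdot\operatorname{TV}(P,Q)\leq \frac{K}{\sqrt{2}}\sqrt{\operatorname{KL}(P\|Q)}$ for $N(\mu,\boldsymbol{\pi},i)\in[0,K]$ is exactly that argument spelled out. Your observation that $N$ is a single well-defined random variable on the common trajectory space (because the algorithm's policy is a function of the observed history only) is the one point worth making explicit, and you make it.
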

Denote $\mu(i) := (\mu_1, \cdots, \mu_{i-1}, -\mu_i, \mu_{i+1}, \cdots, \mu_{d-1})$ which differs from $\mu$ at its $i$-th coordinate. Let $w(s_n)$ be the probability to reach $s_n$ in each episode. By construction, we have $w(s_n) = \alpha^{n-1}$. Denote $\operatorname{Ber}(p)$ as the Bernoulli distribution with parameter $P$. Let $\operatorname{Ber}_\mu := \operatorname{Ber}(\alpha - (d-1)\Delta - \langle \mu, \pi^k_n(s_n)\rangle)$. 
By definition of KL divergence, we have $\operatorname{KL}(\operatorname{Ber}(a) || \operatorname{Ber}(b)) \leq 2(a-b)^2/a$
\begin{equation}
    \mathbb{E}_\mu[\operatorname{KL}(\operatorname{Ber}_\mu || \operatorname{Ber}_{\mu(i)})] \leq \mathbb{E}_\mu\left[\frac{2\langle \mu - \mu(i), \pi^k_n(s_n) \rangle^2}{\langle \mu, \pi^k_n(s_n)\rangle + \alpha - (d-1)\Delta} \right] \leq \frac{8\Delta^2}{\alpha - 2(d-1)\Delta}
\end{equation}
Let $\Delta = c\sqrt{\frac{1}{\alpha^{n-2}K}}$ where $c$ is a small constant such that $2(d-1)\Delta < \alpha / 2$. Then, we have
\begin{equation}\label{eq:KL1}
\begin{aligned}
    \operatorname{KL}(\Prob_\mu || \Prob_{\mu(i)}) = \sum_{k = 1}^K w(s_n)\mathbb{E}_\mu\left[\operatorname{KL}\left(\operatorname{Ber}_\mu || \operatorname{Ber}_{\mu(i)}\right)\right] 
    \leq {16\alpha^{n-2}K\Delta^2}.
\end{aligned}
\end{equation}
Combined with above equations, we can bound the expectation of the regret as:
\begin{equation}
\begin{aligned}
    \mathbb{E}[\operatorname{Regret}(K)] = &\frac{1.2(H-n)\Delta}{\alpha}\frac{1}{2^{d-1}}\sum_{\mu \in \mathcal{U}}\sum_{i = 1}^{d-1}\mathbb{E}_\mu[N(\mu, \boldsymbol{\pi}, i)] \\
    = &\frac{1.2(H-n)\Delta}{\alpha}\frac{1}{2^{d}}\sum_{\mu \in \mathcal{U}}\sum_{i = 1}^{d-1}\mathbb{E}_\mu[N(\mu, \boldsymbol{\pi}, i)] + \mathbb{E}_{\mu(i)}[N(\mu(i), \boldsymbol{\pi}, i)] \\
    = &\frac{1.2(H-n)\Delta}{\alpha}\frac{1}{2^{d}}\sum_{\mu \in \mathcal{U}}\sum_{i = 1}^{d-1}K + \mathbb{E}_\mu[N(\mu, \boldsymbol{\pi}, i)] - \mathbb{E}_{\mu(i)}[N(\mu, \boldsymbol{\pi}, i)] \\
    \geq &\frac{1.2(H-n)\Delta}{\alpha}\frac{1}{2^{d}}\sum_{\mu \in \mathcal{U}}\sum_{i = 1}^{d-1} K - 2\sqrt{2}K\Delta\sqrt{\alpha^{n-2}K} \\
    = &\frac{0.6(H-n)\Delta}{\alpha}(d-1)(K - 2\sqrt{2}K\Delta\sqrt{\alpha^{n-2}K}),
\end{aligned}
\end{equation}
where the inequality holds by Lemma~\ref{lm:KLdiffer} and Eq.~\ref{eq:KL1}. Since $\Delta = c\sqrt{\frac{1}{\alpha^{n-2}K}}$, we have
\begin{equation}
    \mathbb{E}[\operatorname{Regret}(K)] \geq \Omega\left(d(H-n)\sqrt{\frac{K}{\alpha^n}}\right).
\end{equation}

\end{proof}

\newpage
\section{Algorithm for ICVaR-RL with general function approximation: ICVaR-G}
\label{app:gthm3alg}

Overall, in each episode, the algorithm first calculates $\widehat{\Prob}_{k,h}$ to estimate the transition kernel $\Prob_h$ by a least square problem in Line~\ref{algline:Probkh} and selects a confidence set $\widehat{\mathcal{P}}_{k,h}$ in Line~\ref{algline:Pkh}, such that $\Prob_h$ is likely to belong to $\widehat{\mathcal{P}}_{k,h}$ with high probability (as detailed in Lemma~\ref{lm:gconcentration} in Appendix~\ref{app:gthm3proofconcentration}). 
Subsequently, the algorithm calculates the optimistic value functions in Line~\ref{algline:gQkh},~\ref{algline:gVkh} based on the selected set $\widehat{\mathcal{P}}_{k,h}$ and chooses the exploration policy $\pi^k$ using a greedy approach in Line~\ref{algline:gpi}.

\begin{algorithm}[H]
    \caption{ICVaR-G}\label{alg:ICVaR-G}
    \begin{algorithmic}[1]
        \REQUIRE estimation radius $\widehat{\gamma}$.
        \STATE Initialize $\widehat{V}_{k,H+1} = 0$ for any $k \in [K]$.
        \FOR {episode $k=1,...,K$}
        \FOR {step $h=H,...,1$}
        \STATE \emph{\textcolor{blue}{~// Optimistic value iteration}}
        \STATE $\widehat{Q}_{k,h}(\cdot, \cdot)=r_{h}(\cdot, \cdot) + \sup_{\Prob' \in \widehat{\mathcal{P}}_{k,h}}[\C^{\alpha}_{\Prob'}(\widehat{V}_{k,h+1})](\cdot, \cdot)$ \label{algline:gQkh}
        \STATE $\widehat{V}_{k,h}(\cdot)\leftarrow\min\big\{\max_{a\in\mathcal{A}}\widehat{Q}_{k,h}(\cdot,a),H\big\}$\label{algline:gVkh}
        \STATE $\pi^k_h(\cdot) \leftarrow \arg\max_{a \in \mathcal{A}}\widehat{Q}_{k,h}(\cdot, a)$\label{algline:gpi}
        \ENDFOR
        \FOR {horizon $h = 1, \cdots, H$}
        \STATE Observe state $s_{k,h}$, play with policy $\pi^k_h$, $a_{k,h} \leftarrow \pi_h^k(s_{k,h})$.
        \STATE $\widehat{\Prob}_{k+1,h} \leftarrow \underset{\Prob' \in \mathcal{P}}{\arg\min}\sum_{i=1}^k \operatorname{Dist}_{i,h}(\Prob', \delta_{k,h})$ \emph{\textcolor{blue}{~// Estimate the transition kernel $\Prob_h$}}\label{algline:Probkh}
        \STATE $\widehat{\mathcal{P}}_{k+1,h} \leftarrow \left\{ \Prob' \in \mathcal{P} : \sum_{i = 1}^k  \operatorname{Dist}_{i,h}(\Prob', \widehat{\Prob}_{i,h}) \leq \widehat{\gamma}^2 \right\} $ \emph{\textcolor{blue}{~// Construct the confidence set}}\label{algline:Pkh}
        \ENDFOR
        \ENDFOR
    \end{algorithmic}
\end{algorithm}

\newpage

\section{Proof of Theorem~\ref{thm:generalregret}: Regret Upper Bound for Algorithm~\ref{alg:ICVaR-G}}
\label{app:gthm3proof}
In this section, we present the full proof of Theorem~\ref{thm:generalregret} for ICVaR-RL with general function approximation under Assumption~\ref{ass:general}. The proof consists of two parts. In Appendix~\ref{app:gthm3proofconcentration}, we establish the concentration argument which shows $\Prob_h \in \widehat{\mathcal{P}}_{k,h}$ with high probability in Lemma~\ref{lm:gconcentration}. With the concentration argument, we can prove the optimism of $\widehat{Q}_{k,h}$ and $\widehat{V}_{k,h}$ in Lemma~\ref{lm:goptimism}, and further bound the deviation term for general setting in Lemma~\ref{lm:gdeviation}. In Appendix~\ref{app:gthm3proofregretsum}, we present our novel elliptical potential lemma in Lemma~\ref{lm:gelliptical}, and prove Theorem~\ref{thm:generalregret} by regret decomposition and regret summation.

\subsection{Definition of Eluder Dimension and Covering Number}
\label{app:gthm3proofdef}

To introduce the eluder dimension, we first define the concept of $\varepsilon$-independence.
\begin{definition}[$\varepsilon$-dependence \cite{russoEluderDimensionSample2013}]
    For $\varepsilon > 0$ and function class $\mathcal{Z}$ whose elements are with domain $\mathcal{X}$, an element $x \in \mathcal{X}$ is $\varepsilon$-dependent on the set $\mathcal{X}_n := \{x_1, x_2, \cdots, x_n\}\subset \mathcal{X}$ with respect to $\mathcal{Z}$, if any pair of functions $z, z' \in \mathcal{Z}$ with $\sqrt{\sum_{i = 1}^n \left( z(x_i) - z'(x_i) \right)^2} \leq \varepsilon$ satisfies $z(x) - z'(x) \leq \varepsilon$.
    Otherwise, $x$ is $\varepsilon$-independent on $\mathcal{X}_n$ if it does not satisfy the condition.
\end{definition}

\begin{definition}[Eluder dimension \cite{russoEluderDimensionSample2013}]\label{def:eluder}
    For any $\varepsilon > 0$, and a function class $\mathcal{Z}$ whose elements are in domain $\mathcal{X}$, the Eluder dimension $\dim_E(\mathcal{Z}, \varepsilon)$ is defined as the length of the longest possible sequence of elements in $\mathcal{X}$ such that for some $\varepsilon' \geq \varepsilon$, every element is $\varepsilon'$-independent of its predecessors.
\end{definition}

Next we give the formal definition of the covering number. It is a widely used definition \cite{ayoub2020model, jin2020provably, fei2021risklinear}.
\begin{definition}[Covering Number]\label{def:covering}
For the function set $\mathcal{F}$ with norm $\|\cdot\|$ and a given positive constatn $\varepsilon > 0$, we can define the $\varepsilon$-net of $\mathcal{F}$ as $\mathcal{F}_\varepsilon$ such that for any $f \in \mathcal{F}$, we have $f' \in \mathcal{F}_\varepsilon$ satisfying $\|f - f' \| \leq \varepsilon$. The $\varepsilon$-covering number $N_C(\mathcal{F}, \|\cdot\|, \varepsilon)$ is the minimum size of the $\varepsilon$-net of $\mathcal{F}$.
\end{definition}

\subsection{Concentration Argument}
\label{app:gthm3proofconcentration}

In this section, we apply the techniques firstly proposed by \cite{russoEluderDimensionSample2013} and also used in \cite{ayoub2020model, fei2021risklinear} to establish the concentration argument, which shows that $\Prob_h$ is belong to our confidence set $\widehat{\mathcal{P}}_{k,h}$ with high probability. 

\begin{lemma}
\label{lm:gconcentration}
    We have that for $\delta \in (0, 1]$, with probability at least $1 - \delta$, $\Prob_h \in \widehat{\mathcal{P}}_{k,h}$ holds for any $k \in [K]$ and $h \in [H]$.
\begin{proof}
Firstly we fix $h \in [H]$. By definition of $D_{i,h}(\cdot, \cdot)$ and the delta distribution $\delta_{k,h}$, we have
\begin{equation}
    \widehat{\Prob}_{k,h} = \underset{\Prob' \in \mathcal{P}}{\arg\min}\sum_{i=1}^{k-1} \left((x_{i,h}-\widehat{V}_{i,h+1})^+(s_{i,h+1}) - [\Prob'(x_{i,h}-\widehat{V}_{i,h+1})^+](s_{i,h}, a_{i,h})\right)^2,
\end{equation}
and $\widehat{\mathcal{P}}_{k,h} = \left\{ \Prob' \in \mathcal{P} : \sum_{i=1}^{k-1} D_{i,h}(\Prob', \widehat{\Prob}_{k,h}) \leq \widehat{\gamma}^2 \right\}$. Let $X_{k,h} = (s_{k,h}, a_{k,h}, (x_{k,h} - \widehat{V}_{k,h+1})^+)$ and $Y_{k,h} = (x_{i,h}-\widehat{V}_{i,h+1})^+(s_{i,h+1})$. Then, we have that $X_{k,h}$ is $\mathcal{F}_{k,h}$ measurable and $Y_{k,h}$ is $\mathcal{F}_{k,h+1}$ measurable. Note that $\left\{ 
Y_{k,h} - z_{\Prob_h}(X_k) \right\}_{k}$ is $H$-sub-gaussian conditioning on $\{\mathcal{F}_{k,h}\}_{k}$, and $\mathbb{E}\left[Y_{k,h} - z_{\Prob_h}(X_{k,h}) \mid \mathcal{F}_{k,h}\right] = 0$. 

Moreover, by definition of $\widehat{\Prob}_{k,h}$ and function class $\mathcal{Z}$, we have
\begin{equation}
    z_{\widehat{\Prob}_{k,h}} = \underset{z_{\Prob'} \in \mathcal{Z}}{\arg\min}\sum_{i=1}^{k-1} \left(Y_{i,h} - z_{\Prob'}(X_{i,h})\right)^2.
\end{equation}
Let $\mathcal{Z}_{k,h}(\gamma) = \left\{z_{\Prob'} \in \mathcal{Z} : \sum_{i=1}^{k-1} \left( z_{\Prob'}(X_{i,h}) - z_{\widehat{\Prob}_{k,h}}(X_{i,h}) \right)^2 \leq \gamma^2 \right\}$. By Lemma~\ref{lm:ayoubthm6}, for any $\alpha > 0$, with probability at least $1 - \delta/H$, for all $k \in [K]$, we have $z_{\Prob_h} \in \mathcal{Z}_{k,h}(\gamma_k)$. Here 
\begin{equation}\label{eq:gammakdef}
    \gamma_k  =  \beta_k(\frac{\delta}{H}, \frac{H}{K}) = 8H^2\log(2H \cdot N\left(\mathcal{Z}, \|\cdot\|_\infty,\frac{H}{K}\right) / \delta) + 4\frac{k}{K}(H^2 + H^2\sqrt{\log(4k(k+1)/\delta)}),
\end{equation}
where $\beta_k$ is defined by Eq.~\ref{eq:betatdef} in Lemma~\ref{lm:ayoubthm6}, and $N_C(\mathcal{P}, \|\cdot\|_{\infty, 1}, 1/K)$ is the covering number of $\mathcal{Z}$ with norm $\|\cdot\|_{\infty}$ and covering radius $H/K$. Since $z_{\Prob_h} \in \mathcal{Z}_{k,h}(\mathcal{\gamma}_k)$, we have  $\Prob_h \in \left\{\Prob' \in \mathcal{P} : \sum_{i=1}^{k-1} \left( z_{\Prob'}(X_i) - z_{\widehat{\Prob}_{k,h}}(X_i) \right)^2 \leq \gamma_k^2\right\}$.

Moreover, we have
\begin{equation}
\begin{aligned}
    \|z_\Prob - z_{\Prob'}\|_\infty = &\sup_{(s, a, V) \in \mathcal{S}\times\mathcal{A}\times\mathcal{B}}\left| \sum_{s'\in\mathcal{S}}\Prob(s'\mid s, a)V(s') -  \sum_{s'\in\mathcal{S}}\Prob'(s'\mid s, a)V(s')\right| \\
    \leq &H \sup_{(s, a, V) \in \mathcal{S}\times\mathcal{A}\times\mathcal{B}}\left| \sum_{s'\in\mathcal{S}}\Prob(s'\mid s, a) -  \sum_{s'\in\mathcal{S}}\Prob'(s'\mid s, a)\right| \\
    \leq &H \sup_{(s, a, V) \in \mathcal{S}\times\mathcal{A}\times\mathcal{B}} \sum_{s'\in\mathcal{S}}\left|\Prob(s'\mid s, a) -  \Prob'(s'\mid s, a)\right| \\
    = &H\|\Prob - \Prob'\|_{\infty, 1},
\end{aligned}
\end{equation}
where the first inequality holds by $V(s') \in [0, H]$ for any $s' \in \mathcal{S}$, the second inequality holds by the triangle inequality, and the third equality is due to the definition of nor $\|\cdot\|_{\infty, 1}$.
Thus we have $N_C(\mathcal{Z}, \|\cdot\|_\infty, H/K) \leq N_C(\mathcal{P}, \|\cdot\|_{\infty, 1}, 1/K)$. Since 
\begin{equation}
    \widehat{\gamma} =4 H^2\left(2\log\left(\frac{2H\cdot N_C(\mathcal{P}, \|\cdot\|_{\infty, 1}, 1/K)}{\delta}\right) + 1 + \sqrt{\log(5K^2/\delta)}\right) \geq \gamma_k,
\end{equation}
we have $\Prob_h \in \widehat{\mathcal{P}}_{k,h}$ for any $k\in[K]$ with probability at least $1 - \delta/H$.

Finally, by union bound, we have $\Prob_h \in \widehat{\mathcal{P}}_{k,h}$ holds for any $(k,h) \in [K] \times [H]$ with probability at least $1 - \delta$.
\end{proof}
\end{lemma}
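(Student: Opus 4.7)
The plan is to reduce the statement to a standard vector-valued least-squares concentration bound for adaptive regression in a fixed function class, applied to the class $\mathcal{Z}$ of one-step expected-value functions induced by $\mathcal{P}$. First I would fix a single step $h \in [H]$ and plan to pay a factor of $H$ at the end via a union bound. For that fixed $h$, set $X_{k,h} = (s_{k,h}, a_{k,h}, (x_{k,h}(s_{k,h},a_{k,h}) - \widehat{V}_{k,h+1})^+) \in \mathcal{X}$ and the regression targets $Y_{k,h} = (x_{k,h}(s_{k,h},a_{k,h}) - \widehat{V}_{k,h+1})^+(s_{k,h+1})$. The key observation is that $X_{k,h}$ is $\mathcal{F}_{k,h}$-measurable while $s_{k,h+1}$ (and hence $Y_{k,h}$) is $\mathcal{F}_{k,h+1}$-measurable, so that $\xi_{k,h} := Y_{k,h} - z_{\Prob_h}(X_{k,h})$ is a martingale difference sequence with $|\xi_{k,h}| \leq H$, hence $H$-sub-Gaussian conditionally. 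By construction, the estimator $\widehat{\Prob}_{k,h}$ (Line~\ref{algline:Probkh}) is exactly the empirical risk minimizer in $\mathcal{Z}$ against the targets $\{Y_{i,h}\}_{i<k}$ under the distance $\operatorname{Dist}_{i,h}$.

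Next I would invoke a standard self-normalized concentration bound for nonlinear least squares in a function class (the Ayoub et al.\ / Russo–Van Roy style result recorded in the appendix as Lemma~\ref{lm:ayoubthm6}), which yields an explicit radius $\gamma_k$ such that, with probability at least $1-\delta/H$, simultaneously for all $k \in [K]$,
\begin{equation*}
\sum_{i=1}^{k-1}\bigl(z_{\Prob_h}(X_{i,h}) - z_{\widehat{\Prob}_{k,h}}(X_{i,h})\bigr)^2 \leq \gamma_k^2,
\end{equation*}
where $\gamma_k$ scales as $H^2\log(H N(\mathcal{Z},\|\cdot\|_\infty,H/K)/\delta)$ plus a vanishing miscovering term. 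Translated back via the definition of $\operatorname{Dist}$, this is precisely $\sum_{i=1}^{k-1} \operatorname{Dist}_{i,h}(\Prob_h, \widehat{\Prob}_{k,h}) \leq \gamma_k^2$, which places $\Prob_h$ in the confidence set $\widehat{\mathcal{P}}_{k,h}$ once $\widehat{\gamma} \geq \gamma_k$.

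To use the concentration result with the input-space covering number of $\mathcal{P}$ rather than of $\mathcal{Z}$, I would verify the Lipschitz bridge
\begin{equation*}
\|z_\Prob - z_{\Prob'}\|_\infty \;\leq\; H\,\|\Prob - \Prob'\|_{\infty,1},
\end{equation*}
which is immediate since each $V$ in the domain of $z$ is bounded by $H$ and the inner sum can be dominated by the total-variation style norm. This gives $N(\mathcal{Z},\|\cdot\|_\infty,H/K) \leq N(\mathcal{P},\|\cdot\|_{\infty,1},1/K)$, so plugging into $\gamma_k$ shows that the specific choice of $\widehat{\gamma}$ in Eq.~\eqref{eq:hatgammadef} dominates $\gamma_k$ uniformly in $k$. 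Finally, a union bound over $h \in [H]$ (absorbing the $1/H$ factor already built into the chosen $\gamma_k$) yields the claim with probability at least $1-\delta$.

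The main obstacle I anticipate is the measurability bookkeeping: the regression features $X_{k,h}$ depend on the optimistic value function $\widehat{V}_{k,h+1}$ and on $x_{k,h}(\cdot,\cdot)$, both of which are computed from the confidence set $\widehat{\mathcal{P}}_{k,h}$ itself. One must carefully check that $\widehat{V}_{k,h+1}$ and $x_{k,h}(s_{k,h},a_{k,h})$ are $\mathcal{F}_{k,h}$-measurable so that the martingale difference property holds, and that the covering argument for $\mathcal{Z}$ is taken over the \emph{full} function class $\mathcal{B}(\mathcal{S},[0,H])$ of third arguments (not only the data-dependent $(x_{k,h}-\widehat{V}_{k,h+1})^+$), which is exactly what the Lipschitz reduction above affords. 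Once these are in place, the remaining arithmetic comparing $\gamma_k$ to $\widehat{\gamma}$ is routine.
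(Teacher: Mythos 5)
Your proposal is correct and follows essentially the same route as the paper's proof: fix $h$, cast $\widehat{\Prob}_{k,h}$ as the empirical risk minimizer over $\mathcal{Z}$ with martingale-difference noise $Y_{k,h}-z_{\Prob_h}(X_{k,h})$, invoke the nonlinear least-squares confidence-set bound (Lemma~\ref{lm:ayoubthm6}) with radius $\gamma_k$, bridge $N(\mathcal{Z},\|\cdot\|_\infty,H/K)\leq N(\mathcal{P},\|\cdot\|_{\infty,1},1/K)$ via the $H$-Lipschitz bound, verify $\widehat{\gamma}\geq\gamma_k$, and union-bound over $h$. Your explicit attention to the $\mathcal{F}_{k,h}$-measurability of $\widehat{V}_{k,h+1}$ and $x_{k,h}$ is a point the paper states but does not elaborate, and is a welcome addition.
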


With the concentration property in Lemma~\ref{lm:gconcentration}, we can easily show the construction of $\widehat{V}_{k,h}$ and $\widehat{Q}_{k,h}$ is optimistic in Algorithm~\ref{alg:ICVaR-G}.

\begin{lemma}[Optimism]\label{lm:goptimism}
If the event in Lemma~\ref{lm:gconcentration} happens, we have
\begin{equation}
    \widehat{V}_{k,h}(s) \geq V^*_h(s), \quad \forall s \in \mathcal{S}.
\end{equation}
\begin{proof}
    Since the event in Lemma~\ref{lm:gconcentration} happens, we have $\Prob_h \in \widehat{\mathcal{P}}_{k,h}$ holds for any $k$ and $h$. Thus by the definition of $\widehat{Q}_{k,h}$ in Algorithm~\ref{alg:ICVaR-G},
    \begin{equation}
        \widehat{Q}_{k,h}(s,a) = r_h(s,a) + \sup_{\Prob \in \widehat{\mathcal{P}}_{k,h}} [\C^\alpha_\Prob \widehat{V}_{k,h+1}](s,a) \geq r_h(s,a) + [\C^\alpha_{\Prob_h} \widehat{V}_{k,h+1}](s,a).
    \end{equation}
    By similar argument of induction in Lemma~\ref{lm:optimism}, we can easily get the result.
\end{proof}
\end{lemma}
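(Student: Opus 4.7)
The plan is a standard backward induction on $h$, running from $h = H+1$ down to $h = 1$, conditioned on the good event from Lemma~\ref{lm:gconcentration} that $\Prob_h \in \widehat{\mathcal{P}}_{k,h}$ for every $(k,h)$. The base case is immediate since both $\widehat{V}_{k,H+1}(\cdot)$ and $V^*_{H+1}(\cdot)$ are identically zero by definition (see the initialization in Algorithm~\ref{alg:ICVaR-G} and the recursion in Eq.~\eqref{eq:QVdef}).

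For the inductive step, fix $h \in [H]$, fix a state-action pair $(s,a)$, and assume $\widehat{V}_{k,h+1}(s') \geq V^*_{h+1}(s')$ for every $s' \in \mathcal{S}$. First, because $\Prob_h \in \widehat{\mathcal{P}}_{k,h}$ on the good event, the supremum in Line~\ref{algline:gQkh} of Algorithm~\ref{alg:ICVaR-G} dominates the value at the true kernel, giving
\begin{equation*}
\widehat{Q}_{k,h}(s,a) = r_h(s,a) + \sup_{\Prob' \in \widehat{\mathcal{P}}_{k,h}} [\C^\alpha_{\Prob'} \widehat{V}_{k,h+1}](s,a) \;\geq\; r_h(s,a) + [\C^\alpha_{\Prob_h} \widehat{V}_{k,h+1}](s,a).
\end{equation*}
Next, I would invoke monotonicity of the CVaR operator in its value-function argument: if $V_1 \geq V_2$ pointwise, then $(x-V_1)^+ \leq (x-V_2)^+$ pointwise for every $x$, so $[\Prob_h (x-V_1)^+](s,a) \leq [\Prob_h (x-V_2)^+](s,a)$, and taking the supremum over $x \in \R$ in the variational definition \eqref{eq:cvardef} yields $[\C^\alpha_{\Prob_h} V_1](s,a) \geq [\C^\alpha_{\Prob_h} V_2](s,a)$. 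Applying this to $V_1 = \widehat{V}_{k,h+1}$ and $V_2 = V^*_{h+1}$ and combining with the Bellman optimality equation \eqref{eq:optimalQV} gives $\widehat{Q}_{k,h}(s,a) \geq r_h(s,a) + [\C^\alpha_{\Prob_h} V^*_{h+1}](s,a) = Q^*_h(s,a)$.

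Finally, I would pass from Q-values to values by taking maxima over $a \in \mathcal{A}$ and handling the truncation in Line~\ref{algline:gVkh}. Since $\widehat{Q}_{k,h}(s,a) \geq Q^*_h(s,a)$ for every $a$, we have $\max_a \widehat{Q}_{k,h}(s,a) \geq V^*_h(s)$. The clip at $H$ is harmless because $V^*_h(s) \leq H$ (rewards lie in $[0,1]$ and by the monotonicity argument above the CVaR of a function bounded by $H-h+1$ is itself bounded by $H-h+1 \leq H$), so $\widehat{V}_{k,h}(s) = \min\{\max_a \widehat{Q}_{k,h}(s,a), H\} \geq V^*_h(s)$, closing the induction. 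The only mild subtlety in this plan is verifying the monotonicity of $\C^\alpha_{\Prob}$ cleanly from the variational form in \eqref{eq:cvardef}; everything else is bookkeeping, and no new concentration or capacity arguments are needed beyond Lemma~\ref{lm:gconcentration}.
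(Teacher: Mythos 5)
Your proposal is correct and follows essentially the same route as the paper: on the good event the supremum over $\widehat{\mathcal{P}}_{k,h}$ dominates the true kernel, and backward induction in $h$ closes the argument via the Bellman optimality equation, exactly as the paper does by deferring to the induction in Lemma~\ref{lm:optimism}. Your only addition is to make explicit the monotonicity of $\C^\alpha_{\Prob}$ in its value-function argument (via $(x-V_1)^+ \leq (x-V_2)^+$ when $V_1 \geq V_2$), a step the paper leaves implicit; this is a correct and welcome clarification rather than a different approach.
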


The following lemma upper bounds the deviation term by $g_{k,h}(s,a)/\alpha$.
\begin{lemma}
\label{lm:gdeviation}
    If the event in Lemma~\ref{lm:gconcentration} happens,
    \begin{equation}
        0 \leq \sup_{\Prob' \in \widehat{\mathcal{P}}_{k,h}} [\C^\alpha_{\Prob'} \widehat{V}_{k,h+1}](s,a) - [\C^\alpha_{\Prob_h}\widehat{V}_{k,h+1}](s,a) \leq \frac{1}{\alpha}g_{k,h}(s,a)
    \end{equation}
\begin{proof}
    The left side holds trivially by the result of Lemma~\ref{lm:gconcentration}. We only need to prove the right side.
\begin{equation}
\begin{aligned}
&\sup_{\Prob' \in \widehat{\mathcal{P}}_{k,h}} [\C^\alpha_{\Prob'} \widehat{V}_{k,h+1}](s,a) - [\C^\alpha_{\Prob_h}\widehat{V}_{k,h+1}](s,a) \\
= &\sup_{x\in[0, H]}\left\{ x - \frac{1}{\alpha}\inf_{\Prob' \in \widehat{\mathcal{P}}_{k,h}}[\Prob'(x - \widehat{V}_{k,h+1})^+](s,a) \right\} - \sup_{x\in[0, H]}\left\{ x - \frac{1}{\alpha}[\Prob_h(x - \widehat{V}_{k,h+1})^+](s,a) \right\} \\
\leq &\frac{1}{\alpha}\sup_{x\in[0, H]}\left\{ -\inf_{\Prob' \in \widehat{\mathcal{P}}_{k,h}}[\Prob'(x - \widehat{V}_{k,h+1})^+](s,a) + [\Prob_h(x - \widehat{V}_{k,h+1})^+](s,a) \right\} \\
\leq &\frac{1}{\alpha}\sup_{x\in[0, H]}\left\{ \sup_{\Prob \in \widehat{\mathcal{P}}_{k,h}}[\Prob(x - \widehat{V}_{k,h+1})^+](s,a)-\inf_{\Prob \in \widehat{\mathcal{P}}_{k,h}}[\Prob(x - \widehat{V}_{k,h+1})^+](s,a) \right\} \\
= &\frac{1}{\alpha}\sup_{\Prob' \in \widehat{\mathcal{P}}_{k,h}}[\Prob'(x_{k,h}(s,a) - \widehat{V}_{k,h+1})^+](s,a)-\inf_{\Prob' \in \widehat{\mathcal{P}}_{k,h}}[\Prob'(x_{k,h}(s,a) - \widehat{V}_{k,h+1})^+](s,a) \\
= &\frac{1}{\alpha}g_{k,h}(s,a),
\end{aligned}
\end{equation}
where the first inequality holds by the property of supremum, and the second inequality holds by holds by $\Prob_h \in \widehat{\mathcal{P}}_{k,h}$ under the event happens in Lemma~\ref{lm:gconcentration}, and the rest equalities are due to the definition of $x_{k,h}(s,a)$ in Eq.~\ref{eq:xdef} and $g_{k,h}(s,a)$ in Eq.~\ref{eq:gdef}.
\end{proof}
\end{lemma}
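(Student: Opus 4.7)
The plan is to unfold both sides using the variational representation of the CVaR operator from Eq.~\eqref{eq:cvarwiththeta}, namely $[\C^\alpha_\Prob V](s,a) = \sup_{x\in[0,H]}\{x - \tfrac{1}{\alpha}[\Prob (x-V)^+](s,a)\}$, and then exploit the two consequences of the concentration event: that $\Prob_h \in \widehat{\mathcal{P}}_{k,h}$, and that $x_{k,h}(s,a)$ is defined in Eq.~\eqref{eq:xdef} as the maximizer of the diameter of $\widehat{\mathcal{P}}_{k,h}$ over $x \in [0,H]$.

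For the lower bound, I would simply note that under the event of Lemma~\ref{lm:gconcentration} we have $\Prob_h \in \widehat{\mathcal{P}}_{k,h}$, so the supremum $\sup_{\Prob' \in \widehat{\mathcal{P}}_{k,h}} [\C^\alpha_{\Prob'} \widehat{V}_{k,h+1}](s,a)$ is at least the value attained at $\Prob' = \Prob_h$. For the upper bound, the first step is to rewrite the two CVaR operators via their $\sup_{x\in[0,H]}$ representations and swap the outer supremum against $\Prob'$ with the inner supremum against $x$, which is legal because the $x$-supremum is independent of $\Prob'$; this yields $\sup_{x \in [0,H]} \{x - \tfrac{1}{\alpha}\inf_{\Prob'\in\widehat{\mathcal{P}}_{k,h}}[\Prob'(x-\widehat{V}_{k,h+1})^+](s,a)\}$ for the first term. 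Then I would apply the elementary inequality $\sup_x f(x) - \sup_x g(x) \leq \sup_x \{f(x) - g(x)\}$ to pull the $x$-maximization outside the difference, so that the $x$-independent terms cancel and only the difference of $\tfrac{1}{\alpha}$-scaled expectation-type terms remains.

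The remaining step is to replace the $\Prob_h$ expectation with a supremum over $\widehat{\mathcal{P}}_{k,h}$, which is again justified by $\Prob_h \in \widehat{\mathcal{P}}_{k,h}$, producing $\tfrac{1}{\alpha}\sup_{x\in[0,H]}\{\sup_{\Prob\in\widehat{\mathcal{P}}_{k,h}}[\Prob(x-\widehat{V}_{k,h+1})^+](s,a) - \inf_{\Prob\in\widehat{\mathcal{P}}_{k,h}}[\Prob(x-\widehat{V}_{k,h+1})^+](s,a)\}$. By definition of $z_{\Prob'}$ in Eq.~\eqref{eq:Zdef} and of $x_{k,h}(s,a)$ in Eq.~\eqref{eq:xdef}, this outer $\sup_{x\in[0,H]}$ is exactly attained at $x = x_{k,h}(s,a)$, which gives $\tfrac{1}{\alpha} g_{k,h}(s,a)$ as desired.

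I do not expect any genuine obstacle here; the lemma is essentially a one-sided triangle inequality for CVaR combined with the defining optimality of $x_{k,h}(s,a)$. The only subtle point to be careful about is the direction of the inequality when commuting sups and infs: one must ensure that we are using $\sup_x f - \sup_x g \leq \sup_x(f-g)$ (true in general) rather than the reverse, which is why we upper-bound the original difference (with $\Prob_h$ fixed on the subtracted side) rather than trying to write it as an equality.
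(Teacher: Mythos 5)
Your proposal is correct and follows essentially the same route as the paper's proof: unfold both CVaR terms via the variational $\sup_{x\in[0,H]}$ representation, commute the supremum over $\Prob'$ with the one over $x$ (turning it into an infimum under the minus sign), apply $\sup_x f - \sup_x g \leq \sup_x(f-g)$, upper-bound the $\Prob_h$ term by the supremum over $\widehat{\mathcal{P}}_{k,h}$ using $\Prob_h\in\widehat{\mathcal{P}}_{k,h}$, and identify the resulting outer supremum with $g_{k,h}(s,a)$ via the defining optimality of $x_{k,h}(s,a)$ in Eq.~\eqref{eq:xdef}. No gaps.
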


\subsection{Regret Summation}
\label{app:gthm3proofregretsum}
In this section, we firstly propose a refined elliptical potential lemma for ICVaR-RL with general function approximation. Then, we apply the similar methods in the proof of linear setting to get the regret upper bound.

Noticing that \cite{russo2014learning} presents a similar elliptical potential lemma (Lemma 5 in \cite{russo2014learning}) used in \cite{ayoub2020model, fei2021risklinear} which shows that $\sum_{k=1}^K \sum_{h=1}^H g_{k,h}(s_{k,h},a_{k,h}) = O(\sqrt{K})$ with respect to the term of $K$. Inspired by this version of elliptical potential lemma, our Lemma~\ref{lm:gelliptical} is a refined version which gives a sharper result.
\begin{lemma}[Elliptical potential lemma for general function approximation]
\label{lm:gelliptical}
We provide the elliptical potential lemma for general function approximation. 
We have
\begin{equation}
    \sum_{k=1}^K \sum_{h=1}^H g^2_{k,h}(s_{k,h},a_{k,h}) \leq H + \dim_E(\mathcal{Z},  1/\sqrt{K})H^3 + 4\widehat{\gamma}\dim_E(\mathcal{Z}, 1/\sqrt{K})H(\log(K)+1)
\end{equation}
\begin{proof}
Our proof is inspired by the proof framework of Lemma 5 in \cite{russo2014learning}.
First we recall the definition of $X_{k,h} \in \mathcal{X}$ in the proof of Lemma~\ref{lm:gconcentration}, i.e, $X_{k,h} := (s_{k,h}, a_{k,h}, (x_{k,h}(s_{k,h}, a_{k,h}) - \widehat{V}_{k,h+1})^+)$. 
For simplicity, let $G_{k,h}:=g_{k,h}(s_{k,h},a_{k,h}) = \sup_{\Prob' \in \widehat{\mathcal{P}}_{k,h}}z_{\Prob'}(X_{k,h}) - \inf_{\Prob' \in \widehat{\mathcal{P}}_{k,h}}z_{\Prob'}(X_{k,h})$. Then for fixed $h \in [H]$, we know $g_{k,h}(s_{k,h}, a_{k,h}) \leq H$ since $0 \leq z_\Prob(X_{k,h}) \leq H$ for any probability kernel $\Prob \in \mathcal{P}$. Then, we can reorder the sequence $(G_{1,h}, \cdots, G_{K,h}) \to (G_{j_1, h}, \cdots, G_{j_K, h})$ such that $G_{j_1,h} \geq G_{j_2,h} \geq \cdots G_{j_K,h}$. Then, we have
\begin{equation}
\label{eq:gsumg3}
    \sum_{k=1}^K G_{k,h}^2 = \sum_{k=1}^K G_{j_k,h}^2 = \sum_{k=1}^m G_{j_k,h}^2\cdot\mathds{1}\{G_{j_k,h} \geq K^{-1/2}\} + \sum_{k=m+1}^K G_{j_k,h}^2\cdot\mathds{1}\{G_{j_k,h} < K^{-1/2}\}
\end{equation}
for some $m \in [K]$. Since the second term is less than $1$ trivially, we only consider the first term. Then, we fix $t \in [m]$ and let $s = G_{j_t, h}$ and we have
\begin{equation}
    \sum_{i=1}^K \mathds{1}(G_{i,h} \geq s) \geq t
\end{equation}
By Lemma~\ref{lm:russoprop8}, we have
\begin{equation}\label{eq:gsumg1}
    t \leq \sum_{i=1}^K \mathds{1}(G_{i,h}\geq s) \leq \dim_E(\mathcal{Z}, s)\left(\frac{4\widehat{\gamma}}{s^2} + 1
    \right).
\end{equation}
For simplicity, we denote $d_E(\mathcal{Z}) := \dim_E(\mathcal{Z}, K^{-1/2})$. Since $t \in [m]$, we have $G_{j_t,h} = s \geq K^{-1/2}$, which implies $\dim_E(\mathcal{Z}, s) \leq d_E(\mathcal{Z})$. By Eq.~\ref{eq:gsumg1}, we have $s = G_{j_t,h} \leq \sqrt{(4\widehat{\gamma}d_E(\mathcal{Z}))/(t-d_E(\mathcal{Z}))}$. Notice that this property holds for every fixed $t \in [m]$. Combined with $G_{k,h} \leq H$, we have
\begin{equation}
\begin{aligned}
\label{eq:gsumg2}
    \sum_{k=1}^K G_{k,h}^2 \leq &1 + \sum_{k=1}^m G_{i_k,h}^2\cdot\mathds{1}\{G_{j_k,h} \geq K^{-1/2}\} \\
    \leq &1 + d_E(\mathcal{Z})H^2 + \sum_{k=d_E(\mathcal{Z})+1}^K \frac{4\widehat{\gamma}d_E(\mathcal{Z})}{k - d_E(\mathcal{Z})} \\
    \leq &1 + d_E(\mathcal{Z})H^2 + 4\widehat{\gamma}d_E(\mathcal{Z})(\log(K)+1),
\end{aligned}
\end{equation}
where the first inequality is due to Eq.~\ref{eq:gsumg3}, the second inequality holds by $G_{j_t,h} \leq \sqrt{(4\widehat{\gamma}d_E(\mathcal{Z}))/(t-d_E(\mathcal{Z}))}$ for any $t \in [m]$ and $G_{k,h} \leq H$, and the last inequality is due to the property of harmonic series.
Sum over Eq.~\ref{eq:gsumg2} for $h\in[H]$, we get the result.
\end{proof}
\end{lemma}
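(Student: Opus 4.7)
The plan is to prove the lemma by fixing a step $h$, reordering the sequence of diameters $G_{k,h} := g_{k,h}(s_{k,h},a_{k,h})$ in decreasing order, and then splitting the sum according to a threshold at $1/\sqrt{K}$. The small-diameter tail contributes at most $1$ trivially, so the whole task reduces to bounding the sum of the large ones. The key bridge to the eluder dimension will be an auxiliary lemma (cited as Lemma~\ref{lm:russoprop8}) that translates ``many points on which functions in $\widehat{\mathcal{P}}_{k,h}$ disagree by at least $s$'' into a bound of the form $\#\{i : G_{i,h}\geq s\} \leq \dim_E(\mathcal{Z}, s)\bigl(\tfrac{4\widehat\gamma}{s^2}+1\bigr)$. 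The point is that whenever the diameter at $X_{k,h}$ is large, the two extremizing transitions in $\widehat{\mathcal{P}}_{k,h}$ must be far apart on $X_{k,h}$, yet their squared disagreement aggregated over past observations is capped by $\widehat\gamma^2$ by the definition of the confidence set.

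Concretely, first I would relabel $(G_{1,h},\dots,G_{K,h})$ as $(G_{j_1,h},\dots,G_{j_K,h})$ in weakly decreasing order and write
\begin{equation}
\sum_{k=1}^K G_{k,h}^2 \;=\; \sum_{k=1}^{m} G_{j_k,h}^2 \mathds{1}\{G_{j_k,h}\geq K^{-1/2}\} + \sum_{k=m+1}^{K} G_{j_k,h}^2 \mathds{1}\{G_{j_k,h}< K^{-1/2}\},
\end{equation}
where $m$ is the number of indices above the threshold. The second sum is bounded by $K\cdot K^{-1} = 1$. For the first sum, fix $t\leq m$, set $s=G_{j_t,h}\geq K^{-1/2}$, and observe that at least $t$ indices satisfy $G_{i,h}\geq s$; applying Lemma~\ref{lm:russoprop8} at scale $s$ and solving for $s$ yields $G_{j_t,h}^2 \leq 4\widehat\gamma d_E/(t-d_E)$ for $t>d_E$, where $d_E = \dim_E(\mathcal{Z}, K^{-1/2})$ and I use monotonicity of the eluder dimension in $s$ to replace $\dim_E(\mathcal{Z},s)$ by $d_E$.

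I would then bound the first $d_E$ terms by $d_E H^2$ using the trivial bound $G_{k,h}\leq H$, and bound the remaining terms by the harmonic sum
\begin{equation}
\sum_{t=d_E+1}^{K} \frac{4\widehat\gamma d_E}{t-d_E} \;\leq\; 4\widehat\gamma d_E(\log K + 1).
\end{equation}
Combining gives $\sum_{k=1}^K G_{k,h}^2 \leq 1 + d_E H^2 + 4\widehat\gamma d_E(\log K +1)$ for fixed $h$; summing over $h\in[H]$ produces the claimed $H + d_E H^3 + 4\widehat\gamma d_E H(\log K +1)$.

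The main obstacle, and the step worth being careful about, is justifying the step that pulls the eluder dimension out. One has to verify that the ambient function class for the argument is exactly $\mathcal{Z}$ with the domain tuple $X_{k,h}\in\mathcal{X}$, that the diameter is witnessed by two elements of $\mathcal{Z}$ (obtained as $z_{\Prob}, z_{\Prob'}$ with $\Prob,\Prob'\in\widehat{\mathcal{P}}_{k,h}$), and that the cumulative disagreement constraint $\sum_{i<k}(z_\Prob(X_{i,h})-z_{\Prob'}(X_{i,h}))^2\leq 4\widehat\gamma^2$ follows from the triangle inequality applied to the two defining inequalities $\sum_{i<k}\operatorname{Dist}_{i,h}(\Prob,\widehat\Prob_{i,h})\leq \widehat\gamma^2$ and the analogous bound for $\Prob'$. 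Once that bookkeeping is in place, the rest is the harmonic-sum calculation above; this is exactly where the refinement from $O(\sqrt{K})$ to $O(\log K)$ comes from, since squaring $G_{j_t,h}$ turns the $1/\sqrt{t-d_E}$ tail into a $1/(t-d_E)$ tail.
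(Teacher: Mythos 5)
Your proposal is correct and follows essentially the same route as the paper's proof: the same reordering of the diameters, the same $1/\sqrt{K}$ threshold split, the same invocation of Lemma~\ref{lm:russoprop8} to obtain $G_{j_t,h}^2 \leq 4\widehat{\gamma}d_E/(t-d_E)$, and the same harmonic-sum calculation. The extra bookkeeping you flag about why the eluder-dimension lemma applies (the triangle-inequality argument on the confidence-set constraint) is exactly the content already packaged inside Lemma~\ref{lm:russoprop8}, so nothing further is needed.
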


Combined by this refined elliptical potential lemma, we can prove the main theorem of ICVaR-RL with general function approximation.

\begin{proof}[Proof of Theorem~\ref{thm:generalregret}]
    This proof is similar to the proof of Theorem~\ref{thm:linearregret} with tiny adaption.
    Firstly, by standard regret decomposition method, we have that with probability at least $1 - \delta$, the event in Lemma~\ref{lm:gconcentration} happens and
    \begin{equation*}
    \begin{aligned}
        \widehat{V}_{k,1}(s_{k,1}) - V^{\pi^k}_1(s_{k,1}) =  &\sup_{\Prob' \in \widehat{\mathcal{P}}_{k,h}}[\C_{\Prob'}^\alpha(\widehat{V}_{k,2})](s_{k,1},a_{k,1}) - [\C_{\Prob_1}^\alpha(V^{\pi^k}_{2})](s_{k,1},a_{k,1}) \\
        = &\sup_{\Prob' \in \widehat{\mathcal{P}}_{k,h}}[\C_{\Prob'}^\alpha(\widehat{V}_{k,2})](s_{k,1},a_{k,1}) - [\C_{\Prob_1}^\alpha(\widehat{V}_{k,2})](s_{k,1},a_{k,1}) \\ & + [\C_{\Prob_1}^\alpha(\widehat{V}_{k,2})](s_{k,1},a_{k,1}) - [\C_{\Prob_1}^\alpha(V^{\pi^k}_{2})](s_{k,1},a_{k,1}) \\
        \leq &\frac{1}{\alpha}g_{k,1}(s_{k,1},a_{k,1}) + [\mathbb{Q}_{\Prob_1}^{\alpha, V^{\pi^k}_2}(\widehat{V}_{k,2} - V^{\pi^k}_2)](s_{k,1},a_{k,1}),
    \end{aligned}
    \end{equation*}
    where the inequality holds by Lemma~\ref{lm:gdeviation} and Lemma~\ref{lm:dulm11}. Here $\mathbb{Q}_\Prob^{\alpha, V}$ is defined above in Eq.\ref{eq:betadisdef}. Next we use the techniques of the proof in Section~\ref{app:thm1regret} to bound the regret. Specifically, we have
    \begin{equation}
        \widehat{V}_{k,1}(s_{k,1}) - V^{\pi^k}_1(s_{k,1}) \leq \frac{1}{\alpha}\sum_{h=1}^H \sum_{(s,a)\in\mathcal{S}\times\mathcal{A}}w_{k,h}^{\operatorname{CVaR}, \alpha, V^{\pi^k}}(s,a)g_{k,h}(s,a).
    \end{equation}
    This implies that the regret of the algorithm satisfies
    \begin{equation}
    \begin{aligned}
        \operatorname{Regret}(K) = &\sum_{k=1}^K V^*_1(s_{k,1}) - V^{\pi^k}_1(s_{k,1}) \leq \sum_{k=1}^K \widehat{V}_{k,1}(s_{k,1}) - V^{\pi^k}_1(s_{k,1}) \\
        &\leq \frac{1}{\alpha}\sum_{k=1}^K\sum_{h=1}^H\sum_{(s,a)\in\mathcal{S}\times\mathcal{A}}w_{k,h}^{\operatorname{CVaR}, \alpha, V^{\pi^k}}(s,a)g_{k,h}(s,a)
    \end{aligned}
    \end{equation}
    with probability at least $1 - 2\delta$. Here $w_{k,h}^{\operatorname{CVaR}, \alpha, V^{\pi^k}}$ is defined in Appendix~\ref{app:thm1regret}. By Cauchy inequality, we have
    \begin{equation}
    \label{eq:gregretcauchy}
    \begin{aligned}
        &\operatorname{Regret}(K) \\
        \leq &\frac{1}{\alpha}\sqrt{\sum_{k=1}^K\sum_{h=1}^H\sum_{(s,a)\in\mathcal{S}\times\mathcal{A}}w_{k,h}^{\operatorname{CVaR}, \alpha, V^{\pi^k}}(s,a)g_{k,h}^2(s,a)}\sqrt{\sum_{k=1}^K\sum_{h=1}^H\sum_{(s,a)\in\mathcal{S}\times\mathcal{A}}w_{k,h}^{\operatorname{CVaR}, \alpha, V^{\pi^k}}(s,a)} \\
        = & \frac{1}{\alpha}\sqrt{\sum_{k=1}^K\sum_{h=1}^H\sum_{(s,a)\in\mathcal{S}\times\mathcal{A}}w_{k,h}^{\operatorname{CVaR}, \alpha, V^{\pi^k}}(s,a)g_{k,h}^2(s,a)}\sqrt{KH},
    \end{aligned}
    \end{equation}
    where the equality holds due to $\sum_{(s,a)}w_{k,h}^{\operatorname{CVaR}, \alpha, V^{\pi^k}}(s,a) = 1$ by definition. By Lemma~\ref{lm:dulm9}, we have
    \begin{equation}
    \label{eq:gregret2}
    \begin{aligned}
        \operatorname{Regret}(K) \leq &\frac{\sqrt{KH}}{\alpha}\sqrt{\sum_{k=1}^K\sum_{h=1}^H\sum_{(s,a)\in\mathcal{S}
        \times\mathcal{A}}w_{k,h}^{\operatorname{CVaR}, \alpha, V_h^{\pi^k}}(s,a)g_{k,h}^2(s,a)} \\
        \leq &\frac{\sqrt{KH}}{\alpha}\sqrt{\frac{1}{\alpha^{H-1}}\sum_{k=1}^K\sum_{h=1}^H\sum_{(s,a)\in\mathcal{S}\times\mathcal{A}}w_{k,h}(s,a)g_{k,h}^2(s,a)} \\
        = &\frac{\sqrt{KH}}{\alpha}\sqrt{\frac{1}{\alpha^{H-1}}\sum_{k=1}^K\mathbb{E}_{(s_h,a_h)\sim d_{s_{k,1}}^{\pi^k}}\left[\sum_{h=1}^H g_{k,h}^2(s_h,a_h)\right]},
    \end{aligned}
    \end{equation}
    where $d_{s_{k,1}}^{\pi^k}$ denotes the distribution of $(s, a)$ pair playing the MDP with initial state $s_{k,1}$ and policy $\pi^k$. Since $\sqrt{\sum_{h=1}^H g_{k,h}^2(s_{k,h},a_{k,h})} \leq \sqrt{H^3}$, by Lemma~\ref{lm:zhanglm9}, we have
    \begin{equation}
    \label{eq:gregretzhang}
    \begin{aligned}
        \sum_{k=1}^K\mathbb{E}_{(s_h,a_h)\sim d_{s_{k,1}}^{\pi^k}}\left[\sum_{h=1}^H g_{k,h}^2(s_h,a_h)\right] \leq 8\sum_{k=1}^K\sum_{h=1}^H g_{k,h}^2(s_{k,h},a_{k,h}) + 4H^3\log\frac{4\log_2 K + 8}{\delta}.
    \end{aligned}
    \end{equation}
    Apply Lemma~\ref{lm:gelliptical} to $\sum_{k=1}^K\sum_{h=1}^H g_{k,h}^2(s_{k,h},a_{k,h})$, we can bound the regret with probability at least $1 - 2\delta$
    \begin{equation}
    \begin{aligned}
        \operatorname{Regert}(K) &\leq \sqrt{\frac{{KH}}{\alpha^{H+1}}}\sqrt{8\sum_{k=1}^K\sum_{h=1}^H g_{k,h}^2(s_{k,h},a_{k,h}) + 4H^3\log\frac{4\log_2K + 8}{\delta}} \\
        &\leq \sqrt{\frac{{4KH}}{\alpha^{H+1}}}\sqrt{2H + 2d_E(\mathcal{Z})H^3 + 8\widehat{\gamma}d_E(\mathcal{Z})H(\log(K)+1) + H^3\log\frac{4\log_2K + 8}{\delta}},
    \end{aligned}
    \end{equation}
    where $d_E(\mathcal{Z}) = d_E(\mathcal{Z}, 1/\sqrt{K})$, the first inequality holds by Eq.~\ref{eq:gregret2},~\ref{eq:gregretzhang}, and the second inequality holds by Lemma~\ref{lm:gelliptical}.
\end{proof}

\newpage

\section{Proof of Theorem~\ref{thm:hfregret}: Regret Upper Bound for Algorithm~\ref{alg:rlhf}}
\label{app:hf}
In this section, we present the proof of Theorem~\ref{thm:hfregret}. First we give some notations used in this section. We denote $V^{\pi}_h(s_h; \boldr)$ presents the value function for MDP with transition kernels $\{\Prob_h\}_{h=1}^H$ and reward function $\boldr$. Thus we define $\pi^* := \arg\max_\pi V^{\pi}_1(s_1; \boldr^*)$, the regret can be write as
\begin{equation}
    \operatorname{Regret}(K) = \sum_{k=1}^K V^{\pi^*}_1(s_{k,1}; \boldr^*) - V^{\pi^k}_1(s_{k,1}; \boldr^*).
\end{equation}

Overall, we bound the reward estimation error in Appendix~\ref{app:hfreward} and apply the regret decomposition method to bound the regret summation in Appendix~\ref{app:hfregret}.

\subsection{Reward Estimation Error}
\label{app:hfreward}
\begin{definition}[Bracketing number, \cite{geer2000empirical, liuOptimisticMLEGeneric2023}]
\label{def:bracketingnumber}
    Given a function set $\mathcal{F}$, let $l$ and $r$ be two functions belonging to $\mathcal{F}$. Suppose that $l \leq r$. The interval $[l, r]$ denotes the set of all functions $f\in\mathcal{F}$ satisfying $l \leq f \leq r$ pointwisely. $[l, r]$ is referred to as an $\varepsilon$-bracket set if the norm $\|r - l \| \leq \varepsilon$ according to a given norm $\|\cdot\|$. Then the minimum number of the $\varepsilon$-bracket sets needed to cover $\mathcal{F}$ is defined as the bracketing number $N_B(\mathcal{F}, \|\cdot\|, \varepsilon)$, where $\|\cdot\|$ represents the chosen norm. And we denote $\overline{\mathcal{F}}_\varepsilon:=\{r : [l, r]$ is a member of the minimum $\varepsilon$-brackets covering $\}$ as the $\varepsilon$-bracketing covering of $\mathcal{F}$.
\end{definition}

In this section, we denote $\overline{\mathcal{R}}_{1/K}$ as the $1/K$-bracketing of $\mathcal{R}$ with norm $\|\cdot\|_\infty$ and the bracketing number is  $N_B(\mathcal{R}, \|\cdot\|_\infty,1/K)$. Then for every $\boldr \in \mathcal{R}$, there exists a $\overline{\boldr}$ such that $ \overline{\boldr}(\tau) - \boldr(\tau) \leq \epsilon$ and $\overline{\boldr}(\tau) \geq \boldr(\tau)$ for every $\tau \in \mathcal{T}$.

Then we present the reward concentration in the following lemma.
\begin{lemma}\label{lm:mlebound}
    For $\delta \in (0, 1)$ and some constant $c > 0$, with probability at least $1 - \delta$, we have
    \begin{equation}
        \max_{{\boldr}\in{\mathcal{R}}, k \in [K]}\sum_{i=1}^k\log\left( \frac{\widetilde{\sigma}(o_i, {\boldr}(\tau_i) - {\boldr}(\tau_0))}{\widetilde{\sigma}(o_i, {\boldr^*}(\tau_i) - {\boldr^*}(\tau_0))} \right) \leq c\log(K\cdot N_B(\mathcal{R}, \|\cdot\|_\infty,1/K)/\delta).
    \end{equation}
\begin{proof}
    The proof of this lemma is inspired by Lemma D.1 in \cite{wangRLHFMoreDifficult2023}. Notice that \cite{wangRLHFMoreDifficult2023} only deal with the setting when $\mathcal{R}$ is a finite set, and in our problem the reward function set $\mathcal{R}$ might be infinite. We expand the proof to infinite situations inspired by \cite{liuOptimisticMLEGeneric2023, liuWhenPartiallyObservable2022} which present the MLE analysis to transition probabilities and including the discretization techniques such as $\epsilon$-bracketing number in partially observed MDPs (POMDPs).

    First we denote $d^{\pi_k}_{s_{k,1}}$ as the distribution of trajectory when the agent starts with the initial state $s_{k,1}$ and executes the policy $\pi_k$. And we use $\mathcal{T}$ to represent the set of all possible trajectories. For every $\overline{\boldr} \in \overline{\mathcal{R}}_{1/K}$, we have
    \begin{equation}
    \begin{aligned}
        &\mathbb{E}_{(\tau_i,o_i)\sim d^{\pi_i}_{s_{i,1}}, i = 1, \cdots, k}\left[ \exp\left(\sum_{i=1}^k\log\left( \frac{\widetilde{\sigma}(o_i, \overline{\boldr}(\tau_i) - \overline{\boldr}(\tau_0))}{\widetilde{\sigma}(o_i, {\boldr^*}(\tau_i) - {\boldr^*}(\tau_0))} \right)\right) \right] \\
        = &\mathbb{E}_{(\tau_i,o_i)\sim d^{\pi_i}_{s_{i,1}}, i = 1, \cdots, k}\Bigg[ \exp\left(\sum_{i=1}^{k-1}\log\left( \frac{\widetilde{\sigma}(o_i, \overline{\boldr}(\tau_i) - \overline{\boldr}(\tau_0))}{\widetilde{\sigma}(o_i, {\boldr^*}(\tau_i) - {\boldr^*}(\tau_0))} \right)\right) \\
        &\cdot \mathbb{E}_{(\tau_k,o_k)\sim d^{\pi_k}_{s_{k,1}}}\left[\frac{\widetilde{\sigma}(o_k, \overline{\boldr}(\tau_k) - \overline{\boldr}(\tau_0))}{\widetilde{\sigma}(o_k, {\boldr^*}(\tau_k) - {\boldr^*}(\tau_0))} \right]\Bigg] \\
        = &\mathbb{E}_{(\tau_i,o_i)\sim d^{\pi_i}_{s_{i,1}}, i = 1, \cdots, k}\Bigg[ \exp\left(\sum_{i=1}^{k-1}\log\left( \frac{\widetilde{\sigma}(o_i, \overline{\boldr}(\tau_i) - \overline{\boldr}(\tau_0))}{\widetilde{\sigma}(o_i, {\boldr^*}(\tau_i) - {\boldr^*}(\tau_0))} \right)\right) \\
        &\cdot \sum_{\tau \in\mathcal{T}}\Prob_{\pi_k}[\tau]\mathbb{E}_{o}\left[\frac{\widetilde{\sigma}(o, \overline{\boldr}(\tau) - \overline{\boldr}(\tau_0))}{\widetilde{\sigma}(o, {\boldr^*}(\tau) - {\boldr^*}(\tau_0))} \mid \tau \right]\Bigg],
    \end{aligned}
    \end{equation}
    where $\Prob_{\pi}[\tau]$ denotes the probability of generating trajectory $\tau$ by executing the policy $\pi$. If we fix some $\tau \in \mathcal{T}$, we have
    \begin{equation}
    \begin{aligned}
        &\mathbb{E}_{o}\left[\frac{\widetilde{\sigma}(o, \overline{\boldr}(\tau) - \overline{\boldr}(\tau_0))}{\widetilde{\sigma}(o, {\boldr^*}(\tau) - {\boldr^*}(\tau_0))} \mid \tau \right]\Bigg] \\ 
        = &\sigma(\boldr^*(\tau) - \boldr^*(\tau_0))\frac{\widetilde{\sigma}(1, \overline{\boldr}(\tau) - \overline{\boldr}(\tau_0))}{\widetilde{\sigma}(1, {\boldr^*}(\tau) - {\boldr^*}(\tau_0))} + (1 - \sigma(\boldr^*(\tau) - \boldr^*(\tau_0)))\frac{\widetilde{\sigma}(0, \overline{\boldr}(\tau) - \overline{\boldr}(\tau_0))}{\widetilde{\sigma}(0, {\boldr^*}(\tau) - {\boldr^*}(\tau_0))} \\
        =& \sigma(\boldr^*(\tau) - \boldr^*(\tau_0))\frac{{\sigma}( \overline{\boldr}(\tau) - \overline{\boldr}(\tau_0))}{{\sigma}({\boldr^*}(\tau) - {\boldr^*}(\tau_0))} + (1 - \sigma(\boldr^*(\tau) - \boldr^*(\tau_0)))\frac{{\sigma}( \overline{\boldr}(\tau_0) - \overline{\boldr}(\tau))}{{\sigma}( {\boldr^*}(\tau_0) - {\boldr^*}(\tau))} \\
        =& {\sigma}( \overline{\boldr}(\tau) - \overline{\boldr}(\tau_0)) + {\sigma}( \overline{\boldr}(\tau_0) - \overline{\boldr}(\tau)) = 1,
    \end{aligned}
    \end{equation}
    where the first equality comes from the orcale of human feedback defined in Assumption~\ref{ass:hforacle}, the second equality comes from the definition of $\widetilde\sigma$ in Eq.~\ref{eq:defloglikelihood}, and the third and forth equalities are due to the completeness of link function $\sigma$ in Assumption~\ref{ass:hforacle}. Thus we have
    \begin{equation}
        \mathbb{E}_{(\tau_i,o_i)\sim d^{\pi_i}_{s_{i,1}}, i = 1, \cdots, k}\left[ \exp\left(\sum_{i=1}^k\log\left( \frac{\widetilde{\sigma}(o_i, \overline{\boldr}(\tau_i) - \overline{\boldr}(\tau_0))}{\widetilde{\sigma}(o_i, {\boldr^*}(\tau_i) - {\boldr^*}(\tau_0))} \right)\right) \right] \leq 1.
    \end{equation}
    Thus, by Markov's inequality, we have
    \begin{equation}
        \Prob \left[ \sum_{i=1}^k\log\left( \frac{\widetilde{\sigma}(o_i, \overline{\boldr}(\tau_i) - \overline{\boldr}(\tau_0))}{\widetilde{\sigma}(o_i, {\boldr^*}(\tau_i) - {\boldr^*}(\tau_0))} \right) > \log(1/\delta) \right] \leq \delta.
    \end{equation}
    Taking a union bound for all $\overline{\boldr} \in \overline{\mathcal{R}}_{1/K}$ and $k \in [K]$, for some constant $c > 0$, we have
    \begin{equation}
        \Prob \left[ \max_{\overline{\boldr}\in\overline{\mathcal{R}}_{1/K}, k \in [K]}\sum_{i=1}^k\log\left( \frac{\widetilde{\sigma}(o_i, \overline{\boldr}(\tau_i) - \overline{\boldr}(\tau_0))}{\widetilde{\sigma}(o_i, {\boldr^*}(\tau_i) - {\boldr^*}(\tau_0))} \right) > c\log(N_B(\mathcal{R}, \|\cdot\|_{\infty}, 1/K) \right] \leq \delta.
    \end{equation}
    Since we have for every $\boldr \in \mathcal{R}$, there exists $\overline{\boldr}\in\overline{\mathcal{R}}_{1/K}$, $\boldr(\tau)\leq\overline{\boldr}(\tau)$ and $\overline{\boldr}(\tau) - \boldr(\tau) \leq 1/K$ for every $\tau \in 1/K$, we have 
    \begin{equation}
        \frac{\widetilde{\sigma}(o_i, \overline{\boldr}(\tau_i) - \overline{\boldr}(\tau_0))}{\widetilde{\sigma}(o_i, {\boldr^*}(\tau_i) - {\boldr^*}(\tau_0))} \geq \frac{\widetilde{\sigma}(o_i, {\boldr}(\tau_i) - {\boldr}(\tau_0))}{\widetilde{\sigma}(o_i, {\boldr^*}(\tau_i) - {\boldr^*}(\tau_0))}, \forall i \in [K]
    \end{equation}
    Then we have
    \begin{equation}
        \max_{\overline{\boldr}\in\overline{\mathcal{R}}_{1/K}}\sum_{i=1}^k\log\left( \frac{\widetilde{\sigma}(o_i, \overline{\boldr}(\tau_i) - \overline{\boldr}(\tau_0))}{\widetilde{\sigma}(o_i, {\boldr^*}(\tau_i) - {\boldr^*}(\tau_0))} \right) \geq 
        \max_{{\boldr}\in{\mathcal{R}}}\sum_{i=1}^k\log\left( \frac{\widetilde{\sigma}(o_i, {\boldr}(\tau_i) - {\boldr}(\tau_0))}{\widetilde{\sigma}(o_i, {\boldr^*}(\tau_i) - {\boldr^*}(\tau_0))} \right),
    \end{equation}
    which implies
    \begin{equation}
        \Prob \left[ \max_{{\boldr}\in{\mathcal{R}}, k \in [K]}\sum_{i=1}^k\log\left( \frac{\widetilde{\sigma}(o_i, {\boldr}(\tau_i) - {\boldr}(\tau_0))}{\widetilde{\sigma}(o_i, {\boldr^*}(\tau_i) - {\boldr^*}(\tau_0))} \right) > c\log(N_B(\mathcal{R}, \|\cdot\|_{\infty}, 1/K) \right] \leq \delta.
    \end{equation}
    This inequality instantly gives the result.
\end{proof}
\end{lemma}
\begin{lemma}\label{lm:hfrconcentration}
    For $\widehat{\beta}_R = c\log(KN_B(\mathcal{R}, \|\cdot\|_{\infty}, 1/K)/\delta)$ and positive constant $\delta \in (0, 1]$, we have $\boldr^* \in \widehat{\mathcal{R}}_k$ for every $k \in [K]$ holds with probability at least $1 - \delta$.
\begin{proof}
    Recall the definition of $\widehat{R}_k$ and log likelihood function $\mathcal{L}_k(\boldr)$. By lemma~\ref{lm:mlebound} conditional on event $\Xi_R$, we have the following holds for every $k\in[K]$
    \begin{equation}
        \max_{{\boldr}\in{\mathcal{R}}}\sum_{i=1}^k\log\left( \frac{\widetilde{\sigma}(o_i, {\boldr}(\tau_i) - {\boldr}(\tau_0))}{\widetilde{\sigma}(o_i, {\boldr^*}(\tau_i) - {\boldr^*}(\tau_0))} \right) = \max_{\boldr \in \mathcal{R}} \mathcal{L}_k(\boldr) - \mathcal{L}_k({\boldr^*}) \leq \widehat{\beta}_R
    \end{equation}
    Then we have $\boldr^* \in \widehat{\mathcal{R}}_k$.
\end{proof}
\end{lemma}
\begin{lemma}
For constant $\delta \in (0, 1]$, we have the following inequality holds with probablity at least $1 - \delta$ for every $k \in [K]$
\begin{equation}
    \sum_{i=1}^{k-1} \left| \widehat{\boldr}^k_{\tau_0}(\tau_i) - \boldr^*_{\tau_0}(\tau_i) \right|^2 \leq (4 + 12\widehat{\beta}_R)/m + 1,
\end{equation}
where $m$ is the positive lower bound of the gradient of link function $\sigma$.
\begin{proof}
    The proof of this lemma is inspired by the proof of Proposition 14 in \cite{liuWhenPartiallyObservable2022} which develops the analytic tools for transition probabilities' MLE in POMDPs. In our works, we develop the techniques for reward MLE. 
    
    By Lemma 15 in \cite{liuWhenPartiallyObservable2022} and the inquality $\log x \geq 1 - x$, we have with probability at least $1 - \delta$, the following inequality holds for every $\overline{\boldr} \in \overline{\mathcal{R}}_{1/K}$.
    \begin{equation}
    \begin{aligned}
    \sum_{i=1}^{k-1}\left(1 - \mathbb{E}_o\left[\sqrt{\frac{\widetilde{o}(o, \overline{\boldr}_{\tau_0}(\tau_i))}{\widetilde{o}(o, {\boldr}^*_{\tau_0}(\tau_i))}}\right]\right) \leq -
    \frac{1}{2}\sum_{i=1}^{k-1}\log\left( \frac{\widetilde{\sigma}(o_i, \overline{\boldr}_{\tau_0}(\tau_i))}{\widetilde{\sigma}(o_i, {\boldr^*_{\tau_0}}(\tau_i))} \right) + \log(N_B(\mathcal{R}, \|\cdot\|_\infty, 1/K)/\delta).
    \end{aligned}
    \end{equation}
    By algebra, we have
    \begin{equation}
        \sum_{i=1}^{k-1}\left(1 - \mathbb{E}_o\left[\sqrt{\frac{\widetilde{o}(o, \overline{\boldr}_{\tau_0}(\tau_i))}{\widetilde{o}(o, {\boldr}^*_{\tau_0}(\tau_i))}}\right]\right) \geq \frac{1}{8}\sum_{i=1}^{k-1}\left| \sigma(\overline{\boldr}_{\tau_0}(\tau_i)) - \sigma(\boldr^*_{\tau_0}(\tau_i)) \right|^2 - \frac{1}{2}.
    \end{equation}
    Recall the regularity assumption of link function $\sigma$, we have $\sigma(x)' \geq m > 0$. Thus we have
    \begin{equation}
    \begin{aligned}
    \sum_{i=1}^{k-1}\left| \overline{\boldr}_{\tau_0}(\tau_i) - \boldr^*_{\tau_0}(\tau_i) \right|^2 \leq &m^{-1}\sum_{i=1}^{k-1}\left| \sigma(\overline{\boldr}_{\tau_0}(\tau_i)) - \sigma(\boldr^*_{\tau_0}(\tau_i)) \right|^2 \\
    \leq &m^{-1}(4  + 4\widehat{\beta}_R + 8\log(N_B(\mathcal{R}, \|\cdot\|_\infty, 1/K)/\delta)) \\
    \leq &(4 + 12 \widehat{\beta}_R)/m.
    \end{aligned}
    \end{equation}
    Moreover, for every $\widehat{\boldr}^k$, there exists a $\overline{\boldr} \in \overline{\mathcal{R}}_{1/K}$ such that $\overline{\boldr}(\tau) - \boldr(\tau) \leq 1/K$ for every $\tau \in 1/K$. Thus we have
    \begin{equation}
        \sum_{i=1}^{k-1} \left| \widehat{\boldr}^k_{\tau_0}(\tau_i) - \boldr^*_{\tau_0}(\tau_i) \right|^2 \leq (4+12\widehat{\beta}_R)/m+1
    \end{equation}
    This implies the conclusion.
\end{proof}
\end{lemma}

Inspired by the study of the relation between eluder dimension and sample complexity in \cite{russoEluderDimensionSample2013}, we derive the following lemma which is similar to Proposition 3 in \cite{russoEluderDimensionSample2013}.

\begin{lemma}\label{lm:rewardeluder}
    For all $k\in[K]$ and $\epsilon > 0$, we have
    \begin{equation}
        \sum_{i=1}^k \1(\widehat{\boldr}^i_{\tau_0}(\tau_i) - \boldr^*_{\tau_0}(\tau_i) > \epsilon) \leq \left(\frac{(4+12\widehat{\beta}_R)/m+1}{\epsilon^2} + 1\right)\dim_E(\mathcal{R}, \epsilon),
    \end{equation}
\begin{proof}
    This proof is inspired by the proof of Proposition 3 in \cite{russoEluderDimensionSample2013}. We denote $w_i := \widehat{\boldr}^i_{\tau_0}(\tau) - \boldr^*_{\tau_0}(\tau)$.
    If $w_t > \epsilon$ for some fix $t \in [k]$, then we have $\widehat{\boldr}^t_{\tau_0}(\tau_t) - \boldr^*_{\tau_0}(\tau_t) > \epsilon$. If $\tau_t$ is $\epsilon$-dependent on a subsequence $(\tau_{i_1}, \cdots, \tau_{i_l})$ of $(\tau_1, \cdots, \tau_{t-1})$, then we have
    \begin{equation}
        \sum_{j=1}^l (\widehat{\boldr}^t_{\tau_0}(\tau_{i_j}) - \boldr^*_{\tau_0}(\tau_{i_j}))^2 > \epsilon^2
    \end{equation}
    Therefore, if $\tau_t$ is $\epsilon$-dependent on $L$ disjoint subsequences of $(\tau_1, \cdots, \tau_{t-1})$, we have
    \begin{equation}
        L\epsilon^2 < \sum_{i=1}^{t-1} (\widehat{\boldr}^t_{\tau_0}(\tau_{i}) - \boldr^*_{\tau_0}(\tau_{i}))^2 \leq (4+12\widehat{\beta}_R)/m+1.
    \end{equation}
    Then we know that $L < \frac{(4+12\widehat{\beta}_R)/m+1}{\epsilon^2}$. Denote $d:= \dim_E(\mathcal{R}, \epsilon)$. We want to prove the following claim:
    
    \paragraph{Claim } For any $t\in[k]$, there is some $\tau_j$ in sequence $(\tau_1, \cdots, \tau_t)$ that is $\epsilon$-dependent on at least $t / d - 1$ disjoint subsequences of $(\tau_1, \cdots, \tau_{j-1})$. 

    For an integer $L$ with $Ld + 1 \leq t \leq Ld + d$, we will construct $L$ disjoint subsequences $A_1, A_2, \cdots, A_L$. First let $A_i = (\tau_i), i = 1, \cdots, L$. If $\tau_{L+1}$ is $\epsilon$-dependent on $A_1, \cdots, A_L$, we have done. Otherwise select a subsequence $A_i$ such that $\tau_{L+1}$ is $\epsilon$-independent with respect to $A_i$. Then add $\tau_{L+1}$ into $A_i$. Repeat this process for $\tau_j$ with $j > L + 1$ until $\tau_j$ is $\epsilon$-dependent on each subsequence or $j = t$. If $\tau_j$ is $\epsilon$-dependent on $A_1, \cdots, A_L$, then we get the result. If $j = t \geq Ld + 1$, then $\sum_{i=1}^L |A_i| = t -1 \geq Ld$. Since every element in $A_i$ is $\epsilon$-independent of its predecessors by construction, we have $|A_i| \leq d$ for every $i\in[L]$ by the definition of eluder dimension. Thus $|A_i| = d$ for $i \in [L]$. Thus $\tau_t$ cannot be $\epsilon$-independent with respect to any $A_i$ by the definition of eluder dimension. Then we have $\tau_t$ must be $\epsilon$-dependent on each subsequence, which proves our claim.

    Take $(\tau_{i_1}, \cdots, \tau_{i_t})$ as a subsequence consisting of elements $\tau_{i_j}$ satisfying $w_{i_j} > \epsilon$. Then each $\tau_{i_j}$ is $\epsilon$-dependent on $L_j$ disjoint subsequences of $(\tau_1, \cdots, \tau_{i_j-1})$. By above argument, we know $L_j < \gO(\beta_R/\epsilon^2)$. Equip with the claim above, there exist a $j \in [t]$ such that $\tau_{i_j}$ is $\epsilon$-dependent on at least $t/d-1$ disjoint subsequences of $(\tau_{i_1}, \cdots, \tau_{i_{j-1}})$. This shows that $t/d - 1 < \gO(\beta_R/\epsilon^2)$ Then we have
    \begin{equation}
        t = \sum_{i = 1}^k \1(w_i>\epsilon) \leq (\frac{(4+12\widehat{\beta}_R)/m+1}{\epsilon^2} + 1)d,
    \end{equation}
    which implies the result.
\end{proof}
\end{lemma}

\begin{lemma}\label{lm:rewarderror}
For $\delta \in (0, 1]$, the error of the reward estimation can be bounded as follows with probability at least $1 - \delta$.
\begin{equation}
    \sum_{k=1}^K \left( \boldr^k_{\tau_0}(\tau^k) - \boldr^*_{\tau_0}(\tau^k)  \right)^2 \leq 1 + d_E(\mathcal{R})H^2 + d_E(\mathcal{R})((4+12\widehat{\beta}_R)/m+1)(\log(K) + 1)
\end{equation}
\begin{proof}
    This proof is very similar to the proof of Lemma~\ref{lm:gelliptical}. Let $w_k := \widehat{\boldr}^k_{\tau_0}(\tau_k) - \boldr^*_{\tau_0}(\tau_k)$ and $d_E(\mathcal{R}) = \dim_E(\mathcal{R}, 1/\sqrt{K})$. Then we need to bound $\sum_{k=1}^K w_k^2$. First we can reorder the sequence $(w_1, \cdots, w_K) \to (w_{i_1}, \cdots, w_{i_K})$ such that $w_{i_1} \geq w_{i_2} \geq \cdots \geq w_{i_K}$. Then we have
    \begin{equation}
    \begin{aligned}
        \sum_{k=1}^K w_{k}^2 = &\sum_{k=1}^K w_{i_k}^2 \1(w_{i_k} \geq K^{-1/2}) + \sum_{k=1}^K w_{i_k}^2\1(w_{i_k} < K^{-1/2})
        \leq \sum_{k=1}^L w_{i_k}^2 + 1,
    \end{aligned}
    \end{equation}
    where $L \in [K]$ satisfying that $w_{i_L} \geq K^{-1/2} > w_{i_{L+1}}$. Fix some $t \in [L]$ and denote $\bar w = w_{i_t} \geq K^{-1/2}$, we have
    \begin{equation}
        \sum_{k=1}^K \1(w_{i_k} \geq \bar w) \geq t.
    \end{equation}
    By Lemma~\ref{lm:rewardeluder} we have
    \begin{equation}\label{eq:tbarw}
        t \leq \dim_E(\mathcal{R}, \bar{w})(((4+12\widehat{\beta}_R)/m+1)/\bar{w}^2 + 1).
    \end{equation}
    Since $\bar{w}\geq K^{-1/2}$, we have $\dim_E(\mathcal{R}, \bar w) \leq \dim_E(\mathcal{R}, 1/\sqrt{K})=d_E(\mathcal{R})$. Moreover, with Eq.~\ref{eq:tbarw}, we have
    \begin{equation}
    \begin{aligned}
        \bar w \leq \sqrt{d_E(\mathcal{R})((4+12\widehat{\beta}_R)/m+1)/(t-d_E(\mathcal{R}))}
    \end{aligned}
    \end{equation}
    Since $t \in [L]$ is chosen arbitrary, we have $w_{i_t} \leq \sqrt{d_E(\mathcal{R})((4+12\widehat{\beta}_R)/m+1)/(t-d_E(\mathcal{R}))}$ for every $t \in [L]$. By definition, $w_k \leq H$ for every $k \in [K]$. Therefore,
    \begin{equation}
    \begin{aligned}
        \sum_{k=1}^K w_k^2 \leq &\sum_{k=1}^L w_{i_k}^2 + 1 \\
        \leq & 1 + d_E(\mathcal{R})H^2 + \sum_{t=d_E(\mathcal{R})+1}^L \frac{d_E(\mathcal{R})((4+12\widehat{\beta}_R)/m+1)}{t - d_E(\mathcal{R})} \\
        \leq &1 + d_E(\mathcal{R})H^2 + d_E(\mathcal{R})((4+12\widehat{\beta}_R)/m+1)(\log(K) + 1).
    \end{aligned}
    \end{equation}
\end{proof}
\end{lemma}

\subsection{Regret Summation}
\label{app:hfregret}

\begin{lemma}\label{lm:hfoptimism}
    With probability at least $1 - 2\delta$ for given constant $\delta \in (0, 1]$, we have $\widehat{V}_{k,1}(s_{k,1}) \geq V^{\pi^*}_1(s_{k,1} ; \boldr^*_{\tau_0})$ for every $k\in[K]$ and $h \in [H]$.
\begin{proof}
    By the chosen of selected estimated reward $\widehat{\boldr}^k$, we have
    \begin{equation}
        \widehat{V}_{k,1}(s_{k,1}) = \max_{\boldr\in\widehat{\mathcal{R}}_k} \widetilde{V}_1^{\widehat{\mathcal{P}_k}}(s_{k,1} ; \boldr_{\tau_0}).
    \end{equation}
    Since we have $\Prob_h \in \widehat{\mathcal{P}}_{k,h}$ with probability at least $1 - \delta$ by Lemma~\ref{lm:gconcentration} and $\boldr^* \in \widehat{\mathcal{R}}_k$ with probability at least $1 - \delta$ by Lemma~\ref{lm:hfrconcentration}, we have
    \begin{equation}
        V^{\pi^*}_1(s_{k,1};\boldr^*_{\tau_0}) = \widetilde{V}^{\{\Prob_h\}_{h=1}^H}_1(s_{k,1};\boldr^*_{\tau_0}) \leq \max_{\boldr\in\widehat{\mathcal{R}}_k} \widetilde{V}_1^{\widehat{\mathcal{P}_k}}(s_{k,1} ; \boldr_{\tau_0}) = \widehat{V}_{k,1}(s_{k,1}),
    \end{equation}
    where the second equality is due to the definition of $\widetilde{V}$ in Eq.~\ref{eq:hfQVdef}.
\end{proof}
\end{lemma}

\begin{lemma}
\label{lm:singledecomp}
Given a positive constant $\delta \in (0, 1]$.
With probability at least $1 - \delta$, we have the following inequality holds for every $k \in [K]$.
    \begin{equation}
    \begin{aligned}
        &\widehat{V}_{k,1}(s_{k,1}) - V^{\pi_k}_1(s_{k,1}; \boldr^*_{\tau_0})\\
        \leq &\sum_{h=1}^H\sum_{(s_h,a_h)\in\mathcal{S}\times\mathcal{A}} w_{k,h}^{\operatorname{CVaR},\alpha, V^{\pi^k}(\cdot;\boldr^*_{\tau_0})}(s_h,a_h)\\ &\cdot\left((\widehat{r}_h^k(s_{h},a_{h}) - \widehat{r}_h^k(s_{0,h},a_{0,h}))
        - (r^*_h(s_{h},a_{h}) - r^*_h(s_{0,h},a_{0,h})) + \frac{1}{\alpha}g_{k,h}(s_h,a_h)\right)
    \end{aligned}
    \end{equation}
\begin{proof}
    By similar regret decomposition method in the proof of Theorem~\ref{thm:generalregret} in Appendix~\ref{app:gthm3proofregretsum}
    \begin{equation}
    \begin{aligned}
        &\widehat{V}_{k,1}(s_{k,1}) - V^{\pi_k}_1(s_{k,1}; \boldr^*_{\tau_0}) \\
        = &(\widehat{r}_1^k(s_{k,1},a_{k,1}) - \widehat{r}_1^k(s_{0,1},a_{0,1})) - (r^*_1(s_{k,1},a_{k,1}) - r^*_1(s_{0,1},a_{0,1})) \\
        +&\sup_{\Prob' \in\widehat{\mathcal{P}}_{k,1}}\left[ \C_{\Prob'}^\alpha(\widehat{V}_{k,2}) \right](s_{k,1},a_{k,1}) - \C_{s'\sim\Prob_1(\cdot \mid s_{k,1},a_{k,1})}^\alpha({V}^{\pi_k}_{2}(s';\boldr^*_{\tau_0})) \\
        = &(\widehat{r}_1^k(s_{k,1},a_{k,1}) - \widehat{r}_1^k(s_{0,1},a_{0,1})) - (r^*_1(s_{k,1},a_{k,1}) - r^*_1(s_{0,1},a_{0,1})) \\
        & +\sup_{\Prob' \in\widehat{\mathcal{P}}_{k,1}}\left[ \C_{\Prob'}^\alpha(\widehat{V}_{k,2}) \right](s_{k,1},a_{k,1}) - \left[ \C_{\Prob_1}^\alpha(\widehat{V}_{k,2}) \right](s_{k,1},a_{k,1}) \\
        &+ \C_{s'\sim\Prob_1(\cdot\mid s_{k,1},a_{k,1})}^\alpha(\widehat{V}_{k,2}(s'))  - \C_{s'\sim\Prob_1(\cdot \mid s_{k,1},a_{k,1})}^\alpha({V}^{\pi_k}_{2}(s';\boldr^*_{\tau_0})) \\
        \leq &(\widehat{r}_1^k(s_{k,1},a_{k,1}) - \widehat{r}_1^k(s_{0,1},a_{0,1})) - (r^*_1(s_{k,1},a_{k,1}) - r^*_1(s_{0,1},a_{0,1})) \\
        &+g_{k,1}(s_{k,1},a_{k,1}) + \mathbb{Q}_{s_2\sim\Prob_1(\cdot\mid s_{k,1},a_{k,1})}^{\alpha, V^{\pi^k}_2(\cdot;\boldr^*_{\tau_0})}(\widehat{V}_{k,2}(s_2) - V^{\pi^k}_2(s_2; \boldr^*_{\tau_0})),
    \end{aligned}
    \end{equation}
    where the inequality is due to Lemma~\ref{lm:dulm11}. Let $s_1 := s_{k,1}$, we can write
    \begin{equation}
    \begin{aligned}
        &\widehat{V}_{k,1}(s_{k,1}) - V^{\pi_k}_1(s_{k,1}; \boldr^*_{\tau_0}) \\
        \leq &(\widehat{r}_1^k(s_{k,1},a_{k,1}) - \widehat{r}_1^k(s_{0,1},a_{0,1})) - (r^*_1(s_{k,1},a_{k,1}) - r^*_1(s_{0,1},a_{0,1}))\\
        &+g_{k,1}(s_{k,1},a_{k,1}) + \mathbb{Q}_{s_2\sim\Prob_1(\cdot\mid s_{k,1},a_{k,1})}^{\alpha, V^{\pi^k}_2(\cdot;\boldr^*_{\tau_0})}(\widehat{V}_{k,2}(s_2) - V^{\pi^k}_2(s_2; \boldr^*_{\tau_0})) \\
        \leq &\sum_{h=1}^H\sum_{(s_h,a_h)\in\mathcal{S}\times\mathcal{A}} w_{k,h}^{\operatorname{CVaR},\alpha, V^{\pi^k}(\cdot;\boldr^*_{\tau_0})}(s_h,a_h)\\ &\cdot\left((\widehat{r}_h^k(s_{h},a_{h}) - \widehat{r}_h^k(s_{0,h},a_{0,h}))
        - (r^*_h(s_{h},a_{h}) - r^*_h(s_{0,h},a_{0,h})) + \frac{1}{\alpha}g_{k,h}(s_h,a_h)\right)
    \end{aligned}
    \end{equation}
    
\end{proof}
\end{lemma}

Combine with above lemmas, we are ready to prove Theorem~\ref{thm:hfregret}.
\begin{proof}[Proof of Theorem~\ref{thm:hfregret}]
By Lemma~\ref{lm:hfoptimism}, we have with probability at least $1 - \delta$,
\begin{equation}
\begin{aligned}
    \operatorname{Regret}(K) &= \sum_{k=1}^K \left[ V_1^{\pi^*}(s_{k,1}; \boldr^*) - V^{\pi_k}_1(s_{k,1} ; \boldr^*) \right] \\
        = & \sum_{k=1}^K  \left[ V_1^{\pi^*}(s_{k,1}; \boldr^*) - \boldr^*(\tau_0) + \boldr^*(\tau_0) - V^{\pi_k}_1(s_{k,1} ; \boldr^*) \right]\\
        = &\sum_{k=1}^K \left[ V_1^{\pi^*}(s_{k,1}; \boldr^*_{\tau_0}) - V^{\pi_k}_1(s_{k,1} ; \boldr^*_{\tau_0}) \right] \\
        \leq & \sum_{k=1}^K \left[ \widehat{V}_{k,h}(s_{k,1}) - V^{\pi_k}_1(s_{k,1} ; \boldr^*_{\tau_0})  \right],
\end{aligned}
\end{equation}
where the second equality is due to $-\boldr^*(\tau_0)$ is fixed. Denote $\Delta_{k,h}(s_h,a_h) :=(\widehat{r}_h^k(s_{h},a_{h}) - \widehat{r}_h^k(s_{0,h},a_{0,h}))- (r^*_h(s_{h},a_{h}) - r^*_h(s_{0,h},a_{0,h}))$. Consider the regret decomposition for every episode $k$, by Lemma~\ref{lm:singledecomp}, we have
\begin{equation}
\begin{aligned}
    &\operatorname{Regret}(K)  \\
    \leq&\sum_{k=1}^K\sum_{h=1}^H\sum_{(s_h,a_h)\in\mathcal{S}\times\mathcal{A}} w_{k,h}^{\operatorname{CVaR},\alpha, V^{\pi^k}(\cdot;\boldr^*_{\tau_0})}(s_h,a_h)\\ &\cdot\left((\widehat{r}_h^k(s_{h},a_{h}) - \widehat{r}_h^k(s_{0,h},a_{0,h}))
        - (r^*_h(s_{h},a_{h}) - r^*_h(s_{0,h},a_{0,h})) + \frac{1}{\alpha}g_{k,h}(s_h,a_h)\right) \\
    \leq &\underbrace{\frac{1}{\alpha}\sum_{k=1}^K\sum_{h=1}^H\sum_{(s_h,a_h)\in\mathcal{S}\times\mathcal{A}} w_{k,h}^{\operatorname{CVaR},\alpha, V^{\pi^k}(\cdot;\boldr^*_{\tau_0})}(s_h,a_h)g_{k,h}(s_h,a_h)}_I\\
    &+\underbrace{\sum_{k=1}^K\sum_{h=1}^H\sum_{(s_h,a_h)\in\mathcal{S}\times\mathcal{A}} w_{k,h}^{\operatorname{CVaR},\alpha, V^{\pi^k}(\cdot;\boldr^*_{\tau_0})}(s_h,a_h)\Delta_{k,h}(s_h,a_h)}_J
\end{aligned}
\end{equation}
Bounding the first term $I$ is almost same as the proof of Theorem~\ref{thm:generalregret}, which also gives an insight into bounding $J$. Therefore, by Cauchy-Schwartz inequality, we have
\begin{equation}
\begin{aligned}
    I\! \leq\! &\frac{1}{\alpha}\sqrt{\sum_{k=1}^K\sum_{h=1}^H\sum_{(s,a)\in\mathcal{S}\times\mathcal{A}}w_{k,h}^{\operatorname{CVaR}, \alpha,V^{\pi^k}(\cdot;\boldr^*_{\tau_0})}(s,a)g_{k,h}^2(s,a)}\\
    &\cdot\sqrt{\sum_{k=1}^K\sum_{h=1}^H\sum_{(s,a)\in\mathcal{S}\times\mathcal{A}}w_{k,h}^{\operatorname{CVaR}, \alpha, V^{\pi^k}(\cdot;\boldr^*_{\tau_0})}(s,a)} \\
    = & \frac{1}{\alpha}\sqrt{\sum_{k=1}^K\sum_{h=1}^H\sum_{(s,a)\in\mathcal{S}\times\mathcal{A}}w_{k,h}^{\operatorname{CVaR}, \alpha,V^{\pi^k}(\cdot;\boldr^*_{\tau_0})}(s,a)g_{k,h}^2(s,a)}\sqrt{KH} \\
    \leq &\frac{\sqrt{KH}}{\alpha}\sqrt{\frac{1}{\alpha^{H-1}}\sum_{k=1}^K\mathbb{E}_{(s_h,a_h)\sim d_{s_{k,1}}^{\pi^k}}\left[\sum_{h=1}^H g_{k,h}^2(s_h,a_h)\right]},
\end{aligned}
\end{equation} 
where $d_{s_{k,1}}^{\pi^k}$ denotes the distribution of $(s, a)$ pair playing the MDP with initial state $s_{k,1}$ and policy $\pi^k$. Since $\sqrt{\sum_{h=1}^H g_{k,h}^2(s_{k,h},a_{k,h})} \leq \sqrt{H^3}$, by Lemma~\ref{lm:zhanglm9}, we have with probability at least $1 - \delta$,
\begin{equation}
\label{eq:hfregretzhang}
\begin{aligned}
    \sum_{k=1}^K\mathbb{E}_{(s_h,a_h)\sim d_{s_{k,1}}^{\pi^k}}\left[\sum_{h=1}^H g_{k,h}^2(s_h,a_h)\right] \leq 8\sum_{k=1}^K\sum_{h=1}^H g_{k,h}^2(s_{k,h},a_{k,h}) + 4H^3\log\frac{4\log_2 K + 8}{\delta}.
\end{aligned}
\end{equation}
Apply Lemma~\ref{lm:gelliptical}, we have with probability at least $1 - 2\delta$,
\begin{equation}\label{eq:hfI}
\begin{aligned}
    I \leq &\sqrt{\frac{{4KH}}{\alpha^{H+1}}}\sqrt{2H + 2d_E(\mathcal{Z})H^3 + 8\widehat{\gamma}d_E(\mathcal{Z})H(\log(K)+1) + H^3\log\frac{4\log_2K + 8}{\delta}} \\
    = & \widetilde{O}\left(\sqrt{\alpha^{-H-1}KH^4d_E(\mathcal{Z})\log(N_C(\mathcal{P}, \|\cdot\|_{\infty, 1}, 1/K))}\right)
\end{aligned}
\end{equation}
Bounding the second term $J$ shares almost the same techniques as bounding $I$. Thus we have with probability at least $1 - \delta$,
\begin{equation}
\begin{aligned}
    J\! \leq\! &\frac{1}{\alpha}\sqrt{\sum_{k=1}^K\sum_{h=1}^H\sum_{(s,a)\in\mathcal{S}\times\mathcal{A}}w_{k,h}^{\operatorname{CVaR}, \alpha,V^{\pi^k}(\cdot;\boldr^*_{\tau_0})}(s,a)\Delta_{k,h}^2(s,a)}\\
    &\cdot\sqrt{\sum_{k=1}^K\sum_{h=1}^H\sum_{(s,a)\in\mathcal{S}\times\mathcal{A}}w_{k,h}^{\operatorname{CVaR}, \alpha, V^{\pi^k}(\cdot;\boldr^*_{\tau_0})}(s,a)} \\
    = & \frac{1}{\alpha}\sqrt{\sum_{k=1}^K\sum_{h=1}^H\sum_{(s,a)\in\mathcal{S}\times\mathcal{A}}w_{k,h}^{\operatorname{CVaR}, \alpha,V^{\pi^k}(\cdot;\boldr^*_{\tau_0})}(s,a)\Delta_{k,h}^2(s,a)}\sqrt{KH} \\
    \leq &\frac{\sqrt{KH}}{\alpha}\sqrt{\frac{1}{\alpha^{H-1}}\sum_{k=1}^K\mathbb{E}_{(s_h,a_h)\sim d_{s_{k,1}}^{\pi^k}}\left[\sum_{h=1}^H \Delta_{k,h}^2(s_h,a_h)\right]} \\
    \leq &\frac{\sqrt{KH}}{\alpha}\sqrt{\frac{1}{\alpha^{H-1}}\sum_{k=1}^K\mathbb{E}_{(s_h,a_h)\sim d_{s_{k,1}}^{\pi^k}}\left[\sum_{h=1}^H \Delta_{k,h}(s_h,a_h)\right]^2} \\
    = &\frac{\sqrt{KH}}{\alpha}\sqrt{\frac{1}{\alpha^{H-1}}\sum_{k=1}^K\mathbb{E}_{\tau\sim d_{s_{k,1}}^{\pi^k}}\left[\widehat{\boldr}^k_{\tau_0}(\tau) - \boldr^*_{\tau_0}(\tau)\right]^2}.
\end{aligned}
\end{equation}
Notice that by Lemma~\ref{lm:zhanglm9}, we have with probability at least $1 - \delta$,
\begin{equation}
\begin{aligned}
    \sum_{k=1}^K\mathbb{E}_{(s_h,a_h)\sim d_{s_{k,1}}^{\pi^k}}\left[\widehat{\boldr}^k_{\tau_0}(\tau) - \boldr^*_{\tau_0}(\tau)\right]^2 \leq 8\sum_{k=1}^K\left[\widehat{\boldr}^k_{\tau_0}(\tau) - \boldr^*_{\tau_0}(\tau)\right]^2 + 4H^2\log\frac{4\log_2 K + 8}{\delta}.
\end{aligned}
\end{equation}
Since we have bounded the reward estimation error in Lemma~\ref{lm:rewardeluder}, we can bound $J$ by
\begin{equation}
\label{eq:hfJ}
\begin{aligned}
    J \leq &\sqrt{\frac{KH}{\alpha^{H+1}}}\sqrt{4H^2\log\frac{4\log_2 K + 8}{\delta} + 8 + 8d_E(\mathcal{R})(H^2 + ((4+12\widehat{\beta}_R)/m+1)(\log(K) + 1))} \\
    \leq &\widetilde{O}\left(\sqrt{\alpha^{-H-1}KH^3d_E(\mathcal{R})\log(N_B(\mathcal{R}, \|\cdot\|_\infty, 1/K))/m}\right),
\end{aligned}
\end{equation}
where the inequality holds with probability at least $1 - 2\delta$

Finally, by Eq.~\ref{eq:hfI} and Eq.~\ref{eq:hfJ}, we can derive the regret bound for Algorithm~\ref{alg:rlhf} with probability at least $1 - 4\delta$.
\begin{equation}
\begin{aligned}
    \operatorname{Regret}(K) \leq \widetilde{O}\Bigg( \sqrt{KH^3\alpha^{-H-1}} \cdot\bigg(&H\sqrt{d_E(\mathcal{Z})\log(N_C(\mathcal{P}, \|\cdot\|_{\infty, 1}, 1/K)} \\
    &+ \sqrt{m^{-1}d_E(\mathcal{R})\log(N_B(\mathcal{R}, \|\cdot\|_\infty, 1/K))}\bigg)\Bigg)
\end{aligned}
\end{equation}
\end{proof}

\newpage

\section{Auxiliary Lemmas}
\label{app:auxiliary}
In this section, we present several auxiliary lemmas used in this paper.
\begin{lemma}[Hoeffding-type Self-normalized Bound, Theorem 2 in \cite{abbasi2011}]
\label{lm:hoeffdingvec}
Let $\{\mathcal{F}_t\}_{t=0}^\infty$ be a filtration. Let $\{\eta_t\}_{t=1}^\infty$ be a real-valued stochastic process such that $\eta_t$ is $\mathcal{F}_t$-measurable and $\eta_t$ is conditionally $R$-sub-Gaussian for some $R \ge 0$. Let $\{X_t\}_{t=1}^\infty$ be a $\R^d$-valued stochastic process such that $X_t$ is $\mathcal{F}_{t-1}$-measurable. Assume that $V$ is a $d \times d$ positive definite matrix. For any $t \geq 0$, define
\begin{equation*}
    \overline{V}_t=V+\sum_{s=1}^t X_sX_s^\top, \quad S_t = \sum_{s=1}^t \eta_sX_s.
\end{equation*}
Then for any $\delta > 0$, with probability at least $1 - \delta$, for all $t \geq 0$,
\begin{equation*}
    \|S_t\|^2_{\overline{V}_t^{-1}} \leq 2R^2\log\left( \frac{\det(\overline{V}_t)^{1/2}\det(V)^{1/2}}{\delta} \right).
\end{equation*}
\end{lemma}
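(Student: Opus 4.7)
The plan is to prove this classical self-normalized tail inequality via the method of mixtures (Laplace's method). The central object is, for each fixed $\lambda \in \R^d$, the scalar process
\begin{equation*}
    M_t^\lambda := \exp\left(\lambda^\top S_t - \frac{R^2}{2}\,\lambda^\top\Bigl(\sum_{s=1}^t X_s X_s^\top\Bigr)\lambda\right), \quad M_0^\lambda := 1.
\end{equation*}
First I would show that $\{M_t^\lambda\}_{t\ge 0}$ is a non-negative $\{\mathcal{F}_t\}$-supermartingale with $\mathbb{E}[M_0^\lambda]=1$. This uses that $X_t$ is $\mathcal{F}_{t-1}$-measurable and $\eta_t$ is conditionally $R$-sub-Gaussian given $\mathcal{F}_{t-1}$, so
\begin{equation*}
    \mathbb{E}\bigl[\exp(\lambda^\top X_t\,\eta_t)\mid\mathcal{F}_{t-1}\bigr] \leq \exp\!\Bigl(\tfrac{R^2}{2}(\lambda^\top X_t)^2\Bigr),
\end{equation*}
which, after multiplying by $M_{t-1}^\lambda$ (which is $\mathcal{F}_{t-1}$-measurable), yields $\mathbb{E}[M_t^\lambda\mid\mathcal{F}_{t-1}]\le M_{t-1}^\lambda$.

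The second step is the mixture construction. I would integrate $M_t^\lambda$ against the Gaussian density $h(\lambda) = (R^2/(2\pi))^{d/2}\det(V)^{1/2}\exp(-\tfrac{R^2}{2}\lambda^\top V\lambda)$ on $\R^d$, producing
\begin{equation*}
    \bar M_t := \int_{\R^d} M_t^\lambda\, h(\lambda)\,d\lambda.
\end{equation*}
Fubini and the pointwise-in-$\lambda$ supermartingale property show that $\bar M_t$ is itself a non-negative supermartingale with $\mathbb{E}[\bar M_0]=1$. A completion-of-squares computation in the exponent, using the identity $V + \sum_{s=1}^t X_sX_s^\top = \overline V_t$, evaluates the Gaussian integral explicitly as
\begin{equation*}
    \bar M_t = \left(\frac{\det(V)}{\det(\overline V_t)}\right)^{\!1/2}\exp\!\left(\frac{1}{2R^2}\,\|S_t\|^2_{\overline V_t^{-1}}\right).
\end{equation*}

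The final step promotes this to a uniform-in-$t$ high-probability bound. By the maximal inequality for non-negative supermartingales (equivalently, by applying Markov at an almost-sure stopping time), $\Prob[\sup_{t\ge 0}\bar M_t \ge 1/\delta] \le \delta$. On the complement event, rearranging the closed form of $\bar M_t$ and taking logarithms gives, simultaneously for all $t\ge 0$,
\begin{equation*}
    \|S_t\|^2_{\overline V_t^{-1}} \le 2R^2 \log\!\left(\frac{\det(\overline V_t)^{1/2}\det(V)^{-1/2}}{\delta}\right),
\end{equation*}
which is the claimed inequality (the sign on $\det(V)$ in the statement should be $-1/2$, arising naturally from the Gaussian normalization constant).

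The main obstacles I would expect are (i) verifying the supermartingale property carefully — the sub-Gaussian hypothesis gives only an inequality on the conditional MGF, so $M_t^\lambda$ is a supermartingale, not a martingale, and one must invoke Fubini to interchange the integral over $\lambda$ with the conditional expectation; and (ii) the closed-form Gaussian integration, where one must identify the quadratic form in $\lambda$ inside $M_t^\lambda h(\lambda)$ as $\tfrac{R^2}{2}(\lambda - R^{-2}\overline V_t^{-1}S_t)^\top \overline V_t (\lambda - R^{-2}\overline V_t^{-1}S_t) - \tfrac{1}{2R^2}\|S_t\|^2_{\overline V_t^{-1}}$, so that the leftover constant factor is exactly the one above. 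Uniformity in $t$ must be obtained on the mixture supermartingale $\bar M_t$ rather than on $M_t^\lambda$ — indeed this is the whole point of the method of mixtures, since a pointwise-in-$\lambda$ Markov bound would require a union bound over uncountably many $\lambda$.
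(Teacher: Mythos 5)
The paper does not prove this lemma; it is imported verbatim as Theorem~2 of \cite{abbasi2011}, so there is no in-paper proof to compare against. Your method-of-mixtures argument is exactly the proof given in that reference --- the supermartingale $M_t^\lambda$, the Gaussian mixture with precision matrix $R^2 V$, the closed-form evaluation of $\bar M_t$ by completing the square, and the maximal inequality for nonnegative supermartingales --- and it is correct, including your observation that the exponent on $\det(V)$ in the displayed bound should be $-1/2$ rather than $+1/2$ (a typo in the paper's transcription of the theorem).
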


\begin{lemma}[Elliptical Potential Lemma, Lemma 11 in \cite{abbasi2011}]
\label{lm:ellipticalpotential}
For $\lambda > 0$, sequence $\{X_t\}_{t=1}^\infty\subset\R^d$, and $V_t = \lambda I + \sum_{s=1}^t X_s X_s^\top$, assume $\|X_t\|_2 \leq L$ for all $t$. If $\lambda \geq \max(1, L^2)$, we have that
\begin{equation}
    \sum_{t = 1}^n \|X_t\|^2_{V_{t-1}^{-1}} \leq 2 \log 
    \frac{\det(V_n)}{\lambda^d} \leq 2d\log\frac{d\lambda + TL^2}{d\lambda}.
\end{equation}
    
\end{lemma}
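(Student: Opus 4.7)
The strategy mirrors the Russo--Van Roy argument underlying Lemma~5 of \cite{russo2014learning}, but sharpened to handle the squared sum: rather than bounding $\sum g_{k,h}$, I sort and bound $\sum g_{k,h}^2$ directly, so the usual $O(\sqrt{K})$ behavior is replaced by an $O(\log K)$ harmonic-series estimate. I first fix $h \in [H]$, prove a per-step bound, and then sum the $H$ contributions at the end.

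Write $G_k := g_{k,h}(s_{k,h}, a_{k,h})$ and recall that each such term arises from two kernels in $\widehat{\mathcal{P}}_{k,h}$ evaluated at the point $X_{k,h} := (s_{k,h}, a_{k,h}, (x_{k,h}(s_{k,h},a_{k,h}) - \widehat{V}_{k,h+1})^+) \in \mathcal{X}$. Reorder the $K$ values in non-increasing order as $G_{j_1} \geq G_{j_2} \geq \cdots \geq G_{j_K}$, and let $m$ be the largest index with $G_{j_m} \geq 1/\sqrt{K}$. The tail indices $m+1, \ldots, K$ each contribute at most $1/K$ to the squared sum, for a combined contribution of at most $1$, so only the head matters.

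For the head, fix any $t \le m$ and set $s := G_{j_t}$. By construction, at least $t$ indices $k \in [K]$ satisfy $G_k \geq s$, i.e., there exist $\Prob, \Prob' \in \widehat{\mathcal{P}}_{k,h}$ with $|z_\Prob(X_{k,h}) - z_{\Prob'}(X_{k,h})| \geq s$, while the confidence-set definition (together with $(a+b)^2 \le 2a^2 + 2b^2$ applied around $\widehat{\Prob}_{k,h}$) forces $\sum_{i<k}(z_\Prob(X_{i,h}) - z_{\Prob'}(X_{i,h}))^2 \leq 4\widehat{\gamma}^2$. This is precisely the setup of the Russo--Van Roy counting argument (Lemma~\ref{lm:russoprop8}), yielding
\begin{equation*}
    t \;\leq\; \sum_{k=1}^K \mathds{1}\{G_k \geq s\} \;\leq\; \dim_E(\mathcal{Z}, s)\left(\frac{4\widehat{\gamma}}{s^2} + 1\right),
\end{equation*}
and hence $G_{j_t}^2 = s^2 \leq 4\widehat{\gamma}\dim_E(\mathcal{Z}, s)/(t - \dim_E(\mathcal{Z}, s))$ whenever $t > \dim_E(\mathcal{Z}, s)$. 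Since $s \geq 1/\sqrt{K}$ in the head region and the eluder dimension is monotone non-increasing in its scale parameter, $\dim_E(\mathcal{Z}, s) \leq d_E := \dim_E(\mathcal{Z}, 1/\sqrt{K})$. Bounding the first $d_E$ head terms trivially by $G_{j_t}^2 \leq H^2$ and using $\sum_{t=d_E+1}^K 1/(t-d_E) \leq \log K + 1$ for the remaining ones, I obtain the per-step estimate
\begin{equation*}
    \sum_{k=1}^K G_k^2 \;\leq\; 1 + d_E H^2 + 4\widehat{\gamma} d_E(\log K + 1).
\end{equation*}
Summing over $h \in [H]$ yields the claimed inequality.

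\textbf{Main obstacle.} The delicate step is justifying the factor $4\widehat{\gamma}^2$ on the right-hand side of the cumulative squared-gap bound that feeds into Lemma~\ref{lm:russoprop8}: two candidates $\Prob, \Prob' \in \widehat{\mathcal{P}}_{k,h}$ satisfy the constraint $\sum_{i<k} \operatorname{Dist}_{i,h}(\cdot, \widehat{\Prob}_{k,h}) \leq \widehat{\gamma}^2$ only when each is measured against $\widehat{\Prob}_{k,h}$, not directly against each other. Passing to a joint bound via $(a+b)^2 \leq 2a^2 + 2b^2$ is what produces the constant $4$. Everything else is essentially a rank-order accounting combined with the harmonic-series estimate.
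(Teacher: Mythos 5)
There is a fundamental mismatch here: your proof does not address the stated lemma at all. Lemma~\ref{lm:ellipticalpotential} is the classical elliptical potential lemma of Abbasi-Yadkori et al.\ for a sequence of vectors $X_t \in \R^d$ and Gram matrices $V_t = \lambda I + \sum_{s\le t} X_sX_s^\top$; the quantity to be bounded is $\sum_{t=1}^n \|X_t\|^2_{V_{t-1}^{-1}}$, and the bound is expressed through $\log\det(V_n)/\lambda^d$. Your argument instead proves the \emph{other} elliptical potential lemma in this paper, namely Lemma~\ref{lm:gelliptical}, which controls $\sum_{k,h} g_{k,h}^2(s_{k,h},a_{k,h})$ via the eluder dimension of $\mathcal{Z}$ and the confidence radius $\widehat{\gamma}$. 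None of the objects appearing in the statement you were asked to prove ($X_t$, $V_t$, $\lambda$, $L$, $\det$) appear anywhere in your proof, and none of the objects you use ($g_{k,h}$, $\widehat{\mathcal{P}}_{k,h}$, $\dim_E(\mathcal{Z},\cdot)$, $\widehat{\gamma}$) appear in the statement. Your sorting-plus-harmonic-series argument, including the $(a+b)^2 \le 2a^2+2b^2$ step that produces the factor $4\widehat{\gamma}$, is a reasonable reconstruction of the paper's proof of Lemma~\ref{lm:gelliptical}, but it cannot be spliced in as a proof of Lemma~\ref{lm:ellipticalpotential}.

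For the record, the stated lemma is proved by an entirely different, determinant-based mechanism: one shows $\det(V_t) = \det(V_{t-1})\bigl(1 + \|X_t\|^2_{V_{t-1}^{-1}}\bigr)$, uses the hypothesis $\lambda \ge \max(1, L^2)$ to guarantee $\|X_t\|^2_{V_{t-1}^{-1}} \le L^2/\lambda \le 1$ so that $u \le 2\log(1+u)$ applies, telescopes the logarithms to get $\sum_{t=1}^n \|X_t\|^2_{V_{t-1}^{-1}} \le 2\log\bigl(\det(V_n)/\lambda^d\bigr)$, and finishes with the AM--GM bound $\det(V_n) \le \bigl(\operatorname{tr}(V_n)/d\bigr)^d \le \bigl((d\lambda + nL^2)/d\bigr)^d$. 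The paper itself does not reprove this; it cites Lemma~11 of the Abbasi-Yadkori et al.\ paper. If your intent was to prove the refined lemma for general function approximation, your argument is essentially the paper's own proof of Lemma~\ref{lm:gelliptical}; but as a proof of the statement actually posed, it is not a proof of the right claim.
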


\begin{lemma}[Lemma 9 in \cite{zhang2021improved}]
\label{lm:zhanglm9}
    Let $\{\mathcal{F}_i\}_{i\geq 0}$ be a filtration. Let $\{X_i\}_{i=1}^n$ be a sequence of random variables such that $|X_i|\leq 1$ almost surely, that $X_i$ is $\mathcal{F}_i$ measurable. For every $\delta \in (0, 1)$, we have
    \begin{equation}
        \Prob\left[ \sum_{i=1}^n \mathbb{E}[X_i^2 | \mathcal{F}_{i-1}] \leq \sum_{i=1}^n 8X_i^2 + 4\log\frac{4}{\delta} \right] \leq ([\log_2 n] + 2)\delta.
    \end{equation}
\end{lemma}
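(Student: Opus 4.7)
I would begin by rewriting the target inequality in terms of a martingale. Set $Y_i := X_i^2 - \mathbb{E}[X_i^2 \mid \mathcal{F}_{i-1}]$, $V_n := \sum_{i=1}^n \mathbb{E}[X_i^2 \mid \mathcal{F}_{i-1}]$, and $S_n := \sum_{i=1}^n X_i^2$. Since $|X_i|\le 1$, we have $X_i^2 \in [0,1]$, so $|Y_i|\le 1$, $\mathbb{E}[Y_i\mid \mathcal{F}_{i-1}]=0$, and the conditional variance satisfies $\mathbb{E}[Y_i^2\mid \mathcal{F}_{i-1}] \le \mathbb{E}[X_i^4\mid \mathcal{F}_{i-1}] \le \mathbb{E}[X_i^2\mid \mathcal{F}_{i-1}]$. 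Hence $\{-Y_i\}$ is a bounded martingale difference sequence whose total predictable quadratic variation is dominated by $V_n$, and the intended high-probability statement is an upper bound on $-\sum_i Y_i = V_n - S_n$ that self-normalizes by $V_n$.

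\textbf{Freedman with peeling.} The central tool is Freedman's inequality: for any deterministic $B>0$ and $\delta\in(0,1)$, with probability at least $1-\delta$,
$$
\Bigl(-\sum_{i=1}^n Y_i \le \sqrt{2B\log(1/\delta)} + \tfrac{2}{3}\log(1/\delta)\Bigr)\ \text{whenever}\ \sum_{i=1}^n \mathbb{E}[Y_i^2\mid \mathcal{F}_{i-1}] \le B.
$$
Because $V_n$ is random and lies in $[0,n]$, I cannot apply this with $B=V_n$ directly. Instead I would peel geometrically: for $\ell = 0,1,\ldots,\lceil\log_2 n\rceil+1$, apply Freedman with $B_\ell = 2^\ell$ and failure probability $\delta$, then union bound over the $[\log_2 n]+2$ scales. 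This yields, with probability at least $1-([\log_2 n]+2)\delta$, the Freedman conclusion at every level $\ell$ whenever the predictable variance happens to be at most $2^\ell$.

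\textbf{Closing via AM--GM.} On this intersection of good events, pick $\ell^\star$ with $V_n\in[2^{\ell^\star-1},2^{\ell^\star}]$ (handling the degenerate case $V_n\le 1$ by $\ell^\star=0$). Since the predictable variance of $Y_i$ is at most $V_n\le 2^{\ell^\star}$, the Freedman bound at level $\ell^\star$ applies, giving
$$
V_n - S_n \;=\; -\sum_{i=1}^n Y_i \;\le\; \sqrt{2\cdot 2^{\ell^\star}\log(1/\delta)} + \tfrac{2}{3}\log(1/\delta) \;\le\; \sqrt{4 V_n\log(1/\delta)} + \tfrac{2}{3}\log(1/\delta).
$$
Applying AM--GM, $\sqrt{4V_n\log(1/\delta)} \le \tfrac{1}{2}V_n + 2\log(1/\delta)$, then rearranging isolates $V_n$ on the left and leaves $S_n$ plus a $\log(1/\delta)$ term on the right. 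A final bookkeeping step (slightly loosening constants to absorb the $\log 4$ offset and to match the cleaner form of the lemma) converts the sharp bound into the stated $V_n \le 8 S_n + 4\log(4/\delta)$.

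\textbf{Main obstacle.} The delicate step is the peeling: Freedman's inequality requires a \emph{deterministic} variance cap, whereas the natural cap $V_n$ is random; the geometric net $\{2^\ell\}$ with union bound is exactly what inflates the failure probability by the factor $[\log_2 n]+2$. The remaining subtlety is constant tracking---the coarse constants in the stated lemma are deliberately loose, and one must verify that the AM--GM step indeed closes with coefficient $8$ on $S_n$ and $4\log(4/\delta)$ rather than the sharper constants one would naively obtain, which amounts to a mild pessimization of the AM--GM tradeoff and the splitting parameter.
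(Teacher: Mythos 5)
The paper does not actually prove this statement: it is quoted verbatim as an auxiliary result from \cite{zhang2021improved}, so there is no in-house argument to compare against. (Note in passing that the statement as transcribed has the event and its complement switched --- the way it is invoked in Eq.~\eqref{eq:regretzhang}, the intended claim is that $\sum_i\mathbb{E}[X_i^2\mid\mathcal{F}_{i-1}]\le 8\sum_i X_i^2+4\log(4/\delta)$ fails with probability at most $([\log_2 n]+2)\delta$, which is indeed what you set out to prove.) Your strategy --- center $X_i^2$, apply Freedman's inequality over a geometric grid of candidate variance caps $2^\ell$, union bound over the $[\log_2 n]+2$ scales, and close with AM--GM --- is the standard route to such self-normalized bounds, and it correctly accounts for the $[\log_2 n]+2$ factor.

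The one step that does not close as written is the final ``bookkeeping.'' Writing $L=\log(1/\delta)$, your display $V_n-S_n\le 2\sqrt{V_nL}+\tfrac{2}{3}L$ yields, after solving the quadratic in $\sqrt{V_n}$ (the sharpest possible extraction), $V_n\le 2S_n+\tfrac{16}{3}L$. This is \emph{not} a bound you may merely ``loosen'' into the target: the coefficient $\tfrac{16}{3}$ on $L$ exceeds $4$, so for $\delta<1/64$ your conclusion does not imply $V_n\le 8S_n+4\log(4/\delta)$, and no AM--GM split repairs this (you would need $(1/\eta+2/3)/(1-\eta)\le 4$ for some $\eta\in(0,1)$, i.e.\ $1/\eta+4\eta\le 10/3$, which is impossible since $1/\eta+4\eta\ge 4$). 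What you prove is a statement of the same form that would serve the paper's regret analysis equally well, but it is not the stated lemma. If you want the stated constants (in fact strictly better ones, with no $\log_2 n$ factor at all), drop the peeling and use a single Bernstein-type supermartingale: with your $Y_i$, the process $M_n=\exp\bigl(-\lambda\sum_{i\le n}Y_i-\tfrac{\lambda^2}{2(1-\lambda/3)}\sum_{i\le n}\mathbb{E}[Y_i^2\mid\mathcal{F}_{i-1}]\bigr)$ is a supermartingale for $\lambda\in(0,3)$; taking $\lambda=1$ and bounding $\sum_i\mathbb{E}[Y_i^2\mid\mathcal{F}_{i-1}]\le V_n$ gives $V_n\le 4S_n+4\log(1/\delta)$ with probability $1-\delta$. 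Peeling is only needed when the variance proxy cannot be folded into the exponent, which is not the case here.
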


\begin{lemma}[Lemma 11 in \cite{du2023provably}]
\label{lm:dulm11}

For any $(s, a) \in \mathcal{S} \times \mathcal{A}$, distribution $p(\cdot \mid s, a) \in \Delta_{\mathcal{S}}$, and functions $V, \widehat{V}:\mathcal{S} \to [0, H]$ such that $\widehat{V}(s')\geq V(s')$ for any $s' \in \mathcal{S}$.
$$ \operatorname{CVaR}^\alpha_{s' \sim p(\cdot \mid s, a)}(\widehat{V}(s')) - \operatorname{CVaR}^\alpha_{s' \sim p(\cdot \mid s, a)}(V(s')) \leq \beta^{\alpha, V}(\cdot \mid s, a)^\top (\widehat{V} - V) $$
\end{lemma}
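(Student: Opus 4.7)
The plan is to exploit the dual (coherent-risk-measure) representation of CVaR,
\begin{equation*}
\operatorname{CVaR}^\alpha_p(X) = \min_{q \in \mathcal{Q}_\alpha(p)} \sum_{s'} q(s') X(s'), \qquad \mathcal{Q}_\alpha(p) := \{q \in \Delta(\mathcal{S}) : q(s') \leq p(s')/\alpha \ \forall s'\},
\end{equation*}
together with the characterization of $\beta^{\alpha,V}$ as the minimizer for the value function $V$. Under this lens, the desired inequality collapses into the observation that $\beta^{\alpha,V}$ is a feasible---though not necessarily optimal---distorted distribution when used against $\widehat V$.

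Executing the plan, I would first verify that $\beta^{\alpha,V}(\cdot \mid s,a) \in \mathcal{Q}_\alpha(p(\cdot \mid s,a))$ and realizes the minimum when applied to $V$, so that by Eq.~\eqref{eq:betadisdef}, $\operatorname{CVaR}^\alpha_{s' \sim p}(V(s')) = \beta^{\alpha,V}(\cdot \mid s,a)^\top V$. Next, applying the dual representation to $\widehat V$,
\begin{equation*}
\operatorname{CVaR}^\alpha_{s' \sim p}(\widehat V(s')) = \min_{q \in \mathcal{Q}_\alpha(p)} q^\top \widehat V \;\leq\; \beta^{\alpha,V}(\cdot \mid s,a)^\top \widehat V,
\end{equation*}
where the inequality holds because $\beta^{\alpha,V}$ is a feasible point (tuned for $V$, not $\widehat V$). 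Subtracting the two displays yields
$\operatorname{CVaR}^\alpha_p(\widehat V) - \operatorname{CVaR}^\alpha_p(V) \leq \beta^{\alpha,V}(\cdot \mid s,a)^\top(\widehat V - V)$, which is exactly the claim.

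The main obstacle is justifying the dual representation in a self-contained way, since the paper only introduces CVaR through its primal Rockafellar--Uryasev form in Eq.~\eqref{eq:cvardef}. One route is LP duality: writing the primal $\max_{x,u}\{x - \tfrac{1}{\alpha}\sum_{s'} p(s') u(s')\}$ subject to $u(s') \geq x - X(s')$ and $u(s') \geq 0$, the Lagrangian dual becomes exactly $\min_q q^\top X$ over $q \in \mathcal{Q}_\alpha(p)$, and complementary slackness recovers the explicit $\beta^{\alpha,V}$ supported on the $\alpha$-tail of $V$. An alternative that avoids duality altogether uses the sup-form directly: take $\widehat x^* := \operatorname{VaR}^\alpha_p(\widehat V)$, lower-bound $\operatorname{CVaR}^\alpha_p(V) \geq \widehat x^* - \tfrac{1}{\alpha}\sum_{s'} p(s')(\widehat x^* - V(s'))^+$ via the supremum in Eq.~\eqref{eq:cvardef}, and perform a short case analysis on whether $V(s')$ and $\widehat V(s')$ sit above or below $\widehat x^*$ to bound $(\widehat x^* - V(s'))^+ - (\widehat x^* - \widehat V(s'))^+$ by $(\widehat V(s') - V(s')) \cdot \mathds{1}[V(s') \leq \widehat x^*]$, after which matching against the support structure of $\beta^{\alpha,V}$ recovers the claim. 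The hypothesis $\widehat V \geq V$ is essential for this second route (to sign the case analysis) but, interestingly, unnecessary for the dual-representation route.
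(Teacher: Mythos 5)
Your primary (dual--representation) route is correct, and in fact you need less than you fear: since the paper takes Eq.~\eqref{eq:betadisdef} as the \emph{defining} property of the distorted distribution, the identity $\operatorname{CVaR}^\alpha_p(V) = \beta^{\alpha,V}(\cdot\mid s,a)^\top V$ is given for free, and for $\widehat V$ you only need the weak-duality direction, i.e.\ $\operatorname{CVaR}^\alpha_p(\widehat V) \leq q^\top \widehat V$ for the single feasible point $q = \beta^{\alpha,V}$. That direction is a two-line computation from the primal sup-form: with $\widehat x^* := \operatorname{VaR}^\alpha_p(\widehat V)$, one has $q^\top \widehat V \geq \widehat x^* - \sum_{s'} q(s')(\widehat x^* - \widehat V(s'))^+ \geq \widehat x^* - \frac{1}{\alpha}\sum_{s'} p(s')(\widehat x^* - \widehat V(s'))^+ = \operatorname{CVaR}^\alpha_p(\widehat V)$, using $\sum_{s'} q(s') = 1$ and $q(s') \leq p(s')/\alpha$. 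So no appeal to strong LP duality or complementary slackness is required. Note also that the paper does not prove this lemma itself --- it is imported verbatim from the cited reference --- so there is no in-paper proof to match; your argument is the standard one, and your observation that the hypothesis $\widehat V \geq V$ is not needed on this route is accurate.

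One caution about your second, ``duality-free'' route: as sketched it does not close. Bounding $(\widehat x^* - V(s'))^+ - (\widehat x^* - \widehat V(s'))^+$ by $(\widehat V(s') - V(s'))\cdot\mathds{1}[V(s')\leq \widehat x^*]$ is fine, but the resulting weights $\frac{1}{\alpha}p(s')\mathds{1}[V(s')\leq \widehat x^*]$ \emph{dominate} $\beta^{\alpha,V}(s')$ rather than being dominated by it: since $\widehat V \geq V$ forces $\widehat x^* \geq \operatorname{VaR}^\alpha_p(V)$, the indicator set strictly contains the support of $\beta^{\alpha,V}$, and on that support $\beta^{\alpha,V}(s') \leq p(s')/\alpha$. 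So ``matching against the support structure'' gives an inequality in the wrong direction, and this route only yields a (generally weaker) upper bound on the left-hand side, not the claimed one. Stick with the first route, phrased as the weak-duality computation above.
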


\begin{lemma}[Lemma 9 in \cite{du2023provably}]
\label{lm:dulm9}
For any functions $V_1, \cdots, V_H \in \mathcal{S} \to \R$, $k > 0$, $h \in [H]$ and $(s, a) \in \mathcal{S} \times \mathcal{A}$ such that $w_{k,h}(s, a) > 0$.
\begin{equation}
    \frac{w_{k,h}^{\operatorname{CVaR}, \alpha, V}(s,a)}{w_{k,h}(s, a)} \leq \frac{1}{\alpha^{h-1}},
\end{equation}
where $w_{k,h}^{\operatorname{CVaR}, \alpha, V}(s,a)$ denotes the conditional probability of visiting $(s, a)$ at step $h$ of episode $k$, conditioning on transitioning to the worst $\alpha$-portion successor states $s'$ (i.e. with the lowest $\alpha$-portion values $V_{h'+1}(s')$ at each step $h' = 1, \cdots, h - 1$.
    
\end{lemma}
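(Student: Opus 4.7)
\medskip

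\textbf{Proof proposal for Lemma~\ref{lm:dulm9}.}

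The plan is a two-step argument: first establish a pointwise domination of the one-step CVaR-distorted transition by the true transition (inflated by $1/\alpha$), and then induct on $h$.

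\emph{Step 1: one-step pointwise domination.} I would first show that for every transition kernel $\Prob(\cdot\mid s,a)\in\Delta(\mathcal{S})$, every bounded $V:\mathcal{S}\to\R$, and every successor $s'\in\mathcal{S}$,
\[
\mathbb{Q}^{\alpha,V}_\Prob(s'\mid s,a)\;\le\;\frac{1}{\alpha}\,\Prob(s'\mid s,a).
\]
This is purely a fact about the discrete CVaR distribution. Sort the support $\{s'_1,s'_2,\dots\}$ so that $V(s'_1)\le V(s'_2)\le\cdots$, and let $j^\star$ be the smallest index with $\sum_{i\le j^\star}\Prob(s'_i\mid s,a)\ge\alpha$, i.e.\ $s'_{j^\star}$ realizes the VaR at level $\alpha$. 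By the definition of CVaR in Eq.~\eqref{eq:betadisdef} (and the identity Eq.~\eqref{eq:cvaronvar}), the distorted measure $\mathbb{Q}^{\alpha,V}_\Prob$ equals $\Prob(s'_i\mid s,a)/\alpha$ on $s'_i$ for $i<j^\star$, equals $\bigl(\alpha-\sum_{i<j^\star}\Prob(s'_i\mid s,a)\bigr)/\alpha$ on $s'_{j^\star}$, and equals $0$ elsewhere. In all three cases the value is $\le \Prob(s'\mid s,a)/\alpha$; for $i=j^\star$ this uses that the numerator is at most $\Prob(s'_{j^\star}\mid s,a)$ by minimality of $j^\star$.

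\emph{Step 2: induction on $h$.} Both $w_{k,h}$ and $w^{\operatorname{CVaR},\alpha,V}_{k,h}$ admit the same recursion, driven respectively by $\Prob_{h-1}$ and by $\mathbb{Q}^{\alpha,V_h}_{\Prob_{h-1}}$, with the same greedy policy selector $\mathds{1}\{a=\pi^k_h(s)\}$. At $h=1$ we have $w_{k,1}(s,a)=w^{\operatorname{CVaR},\alpha,V}_{k,1}(s,a)$ because the initial state distribution and the policy do not depend on the transitions, giving ratio $1=1/\alpha^0$. Assume the bound holds at step $h-1$. Then
\begin{align*}
w^{\operatorname{CVaR},\alpha,V}_{k,h}(s,a)
&=\mathds{1}\{a=\pi^k_h(s)\}\sum_{s',a'}w^{\operatorname{CVaR},\alpha,V}_{k,h-1}(s',a')\,\mathbb{Q}^{\alpha,V_h}_{\Prob_{h-1}}(s\mid s',a')\\
&\le\mathds{1}\{a=\pi^k_h(s)\}\sum_{s',a'}\frac{w_{k,h-1}(s',a')}{\alpha^{h-2}}\cdot\frac{\Prob_{h-1}(s\mid s',a')}{\alpha}\;=\;\frac{w_{k,h}(s,a)}{\alpha^{h-1}},
\end{align*}
where the inequality combines the inductive hypothesis with Step~1. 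Dividing both sides by $w_{k,h}(s,a)>0$ yields the claim.

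\emph{Expected main obstacle.} The only non-routine piece is Step~1: care is needed at the VaR index $j^\star$, where the CVaR mass is the ``fractional'' residual $\alpha-\sum_{i<j^\star}\Prob(s'_i\mid s,a)$ rather than $\Prob(s'_{j^\star}\mid s,a)/\alpha$. One must argue from the minimality of $j^\star$ that this residual is $\le\Prob(s'_{j^\star}\mid s,a)$, so that after dividing by $\alpha$ the $1/\alpha$-inflation bound still holds; handling ties in $V$ (several states with the same value) and atoms straddling the VaR threshold is the delicate bookkeeping. Step~2 is then a straightforward induction that mirrors the identical recursive structure of the two visitation measures.
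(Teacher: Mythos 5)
The paper does not prove this lemma itself — it is imported verbatim as Lemma 9 of \cite{du2023provably} — so there is no in-paper proof to compare against; your argument is correct and is essentially the standard one underlying the cited result. The key pointwise bound $\mathbb{Q}^{\alpha,V}_{\Prob}(s'\mid s,a)\le \frac{1}{\alpha}\Prob(s'\mid s,a)$ is handled properly (the residual mass at the VaR atom is indeed at most $\Prob(s'_{j^\star}\mid s,a)$ by minimality of $j^\star$, since $\sum_{i\le j^\star}\Prob(s'_i\mid s,a)\ge\alpha$), and the induction then correctly accrues one factor of $1/\alpha$ per step, starting from ratio $1$ at $h=1$.
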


\begin{lemma}[Theorem 6 in \cite{ayoub2020model}]
\label{lm:ayoubthm6}
    Let $(X_p,Y_p)_{p=1,2,\cdots}$ be a sequence of random elements, $X_p \in \mathcal{X}$ for some measurable set $\mathcal{X}$ and $Y_p \in \R$. Let $\mathcal{F}$ be a set of real-valued measurable function with domain $\mathcal{X}$. Let $\mathbb{F} = (\mathbb{F}_p)_{p=0,1,\cdots}$ be a filtration such that for all $p \geq 1$, $(X_1, Y_1, \cdots, X_{p-1},Y_{p-1},X_p)$ is $\mathbb{F}_{p-1}$ measurable and such that there exists some function $f_* \in \mathcal{F}$ such that $\mathbb{E}[Y_p\mid\mathbb{F}_{p-1}] = f_*(X_p)$ holds for all $p \geq 1$. Let $\widehat{f}_t = \arg\min_{f\in\mathcal{F}}\sum_{p=1}^t (f(X_p)-Y_p)^2$. Let $N_\alpha$ be the $\|\cdot\|_\infty$-covering number of $\mathcal{F}$ at scale $\alpha$. For $\beta > 0$, define $\mathcal{F}_t(\beta) = \{f\in\mathcal{F} : \sum_{p=1}^t(f(X_p)-\widehat{f}_t(X_p))^2 \leq \beta \}$.

    If the functions in $\mathcal{F}$ are bounded by the positive constant $C > 0$. Assume that for each $s \geq 1$, $(Y_p - f_*(X_p))_p$ is conditionally $\sigma$-sub-gaussian given $\mathbb{F}_{p-1}$. Then for any $\alpha > 0$, with probability $1 - \delta$, for all $t \geq 1$, $f_* \in \mathcal{F}_t(\beta_t(\delta, \alpha))$, where
    \begin{equation}\label{eq:betatdef}
        \beta_t(\delta, \alpha) = 8\sigma^2\log(2N_\alpha/\delta) + 4t\alpha(C+\sqrt{\sigma^2 \log(4t(t+1)/\delta)}).
    \end{equation}
\end{lemma}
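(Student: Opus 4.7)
The plan is to follow the upper-confidence value-iteration template used for the linear case, but with the linear self-normalized concentration replaced by a nonlinear least-squares concentration over the function class, and with the per-step bonus $\|\psi_{k,h}\|_{\widehat{\Lambda}^{-1}}$ replaced by the width $g_{k,h}$ of the confidence set $\widehat{\mathcal{P}}_{k,h}$. Concretely, I will first establish that with probability at least $1-\delta$, the true kernel $\Prob_h$ lies in $\widehat{\mathcal{P}}_{k,h}$ for all $(k,h)$. For this, I fix $h$, set $X_{i,h}=(s_{i,h},a_{i,h},(x_{i,h}-\widehat{V}_{i,h+1})^+)$ and $Y_{i,h}=(x_{i,h}-\widehat{V}_{i,h+1})^+(s_{i,h+1})$, and observe that $Y_{i,h}-z_{\Prob_h}(X_{i,h})$ is a zero-mean $H$-sub-Gaussian martingale difference adapted to $\mathcal{F}_{i,h+1}$, while $X_{i,h}$ is $\mathcal{F}_{i,h}$-measurable. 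Applying Lemma~\ref{lm:ayoubthm6} to the class $\mathcal{Z}$ yields $z_{\Prob_h}\in\mathcal{Z}_{k,h}(\gamma_k)$ for a suitable $\gamma_k$; the inequality $\|z_\Prob-z_{\Prob'}\|_\infty\le H\|\Prob-\Prob'\|_{\infty,1}$ lets me replace the covering number of $\mathcal{Z}$ by that of $\mathcal{P}$, and the choice of $\widehat{\gamma}$ in Eq.~\eqref{eq:hatgammadef} dominates $\gamma_k$; a union bound over $h\in[H]$ closes this step.

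Given concentration, optimism $\widehat{V}_{k,h}\ge V^*_h$ follows by backward induction using that $\widehat{Q}_{k,h}$ is the supremum of $r_h+\C^\alpha_{\Prob'}\widehat{V}_{k,h+1}$ over $\widehat{\mathcal{P}}_{k,h}$ and that $\Prob_h\in\widehat{\mathcal{P}}_{k,h}$. The one-step deviation $\sup_{\Prob'\in\widehat{\mathcal{P}}_{k,h}}[\C^\alpha_{\Prob'}\widehat{V}_{k,h+1}](s,a)-[\C^\alpha_{\Prob_h}\widehat{V}_{k,h+1}](s,a)$ is bounded by $g_{k,h}(s,a)/\alpha$ by pulling the supremum over $x\in[0,H]$ outside, substituting the maximizer $x_{k,h}(s,a)$ from Eq.~\eqref{eq:xdef}, and invoking the definition of $g_{k,h}$ as the diameter of $\widehat{\mathcal{P}}_{k,h}$ at that $x$.

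The main obstacle, and the reason we get a $\sqrt{K\log K}$ rather than $K^{3/4}$ regret, is the refined elliptical potential bound on $\sum_{k,h}g_{k,h}^2(s_{k,h},a_{k,h})$. The plan is to adapt the eluder-dimension potential argument of Russo and Van Roy, but applied to the squared widths. For fixed $h$, I sort the widths $G_{k,h}:=g_{k,h}(s_{k,h},a_{k,h})$ in decreasing order, split the sum at the threshold $1/\sqrt{K}$ (the tail contributes at most $1$), and use the fact that if the $t$-th largest width is at least $s\ge 1/\sqrt{K}$ then the eluder-dimension characterization gives $t\le \dim_E(\mathcal{Z},s)(4\widehat{\gamma}/s^2+1)\le d_E(4\widehat{\gamma}/s^2+1)$, so $G_{j_t,h}^2\le 4\widehat{\gamma}d_E/(t-d_E)$. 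The first $d_E$ terms are bounded crudely by $d_EH^2$, and the remainder telescopes into a harmonic sum of order $\log K$. Summing over $h$ yields $\sum_{k,h}g_{k,h}^2(s_{k,h},a_{k,h})\le H+d_EH^3+4\widehat{\gamma}d_EH(\log K+1)$, a strictly sharper bound than the $O(\sqrt{K})$ on $\sum_{k,h}g_{k,h}$ used in prior work.

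To assemble the final regret bound, I recursively expand $\widehat{V}_{k,1}(s_{k,1})-V_1^{\pi^k}(s_{k,1})$ using optimism and the deviation bound, invoking Lemma~\ref{lm:dulm11} to push the error into an $\alpha$-distorted visitation expectation $\sum_h\sum_{s,a}w^{\operatorname{CVaR},\alpha,V^{\pi^k}}_{k,h}(s,a)g_{k,h}(s,a)/\alpha$. A Cauchy--Schwarz step (using that $\sum_{s,a}w^{\operatorname{CVaR},\alpha,V^{\pi^k}}_{k,h}=1$) brings out a factor $\sqrt{KH}$, the ratio bound $w^{\operatorname{CVaR},\alpha,V}_{k,h}/w_{k,h}\le\alpha^{-(H-1)}$ from Lemma~\ref{lm:dulm9} converts the CVaR-visitation expectation back to the true visitation distribution $d^{\pi^k}_{s_{k,1}}$, and Lemma~\ref{lm:zhanglm9} replaces the expected squared widths along the played trajectories by $8\sum_{k,h}g_{k,h}^2(s_{k,h},a_{k,h})+4H^3\log((4\log_2K+8)/\delta)$. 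Plugging the refined elliptical potential bound in and collecting the factor $\alpha^{-(H+1)}$ from the CVaR reweighting produces the stated inequality in Eq.~\eqref{eq:gregret}; the $2\delta$ failure probability arises from a union bound over the concentration and Freedman-type events.
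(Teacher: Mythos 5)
Your proposal does not address the statement you were asked to prove. The statement is Lemma~\ref{lm:ayoubthm6}, the nonlinear least-squares confidence-set concentration result (Theorem 6 of \cite{ayoub2020model}): it asserts that the true regression function $f_*$ lies in the data-dependent set $\mathcal{F}_t(\beta_t(\delta,\alpha))$ centered at the empirical minimizer $\widehat{f}_t$, simultaneously for all $t$, with probability $1-\delta$. What you have written instead is a proof sketch of Theorem~\ref{thm:generalregret}, the regret bound for ICVaR-G. Indeed, in your first paragraph you explicitly \emph{invoke} Lemma~\ref{lm:ayoubthm6} ("Applying Lemma~\ref{lm:ayoubthm6} to the class $\mathcal{Z}$ yields $z_{\Prob_h}\in\mathcal{Z}_{k,h}(\gamma_k)$"), which is circular if the lemma itself is the target. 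For what it is worth, the paper does not prove this lemma either; it imports it verbatim from \cite{ayoub2020model}, so there is no internal proof to compare against.

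A genuine proof of the stated lemma would look entirely different from your sketch. The standard route is: (i) use the optimality of $\widehat{f}_t$ to show that $\sum_{p\le t}(f_*(X_p)-\widehat{f}_t(X_p))^2$ is controlled by the martingale term $\sum_{p\le t}(f_*(X_p)-\widehat{f}_t(X_p))(Y_p-f_*(X_p))$; (ii) establish a time-uniform self-normalized sub-Gaussian bound on $\sum_p (f(X_p)-f_*(X_p))\eta_p$ for each $f$ in an $\alpha$-cover of $\mathcal{F}$, via a Freedman or method-of-mixtures argument together with a union bound over the $N_\alpha$ cover elements and a peeling over $t$ (this is where the $8\sigma^2\log(2N_\alpha/\delta)$ term comes from); and (iii) account for the discretization error of replacing $\widehat{f}_t$ by its nearest cover element, which contributes the $4t\alpha(C+\sqrt{\sigma^2\log(4t(t+1)/\delta)})$ term. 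None of these steps appear in your proposal, so as a proof of the stated lemma it has a complete gap. (As a sketch of the proof of Theorem~\ref{thm:generalregret}, your argument does track the paper's Appendix~\ref{app:gthm3proof} closely, but that is not the task here.)
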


\begin{lemma}[Proposition 8 in \cite{russo2014learning}]\label{lm:russoprop8}Consider the function class $\mathcal{Z}, \mathcal{P},$ and $\widehat{\mathcal{P}}_{k,h}$ defined in Section~\ref{sec:hf}.
For fixed $h \in [H]$, let $\omega_k(X_k) := \sup_{\Prob \in \widehat{\mathcal{P}}_{k,h}} z_\Prob(X_k) - \inf_{\Prob \in \widehat{\mathcal{P}}_{k,h}} z_\Prob(X_k)$, then
\begin{equation}
    \sum_{k=1}^K \mathds{1}(\omega_k(A_k)\geq\epsilon) \leq \left(\frac{4\widehat{\gamma}}{\epsilon^2}+1\right)\dim_E(\mathcal{Z}, \|\cdot\|_\infty, \epsilon),
\end{equation}
for all $k \in [K]$ and $\epsilon > 0$.
\end{lemma}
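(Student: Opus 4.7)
The plan is to reproduce the classical combinatorial argument of Russo and Van Roy, which pairs a ``dependence budget'' lemma with a pigeonhole counting argument driven by the definition of eluder dimension. Throughout I fix $h\in[H]$, abbreviate $A_k:=X_{k,h}\in\mathcal{X}$, and write $\omega_k(A_k)=\sup_{\Prob\in\widehat{\mathcal{P}}_{k,h}}z_\Prob(A_k)-\inf_{\Prob\in\widehat{\mathcal{P}}_{k,h}}z_\Prob(A_k)$.

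First I would establish the following dependence-budget claim: if $\omega_k(A_k)\geq \epsilon$, then $A_k$ is $\epsilon$-dependent on at most $\lfloor 4\widehat{\gamma}^2/\epsilon^2\rfloor$ pairwise disjoint subsequences of $(A_1,\dots,A_{k-1})$. To prove it, pick $\Prob,\Prob'\in\widehat{\mathcal{P}}_{k,h}$ witnessing $z_\Prob(A_k)-z_{\Prob'}(A_k)\geq \epsilon$. If $A_k$ were $\epsilon$-dependent on disjoint subsequences $S_1,\dots,S_B$, the definition of $\epsilon$-dependence (its contrapositive, applied to the pair $z_\Prob,z_{\Prob'}\in\mathcal{Z}$) forces $\sum_{j\in S_i}(z_\Prob(A_j)-z_{\Prob'}(A_j))^2 > \epsilon^2$ for each $i$, hence by disjointness $\sum_{j=1}^{k-1}(z_\Prob(A_j)-z_{\Prob'}(A_j))^2>B\epsilon^2$. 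On the other hand, since both $\Prob,\Prob'\in\widehat{\mathcal{P}}_{k,h}$, the definition of the confidence set together with the triangle inequality $(a+b)^2\leq 2a^2+2b^2$ yields $\sum_{j=1}^{k-1}(z_\Prob(A_j)-z_{\Prob'}(A_j))^2 \leq 4\widehat{\gamma}^2$. Combining the two bounds forces $B<4\widehat{\gamma}^2/\epsilon^2$.

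Second, set $d:=\dim_E(\mathcal{Z},\epsilon)$ and $B:=\lceil 4\widehat{\gamma}^2/\epsilon^2\rceil$, and let $T_1<T_2<\cdots<T_L$ enumerate the indices $k\in[K]$ with $\omega_k(A_k)\geq\epsilon$. I would then run a standard online bucketing procedure: maintain $B$ initially empty subsequences $\mathcal{B}_1,\dots,\mathcal{B}_B\subset\mathcal{X}$; upon processing $T_\ell$, append $A_{T_\ell}$ to some $\mathcal{B}_i$ relative to which it is currently $\epsilon$-independent. The dependence-budget lemma guarantees at least one such $i$ exists, since otherwise $A_{T_\ell}$ would be $\epsilon$-dependent on all $B$ disjoint subsequences, contradicting the bound $B\geq\lceil 4\widehat{\gamma}^2/\epsilon^2\rceil$. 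By construction every $\mathcal{B}_i$ is, at every stage, an $\epsilon$-independent sequence in $\mathcal{X}$, so $|\mathcal{B}_i|\leq d$ by the very definition of $\dim_E(\mathcal{Z},\epsilon)$. Summing over buckets yields $L\leq B\cdot d\leq (4\widehat{\gamma}^2/\epsilon^2+1)\dim_E(\mathcal{Z},\epsilon)$, which is the desired inequality.

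The main obstacle is the dependence-budget step, which is the only place the confidence-set radius $\widehat{\gamma}$ enters; it hinges on carefully using the definition of $\epsilon$-dependence ``in reverse'' to convert the single pointwise gap $z_\Prob(A_k)-z_{\Prob'}(A_k)\geq\epsilon$ into an aggregate $\ell_2$ lower bound on the historical samples, and pairing this with the quadratic radius of $\widehat{\mathcal{P}}_{k,h}$ via a triangle inequality. The subsequent eluder-dimension/pigeonhole step is then essentially bookkeeping. I also note that the derived bound reads $4\widehat{\gamma}^2/\epsilon^2$ rather than $4\widehat{\gamma}/\epsilon^2$ as stated; this discrepancy is a typographical issue stemming from whether the confidence radius is named $\widehat{\gamma}$ or $\widehat{\gamma}^2$, and the proof above is consistent with the quadratic radius used in Algorithm~\ref{alg:ICVaR-G}.
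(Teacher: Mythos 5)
Your argument is correct and is essentially the proof the paper defers to: the paper does not prove this lemma itself but cites it as Proposition 8 of Russo and Van Roy, whose proof is exactly your two-step scheme (a dependence budget from the contrapositive of $\varepsilon$-dependence plus the triangle inequality $(a+b)^2\leq 2a^2+2b^2$ on the confidence ball, followed by online bucketing and the eluder-dimension cap on each bucket). Your observation about $4\widehat{\gamma}^2/\epsilon^2$ versus $4\widehat{\gamma}/\epsilon^2$ is a genuine normalization inconsistency in the paper itself (Line~\ref{algline:Pkh} uses radius $\widehat{\gamma}^2$ while Lemma~\ref{lm:gconcentration}, via Lemma~\ref{lm:ayoubthm6}, bounds the sum of squares by $\gamma_k\leq\widehat{\gamma}$ un-squared, and the downstream Lemma~\ref{lm:gelliptical} uses $4\widehat{\gamma}/s^2$), so your proof matches the statement once the confidence set is read as $\sum_{i}\operatorname{Dist}_{i,h}\leq\widehat{\gamma}$; the only other (cosmetic) point is that the witnesses $\Prob,\Prob'$ realizing the gap should be taken with $z_\Prob(A_k)-z_{\Prob'}(A_k)>\epsilon$ strictly (or via a limiting argument) for the contrapositive of $\epsilon$-dependence to apply.
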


\end{document}